\g@addto@macro\bfseries{\boldmath}
\newenvironment{highlight}{\begin{quote}\itshape}{\end{quote}}
\newenvironment{rmklist}{\begin{enumerate}[label={(\arabic*)},itemindent=2em,leftmargin=0em]}{\end{enumerate}}
\newenvironment{thmproperties}{\begin{enumerate}[label={(\roman*)}]}{\end{enumerate}}
\newenvironment{deflist}{\begin{enumerate}[label={(\alph*)}]}{\end{enumerate}}
\newenvironment{romanlist}{\begin{enumerate}[label={(\roman*)}]}{\end{enumerate}}
\newtheorem{problem}{Problem}
\newcommand{\qedsymbol}{$\blacksquare$}
\providecommand{\qed}{\hfill\qedsymbol}
\newcommand{\ifargdef}[3][{}]{\ifthenelse{\equal{#2}{}}{#1}{#3}}
\newcommand{\opleft}[1]{\mathopen{}\left#1}
\newcommand{\opright}[1]{\right#1\mathclose{}}
\newcommandx{\braces}[4]{%
\ifstrequal{#3}{normal}{#1#4#2}{%
\ifstrequal{#3}{auto}{\left#1#4\right#2}{%
\ifstrequal{#3}{opauto}{\opleft#1#4\opright#2}{%
#3#1#4#3#2}}}%
}
\newcommand{\N}{\mathbb{N}} 
\newcommand{\R}{\mathbb{R}} 
\renewcommand{\iff}{\Leftrightarrow} 
\newcommand{\suchthat}{\mid} 
\newcommand{\cardinality}[1]{\abs{#1}} 
\newcommand{\union}{\cup} 
\newcommand{\bigunion}{\bigcup} 
\newcommand{\intersec}{\cap} 
\newcommand{\bigintersec}{\bigcap} 
\newcommand{\distsq}[2]{\operatorname{dist}^2(#1, #2)}
\newcommand{\diam}{\operatorname{diam}}
\newcommand{\radius}{\operatorname{rad}}
\newcommandx{\intvcl}[3][1=normal]{\braces{[}{]}{#1}{#2, #3}} 
\newcommandx{\intvop}[3][1=normal]{\braces{(}{)}{#1}{#2, #3}} 
\newcommandx{\intvclop}[3][1=normal]{\braces{[}{)}{#1}{#2, #3}} 
\newcommandx{\intvopcl}[3][1=normal]{\braces{(}{]}{#1}{#2, #3}} 
\newcommandx{\abs}[2][1=normal]{\braces{\lvert}{\rvert}{#1}{#2}} 
\newcommandx{\ceil}[2][1=normal]{\braces{\lceil}{\rceil}{#1}{#2}} 
\newcommandx{\floor}[2][1=normal]{\braces{\lfloor}{\rfloor}{#1}{#2}} 
\newcommandx{\round}[2][1=normal]{\braces{[}{]}{#1}{#2}} 
\newcommandx{\der}[1]{D^{#1}} 
\newcommandx{\gradient}{\nabla} 
\newcommandx{\partder}[4][1={},4={}]{\frac{\partial^{#4} #2}{\partial #3^{#4}}\ifargdef{#1}{\Big|_{#1}}} 
\newcommandx{\integ}[4][1={},2={}]{\int_{#1}^{#2} #3 \, #4} 
\newcommandx{\asympffaster}[2][1=normal]{o\braces{(}{)}{#1}{#2}} 
\newcommandx{\asympfaster}[2][1=normal]{O\braces{(}{)}{#1}{#2}} 
\newcommandx{\asympeq}[2][1=normal]{\Theta\braces{(}{)}{#1}{#2}} 
\newcommandx{\asympsslower}[2][1=normal]{\omega\braces{(}{)}{#1}{#2}} 
\newcommandx{\asympslower}[2][1=normal]{\Omega\braces{(}{)}{#1}{#2}} 
\newcommandx{\norm}[2][1=normal]{\braces{\|}{\|}{#1}{#2}} 
\renewcommandx{\sp}[3][1=normal]{\braces{\langle}{\rangle}{#1}{#2, #3}} 
\newcommandx{\End}[2][2={}]{\mathcal{L}\opleft( #1 \ifargdef{#2}{, #2} \opright)} 
\newcommand{\T}{\mathsf{T}} 
\renewcommand{\vec}[1]{\boldsymbol{#1}} 
\newcommandx{\opnorm}[2][1=normal]{\norm[#1]{#2}_{\operatorname{op}}} 
\newcommandx{\ball}[2][1={},2={}]{\mathbb{B}_{#1}^{#2}} 
\renewcommand{\S}{\mathbb{S}} 
\newcommand{\I}[1]{\vec{I}_{#1}}
\newcommand{\vnull}{\vec{0}}
\newcommandx{\measure}[2][1=normal]{\operatorname{vol}\braces{(}{)}{#1}{#2}} 
\newcommandx{\Leb}[3][1={},3=normal]{L^{#2}\ifargdef{#1}{\braces{(}{)}{#3}{#1}}{}} 
\newcommandx{\Lebnorm}[4][1=normal,3={2},4={}]{\norm[#1]{#2}_{#3}} 
\renewcommandx{\l}[3][1={},3=normal]{\ell^{#2}\ifargdef{#1}{\braces{(}{)}{#3}{#1}}} 
\newcommandx{\lnorm}[4][1=normal,3={2},4={}]{\norm[#1]{#2}_{#3}} 
\newcommandx{\Smooth}[4][1={},3={},4=normal]{C_{#3}^{#2}\ifargdef{#1}{\braces{(}{)}{#4}{#1}}} 
\newcommandx{\Schwartz}[2][1={},2=normal]{\mathscr{S}\ifargdef{#1}{\braces{(}{)}{#2}{#1}}} 
\newcommandx{\Schwartzpoly}[2][1=normal]{\braces{\langle}{\rangle}{#1}{\abs[#1]{#2}} } 
\newcommandx{\Tempdistr}[2][1={},2=normal]{\mathscr{S}'\ifargdef{#1}{\braces{(}{)}{#2}{#1}}} 
\newcommandx{\distrinp}[3][1=normal]{\braces{\langle}{\rangle}{#1}{#2, #3}} 
\newcommandx{\ft}[3][1=default,2=auto]{
\ifstrequal{#1}{default}{\widehat{#3}}{
\ifstrequal{#1}{long}{{\braces{(}{)}{#2}{#3}}^{\wedge}}{}}} 
\newcommandx{\ift}[3][1=default,2=auto]{
\ifstrequal{#1}{default}{\check{#3}}{
\ifstrequal{#1}{long}{{\braces{(}{)}{#2}{#3}}^{\vee}}{}}} 
\newcommand{\convhull}[1]{\operatorname{conv}(#1)} 
\newcommand{\cone}[1]{\operatorname{cone}(#1)} 
\newcommand{\meanwidth}[2][{}]{w_{#1}(#2)} 
\newcommand{\effdim}[2][{}]{w_{#1}^2(#2)} 
\newcommand{\covnumber}[2]{\mathcal{N}(#1, #2)} 
\newcommandx{\prob}[2][1={},2=normal]{\mathbb{P}\ifargdef{#1}{\braces{(}{)}{#2}{#1}}}
\newcommandx{\mean}[2][1={},2=normal]{\mathbb{E}\ifargdef{#1}{\braces{[}{]}{#2}{#1}}}
\newcommandx{\var}[2][1={},2=normal]{\mathbb{V}\ifargdef{#1}{\braces{[}{]}{#2}{#1}}}
\newcommand{\distributed}{\sim}
\newcommand{\probind}[1]{\mathds{1}_{#1}} 
\newcommandx{\Unif}[2][1=normal]{\mathsf{U}\braces{(}{)}{#1}{#2}} 
\newcommandx{\Normdistr}[3][1=normal]{\mathsf{N}\braces{(}{)}{#1}{#2, #3}} 
\newcommandx{\normsubg}[2][1=normal]{\norm[#1]{#2}_{\psi_2}} 
\newcommand{\layersec}[1]{\hat{#1}} 
\newcommand{\relu}{\operatorname{ReLU}}
\newcommand{\NNlayer}[1][]{\Phi^{#1}} 
\newcommand{\NNlayersec}[1][]{\layersec{\Phi}^{#1}}
\newcommand{\NN}{F} 
\newcommand{\w}{\vec{w}} 
\newcommand{\W}{\vec{W}} 
\newcommand{\bias}{b} 
\newcommand{\Bias}{\vec{b}} 
\newcommand{\biasp}{\lambda} 
\newcommand{\x}{\vec{x}} 
\newcommand{\set}{\mathcal{X}} 
\newcommand{\setcen}{\mathcal{C}} 
\newcommand{\Rad}{R} 
\newcommand{\rad}{r} 
\newcommand{\cent}{\vec{c}} 
\newcommand{\mwloc}{w}
\newcommand{\mdist}{\delta} 
\newcommand{\marg}{\mu} 
\newcommand{\narrow}{\varepsilon} 
\newcommand{\narrowmdist}{\gamma}
\newcommand{\hypp}[2][]{H[#2\ifargdef{#1}{, #1}]} 
\newcommand{\hyppsep}{t} 
\newcommand{\sepdir}{\vec{u}} 
\newcommand{\sepdircoor}{u} 
\newcommand{\sepbias}{\tau} 
\newcommand{\vgen}{\vec{z}} 
\newcommand{\setind}{I} 
\newcommand{\probsuccess}{\eta} 
\newcommand{\probsuccessexp}{u} 
\newcommand{\event}{\mathsf{A}} 
\newcommand{\eventalt}{\mathsf{B}} 
\newcommand{\g}{\vec{g}} 
\providecommand{\G}{} 
\renewcommand{\G}{\vec{G}}
\newcommand{\normgauss}{\vec{v}}
\DeclareMathOperator{\distance}{\operatorname{dist}}
\newcommand{\dimredprecision}{\kappa}
\newcommand{\dimredmatrix}{\vec{A}}
\begin{document}
	
	\title{The Separation Capacity of Random Neural Networks}
	
	\author{\name Sjoerd Dirksen \email s.dirksen@uu.nl \\
		\addr Mathematical Institute\\
		Utrecht University\\
		3584 CD Utrecht, Netherlands
		\AND
		\name Martin Genzel \email martingenzel@gmail.com \\
		\addr Mathematical Institute\\
		Utrecht University\\
		3584 CD Utrecht, Netherlands
		\AND
		\name Laurent Jacques \email laurent.jacques@uclouvain.be \\
		\addr ISPGroup, INMA, ICTEAM Institute\\
		Université Catholique de Louvain\\
		1348 Louvain-la-Neuve, Belgium
		\AND
		\name Alexander Stollenwerk \email alexander.stollenwerk@uclouvain.be \\
		\addr ISPGroup, INMA, ICTEAM Institute\\
		Université Catholique de Louvain\\
		1348 Louvain-la-Neuve, Belgium}
	
	\editor{Joan Bruna}
	
	\maketitle
	
	\begin{abstract}
		Neural networks with random weights appear in a variety of machine learning applications, most prominently as the initialization of many deep learning algorithms and as a computationally cheap alternative to fully learned neural networks. In the present article, we enhance the theoretical understanding of random neural networks by addressing the following data separation problem: under what conditions can a random neural network make two classes $\set^-, \set^+ \subset \R^d$ (with positive distance) linearly separable?
		We show that a sufficiently large two-layer ReLU-network with standard Gaussian weights and uniformly distributed biases can solve this problem with high probability.
		Crucially, the number of required neurons is explicitly linked to geometric properties of the underlying sets $\set^-, \set^+$ and their mutual arrangement.
		This instance-specific viewpoint allows us to overcome the usual curse of dimensionality (exponential width of the layers) in non-pathological situations where the data carries low-complexity structure. 
		We quantify the relevant structure of the data in terms of a novel notion of mutual complexity (based on a localized version of Gaussian mean width), which leads to sound and informative separation guarantees.
		We connect our result with related lines of work on approximation, memorization, and generalization.
	\end{abstract}
	
	\begin{keywords}
		Random neural networks, classification, hyperplane separation, high-dimensional geometry, Gaussian mean width
	\end{keywords}

\section{Introduction}
\label{sec:intro}

\begin{figure}
	\centering
	\includegraphics[width=0.6\linewidth]{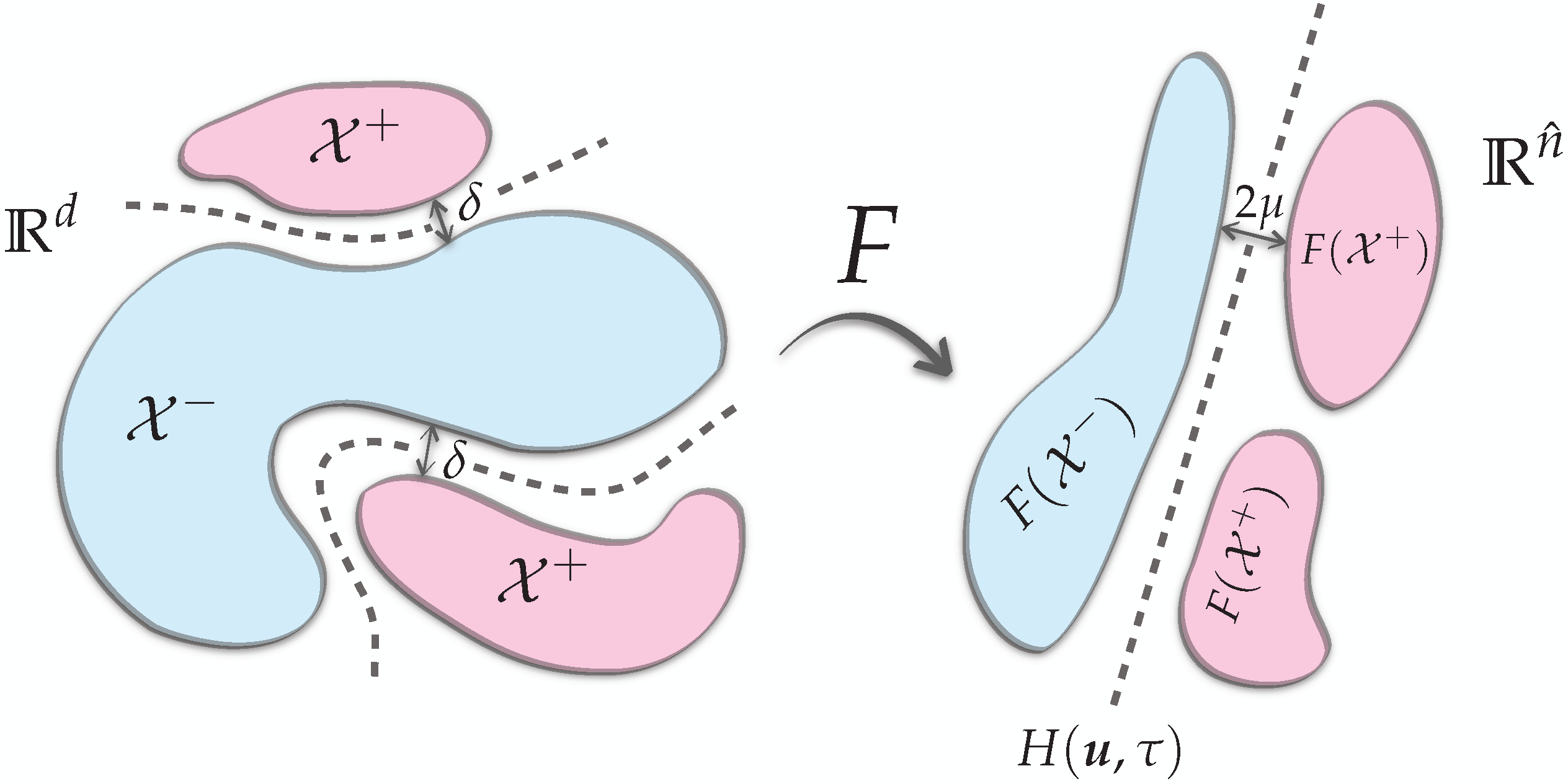}
	\caption{\textbf{Illustration of Problem~\ref{prob:rnn-sep}.} Can a random NN $\NN \colon \R^d\to \R^{\layersec{n}}$ \enquote{disentangle} the two sets $\set^-, \set^+\subset \R^d$ such that they become linearly separable in the feature space $\R^{\layersec{n}}$ with a positive margin~$\marg$? Except for being $\mdist$-separated and bounded, $\set^-$ and $\set^+$ may have an arbitrary decision boundary, possibly with multiple connected components. }
	\label{fig:intro:problem}
\end{figure}

Despite the unprecedented success of neural networks (NNs) in countless applications \citep{lbh15,sch15,gbc16}, a rigorous understanding of their operating principles is still in its infancy. The present work is devoted to a mathematical study of \emph{random NNs}, i.e., feedforward NNs whose weight parameters are drawn from a generic probability distribution. Random NNs play an important role in machine learning in at least three different ways. First, it is standard to initialize the training of a (deep) NN by random weights and it is well known that this initialization is a key contributor to the exceptional performance of NNs \citep{hzrs15,gbc16,ab19}. Second, it has been empirically observed that architecture search can be effectively carried out with random NNs, in the sense that the hierarchy in performance of fully trained architectures closely matches the hierarchy of the architectures with random weights \citep{sax+11}. Finally, random NNs have been extensively investigated as a cheap computational alternative to fully trained NNs: it has been demonstrated empirically that pre-processing with a random NN and applying a simple classification method already gives surprisingly good results \citep{hzs06,rr08,zbhrv16}. For these reasons, it is of substantial interest to gain a deeper theoretical understanding of the properties of random NNs.

In this work, we shed new light on the capabilities of random NNs as a pre-processor by addressing the following fundamental problem on class separability:
\begin{problem}\label{prob:rnn-sep}
	Consider two bounded, possibly infinite sets $\set^-, \set^+\subset \R^d$ that are \emph{$\mdist$-separated}, i.e.,
	\begin{equation}\label{eq:rnn-sep:delta-sep}
		\lnorm{\x^+ - \x^-} \geq \mdist \qquad \text{for all $\x^+\in \set^+$ and $\x^-\in \set^-$.}
	\end{equation}
	Let $\NN \colon \R^d\to \R^{\layersec{n}}$ represent a (multi-layer) feedforward NN with \emph{random weights}, where the architecture of~$\NN$ may depend on $\set^-$ and $\set^+$.
	\begin{highlight}
		Under what conditions does $\NN$ make the classes $\set^-$ and $\set^+$ \emph{linearly} separable with high probability?
		Is there a lower bound for the induced separation margin?
	\end{highlight}
\end{problem}
Formally, we will identify conditions that ensure the existence (with high probability) of a hyperplane $\hypp[\sepbias]{\sepdir} \coloneqq \{ \vgen \in \R^{\layersec{n}} \suchthat \sp{\sepdir}{\vgen} + \sepbias = 0 \}$ with $\lnorm{\sepdir} = 1$ and $\sepbias \in \R$ that separates $\NN(\set^-)$ and $\NN(\set^+)$ with a certain \emph{margin} $\marg > 0$, i.e.,
\begin{align}
	\sp{\sepdir}{\NN(\x^-)}+\sepbias &\leq - \marg \qquad \text{for all $\x^-\in \set^-$}, \\
	\sp{\sepdir}{\NN(\x^+)}+\sepbias & \geq +\marg \qquad \text{for all $\x^+\in \set^+$.}\label{eq:rnn-sep:hyper-sep}
\end{align}
Problem~\ref{prob:rnn-sep} thus states a purely geometric question on the \emph{separation capacity} of random NNs, see Figure~\ref{fig:intro:problem} for an illustration.
However, it is useful to bear in mind that the ability to render two \enquote{intertwined} sets linearly separable also has immediate consequences for associated learning tasks.
To see this, let $(\x, y)$ be drawn from an arbitrary data distribution on $\R^d \times \{\pm1\}$ satisfying
\begin{equation}
	\prob(\x \in \set^+ \suchthat y= +1) = 1 = \prob(\x \in \set^- \suchthat y=-1),
\end{equation}
i.e., the binary label $Y$ is consistent with the classes $\set^-$ and $\set^+$.
Conditioned on the high-probability event of Problem~\ref{prob:rnn-sep}, the transformed pair $(\NN(\x), y)$ then fulfills a \emph{hard-margin condition}:
\begin{equation}\label{eq:intro:hard-margin-cond}
	\prob\big(y \cdot (\sp{\sepdir}{\NN(\x)}+\sepbias) \geq \marg \big) = 1,
\end{equation}
where $\hypp[\sepbias]{\sepdir}$ denotes the separating hyperplane in \eqref{eq:rnn-sep:hyper-sep}.
Given i.i.d.~training samples of $(\x, y)$, this enables us to learn the unknown output parameters $(\sepdir, \sepbias)$ by standard classification methods, such as support vector machines (SVMs) \citep{sc08}.
In particular, one can achieve provable control over the generalization error in terms of the margin size~$\marg$, e.g., see~\citet[Thm.~15.4]{sb14}.
Of similar relevance is the width~$\layersec{n}$ of the output-layer of $\NN$, as it determines the ambient dimension of the feature space and therefore the computational complexity of the classification method.
For these reasons, we seek to solve Problem~\ref{prob:rnn-sep} with reasonable bounds for both $\marg$ and $\layersec{n}$.

In principle, the aforementioned idea of using random NNs as a pre-processing step for well-understood (linear) classifiers is not new \citep{hzs06,rr08,zbhrv16}.
But despite conceptual overlaps, the analytical approach of the present article is different from most existing works, see also Section~\ref{sec:intro:literature} for a short literature overview. Although Problem~\ref{prob:rnn-sep} includes a large family of classification tasks --- namely all pairs of $\mdist$-separated sets --- we are primarily interested in an \emph{instance-specific} analysis:
our main results quantify the dependence of the key parameters~$\marg$ and~$\layersec{n}$ as functions of the underlying classes $\set^-$ and $\set^+$ and their \enquote{interaction}. The resulting instance-specific bounds allow us to avoid pessimistic (worst-case) bounds caused by pathological pairs of sets. In our analysis, we make no explicit assumptions about the data in Problem~\ref{prob:rnn-sep}, such as a handcrafted generative model or sampling from a generic distribution.
Instead, we will introduce complexity measures that quantify the geometric complexities of~$\set^-$ and~$\set^+$ as well as their mutual entanglement. This perspective appears more natural to us in the context of data-driven methods.

Let us now specify the class of random NNs for which we will explore Problem~\ref{prob:rnn-sep}.
Throughout, the function $\NN \colon \R^d\to \R^{\layersec{n}}$ will be composed of (hidden) layers of the following form.
\begin{definition}\label{def:intro:random-layer}
	We call $\NNlayer \colon \R^{n_{\text{in}}}\to \R^{n_{\text{out}}}$ a \emph{random ReLU-layer} with maximal bias $\biasp \geq 0$ if
	\begin{equation}\label{eq:intro:random-layer}
		\NNlayer(\x) = \sqrt{\tfrac{2}{n_{\text{out}}}} \cdot \relu(\W \x + \Bias), \qquad \x \in \R^{n_{\text{in}}},
	\end{equation}
	where the weight matrix $\W\in \R^{n_{\text{out}}\times n_{\text{in}}}$ has standard Gaussian entries, the bias vector $\Bias$ is uniformly distributed on $[-\biasp, \biasp]^{n_{\text{out}}}$, independently of $\W$, and the element-wise activation function is the rectified linear unit (ReLU), i.e., $\relu(s) \coloneqq \max\{0, s\}$ for $s \in \R$.
\end{definition}
It will turn out that already two random ReLU-layers are sufficient for our solution to Problem~\ref{prob:rnn-sep}, although in principle deeper architectures are also possible.
We consider the ReLU mainly because of its popularity, but our analysis can be adapted for other common activation functions, e.g., the thresholding activation.
Let us note that the random weights and normalization in Definition~\ref{def:intro:random-layer} do not exactly correspond to a standard initialization in deep learning. The closest is the popular \emph{He initialization} \citep{hzrs15}, which would be obtained by replacing $n_{\text{out}}$ by $n_{\text{in}}$ and taking $\Bias=\mathbf{0}$ in~\eqref{eq:intro:random-layer}. Our non-standard choice of the bias is due to a hyperplane tessellation argument used in the proof of our main result. The proof sketch in Section~\ref{sec:intro:proof} will provide an intuitive explanation for this choice; in particular, see Figure~\ref{fig:intro:tessellation}(b).

Instead of directly formulating our main result, Theorem~\ref{thm:main}, we will first present several readily accessible special cases of increasing generality. Afterwards, we will highlight our proof strategy in Section~\ref{sec:intro:proof}.

\subsection{A Gentle Start: Finite Sets and Memorization}
\label{sec:intro:memo}

Our first result below gives an answer to Problem~\ref{prob:rnn-sep} in the situation of \emph{finite} point sets.
In the following, $\ball[2][d] \coloneqq \{ \x \in \R^d \suchthat \lnorm{\x} \leq 1 \}$ denotes the Euclidean unit ball; see also Section~\ref{sec:intro:notation} for a summary of common notation used in this article.
\begin{theorem}[Finite sets]\label{thm:memorization}
	There exist absolute constants $c, C > 0$ such that the following holds. 
	
	Let $\set^-, \set^+\subset\ball[2][d]$ be $\mdist$-separated sets with $N^+ \coloneqq \cardinality{\set^+} < \infty$ and $N^- \coloneqq \cardinality{\set^-} < \infty$. 
	Suppose that $\biasp \gtrsim  \sqrt{\log(e\biasp/\mdist)}$. Let $\NNlayer \colon \R^d\to \R^{n}$ and $\NNlayersec \colon \R^{n}\to \R^{\layersec{n}}$ be two (independent) random ReLU-layers with maximal biases $\biasp, \layersec{\biasp} \geq 0$, respectively, such that
	\begin{align}
		n&\gtrsim \big(\tfrac{\biasp}{\mdist}\big)^{8} \cdot \log (2N^-N^+/\probsuccess)
		\label{eq:memorization:first_layer}
	\end{align}
	and
	\begin{equation}
		\layersec{\biasp}\gtrsim  \big(\tfrac{\biasp}{\mdist}\big)^{4} \cdot \big(\alpha+\biasp \big),
		\qquad \layersec{n}\gtrsim \tfrac{\layersec{\biasp}}{\biasp} \cdot \theta \cdot \log(N^-/\probsuccess), 
		\label{eq:memorization:second_layer}
	\end{equation}
	where $\alpha=\sqrt{\log N^+}$ and
	\begin{equation}
		\label{eqn:exponentialTerm}
		\theta = \exp\Big(C \cdot \big(\alpha^2+\biasp^2\big) \cdot \biasp^6\cdot \mdist^{-8} \cdot \log(\biasp/\mdist)\Big).
	\end{equation}
	Then, given the two-layer random NN $\NN \colon \R^d\to \R^{\layersec{n}}, \ \x \mapsto \NNlayersec(\NNlayer(\x))$, with probability at least $1-\probsuccess$, the sets $\NN(\set^-), \NN(\set^+) \subset \layersec{\biasp} \ball[2][\layersec{n}]$ are linearly separable with margin $c\biasp^2/(\layersec{\biasp}\theta)$.
\end{theorem}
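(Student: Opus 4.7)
The overall plan is a two-stage analysis that mirrors the two random ReLU-layers. The first layer $\NNlayer$ should map $\set^-$ and $\set^+$ into a feature space $\R^n$ while preserving a uniform positive Euclidean distance between the two classes, and the second layer $\NNlayersec$ should convert the resulting finite, distance-separated set into a linearly separable one by producing \enquote{one-hot-like} witness features for each negative point.

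For the first layer, I would show that, with probability at least $1-\probsuccess/2$ over the draw of $\W,\Bias$,
\begin{equation*}
	\lnorm{\NNlayer(\x^+)-\NNlayer(\x^-)}\gtrsim \narrowmdist \qquad \text{and} \qquad \lnorm{\NNlayer(\x)}\lesssim \biasp,
\end{equation*}
uniformly over $(\x^+,\x^-)\in\set^+\times\set^-$ and $\x\in\set^-\union\set^+$, where $\narrowmdist$ is a positive power of $\mdist/\biasp$. The argument is a hyperplane tessellation: each neuron $(\w_i,\bias_i)$ defines an affine hyperplane in $\R^d$ that separates a fixed pair $(\x^+,\x^-)$ with probability of order $\mdist/\biasp$ (owing to the uniform bias on $[-\biasp,\biasp]$); conditionally on this event, the corresponding coordinate of $\NNlayer(\x^+)-\NNlayer(\x^-)$ contributes a definite amount to the squared distance. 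A Chernoff-type concentration over the $n$ i.i.d.\ neurons combined with a union bound over the $N^-N^+$ pairs explains the width requirement~\eqref{eq:memorization:first_layer}; the norm bound on $\NNlayer(\x)$ is a routine sub-Gaussian estimate.

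Conditionally on the first-layer good event, the second-layer analysis reduces to separating two finite clouds $\NNlayer(\set^-),\NNlayer(\set^+)\subset\R^n$ with a quantitative lower bound on their inter-class distance and an upper bound on their diameter. The key probabilistic lemma is that, for any fixed $\x^-\in\set^-$, a single neuron $(\layersec{\w}_j,\layersec{\bias}_j)$ of the second layer produces, with probability at least $1/\theta$, an output that is strictly positive on $\NNlayer(\x^-)$ while vanishing on every $\NNlayer(\x^+)$, $\x^+\in\set^+$. This is a sliver / small-ball estimate: the random affine hyperplane $\{\vgen\in\R^n \suchthat \sp{\layersec{\w}_j}{\vgen}+\layersec{\bias}_j=0\}$ has to land inside a slab of width $\gtrsim\narrowmdist$ separating $\NNlayer(\x^-)$ from $\NNlayer(\set^+)$; sub-Gaussian control of the positive cloud's extent (quantified by $\alpha=\sqrt{\log N^+}$) together with the uniform bias on $[-\layersec{\biasp},\layersec{\biasp}]$ produces exactly the exponential factor~\eqref{eqn:exponentialTerm}. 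With the width~\eqref{eq:memorization:second_layer} and a union bound over $\set^-$, every negative point acquires at least one such \enquote{witness neuron}. The separating pair $(\sepdir,\sepbias)$ is then constructed by pooling the witness coordinates with equal positive weights and tuning $\sepbias$ so that every negative point is pushed below $-\marg$ and every positive point above $+\marg$; combined with the normalization $\sqrt{2/\layersec{n}}$ baked into~\eqref{eq:intro:random-layer} and the a priori bound $\lnorm{\NNlayersec(\NNlayer(\x))}\lesssim \layersec{\biasp}$, this yields the stated margin $c\biasp^2/(\layersec{\biasp}\theta)$.

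The main obstacle will be the second-layer small-ball computation producing $\theta$. The slab probability depends very sharply on the ratio between the cluster radius and the inter-cluster gap, and after substituting the first-layer distortion $\narrowmdist$ in terms of $\mdist/\biasp$ this ratio gives rise to the $\biasp^6\mdist^{-8}$ dependence in~\eqref{eqn:exponentialTerm}; simultaneously, the sub-Gaussian extent $\alpha=\sqrt{\log N^+}$ of the positive cloud has to enter through the exponent in the right way. Extracting this estimate with a tight enough constant while maintaining uniformity over $\x^-\in\set^-$ is the most delicate step. A secondary bookkeeping difficulty is the clean decoupling of the two layers: the second-layer analysis must be carried out conditionally on the first-layer good event, and the two failure probabilities then summed so that the overall probability remains at least $1-\probsuccess$.
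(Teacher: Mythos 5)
There is a genuine gap in the reduction between the two layers. Your first-layer conclusion is distance preservation plus a norm bound ($\lnorm{\NNlayer(\x^+)-\NNlayer(\x^-)}\gtrsim\narrowmdist$ and $\lnorm{\NNlayer(\x)}\lesssim\biasp$), and you then claim the second-layer analysis \enquote{reduces to separating two finite clouds with a lower bound on their inter-class distance and an upper bound on their diameter}, via a small-ball estimate for a random hyperplane landing in a \enquote{slab of width $\gtrsim\narrowmdist$ separating $\NNlayer(\x^-)$ from $\NNlayer(\set^+)$}. But such a slab need not exist under those hypotheses: distance separation does not prevent $\NNlayer(\x^-)$ from lying inside $\convhull{\NNlayer(\set^+)}$, in which case no affine hyperplane separates them and the witness-neuron probability is zero, no matter how wide $\NNlayersec$ is. (This is exactly the obstruction in the input space itself, where the classes are $\mdist$-separated but intertwined, so distance preservation alone buys nothing new.) The indispensable intermediate property is that the first layer makes each $\NNlayer(\x^-)$ \emph{linearly} separable from all of $\NNlayer(\set^+)$ with a quantitative margin $\asymp\mdist^2/\biasp$ — the \enquote{planet and satellites} configuration. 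In the paper this comes from the ReLU's exact vanishing: if for every pair $(\x^-,\x^+)$ some neuron's hyperplane separates them, then the coordinates on which $\NNlayer(\x^-)$ vanishes form a common index set on which every $\NNlayer(\x^+)$ is bounded below, and pooling these coordinates yields an explicit separator (Lemma~\ref{lem:lin_sep}, Theorem~\ref{thm:lin_sep_random_ReLU/Thres_NN_finite_1st_layer}). You have the per-pair separation events in hand from your tessellation step, but you extract from them only a squared-distance contribution, which discards precisely the structure the second layer needs.

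Relatedly, the quantitative form of the second-layer small-ball bound is not governed by gap and diameter alone: the paper's Theorem~\ref{thm:hyperplane_sep:general} requires $(\narrow,\narrowmdist)$-linear separability, and the exponent producing $\theta$ involves both the mean width $\meanwidth{\set^+-\set^-}$ of the transformed clouds (for the finite case $\lesssim\sqrt{\log N^+}+\biasp$, which is where $\alpha$ enters) and the cone-narrowness parameter $\narrow$ through $\marg=(1-\narrow)\narrowmdist$; its proof goes through a Gaussian dimension reduction to $k\asymp\hyppsep^2\marg^{-2}$ and a spherical-cap estimate, not a direct slab computation in $\R^n$. So to repair your outline you would need to (i) upgrade the first-layer conclusion from distance preservation to margin-quantified linear separability of each $\NNlayer(\x^-)$ from $\NNlayer(\set^+)$ using the vanishing-coordinate argument, and (ii) prove the second-layer witness-neuron probability under that hypothesis, tracking $\narrow$, $\narrowmdist$, and the mean width rather than only the gap-to-diameter ratio. (For reference, the paper actually derives Theorem~\ref{thm:memorization} as the radius-zero special case of Theorem~\ref{thm:main}, but the underlying two-layer mechanism is the one just described.)
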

In this result, the best choices for $\biasp$ and $\layersec{\biasp}$ are the minimal settings that satisfy the stated bounds. The governing condition in Theorem~\ref{thm:memorization} (and in all following results, Theorems~\ref{thm:eucl_balls}, \ref{thm:uniform_covering}, and~\ref{thm:main} below) is condition \eqref{eq:memorization:second_layer} on the width $\layersec{n}$ of the second layer. It features the term~$\theta$ that scales exponentially in terms of the logarithm of the number of points, so that $\layersec{n}$ scales as $N_+^{\text{poly}(\biasp,1/\mdist)}$, in contrast to the logarithmic scaling of $n$ in \eqref{eq:memorization:first_layer}. To gauge whether this condition is necessary, let us connect Theorem~\ref{thm:memorization} to the \emph{memorization capacity} of random NNs. The ability of memorizing large data sets (including their noisy components) is a well-known phenomenon in deep learning research and considered as an important piece of the still unsolved generalization puzzle~\citep{zbhrv16,zbhrv21}. Theorem~\ref{thm:memorization} applies to any (\mbox{$\mdist$-separated}) completely unstructured data set --- imagine a point cloud with arbitrary binary labels. Remarkably, one can therefore memorize the labels of any such (finite) set with high probability by efficiently computing a separating hyperplane of~$\NN(\set^-)$ and~$\NN(\set^+)$, e.g., using a hard-margin SVM.\footnote{This observation is especially interesting when no obvious learning rule is available. On the other hand, if the data carries more structure (e.g., $\set^-$ and~$\set^+$ are already linearly separable), there certainly exist more effective approaches than randomized transforms.}

Although this shows that random NNs can be powerful memorizers in practice, existing results in the literature indicate that perfect memorization is already possible when the number of neurons scales linearly in $(N^- + N^+)$ up to logarithmic factors, e.g., see~\citet{ysj19,ver20,bn20}. Hence, we expect that the dependence on $\mdist$ and $\biasp$ within the exponential term $\theta$ in \protect\eqref{eq:memorization:second_layer} may be improved.

\subsection{Separation of Euclidean Balls}
\label{sec:intro:eucl_balls}

Although Theorem~\ref{thm:memorization} provides a margin bound, its actual size was not relevant to the network's memorization capacity.
The situation is different for \emph{infinite} \emph{classes}, on which we will focus from now on.
Problem~\ref{prob:rnn-sep} is then connected to a binary classification task through the hard-margin condition \eqref{eq:intro:hard-margin-cond}, and the margin size determines the generalization performance \citep[Thm.~15.4]{sb14}.
Our next result may be seen as a natural extension of Theorem~\ref{thm:memorization}, replacing discrete data points by a finite collection of Euclidean balls; see Figure~\ref{fig:intro:eucl_balls} for an illustration of this model.
\begin{theorem}[Euclidean balls]\label{thm:eucl_balls}
	There exist absolute constants $c, C > 0$ such that the following holds. 
	
	Let $\set^-, \set^+\subset \ball[2][d]$ be $\mdist$-separated sets that can be written as the union of finitely many Euclidean balls of radius $\rad \geq 0$, i.e.,
	\begin{equation}
		\textstyle\set^- = \bigunion_{l\in [N^-]}\ball[2][d](\cent^-_l,\rad),\quad \set^+ = \bigunion_{j\in [N^+]}\ball[2][d](\cent^+_j,\rad). 
	\end{equation}
	Suppose that $\biasp \gtrsim \sqrt{\log(e\biasp/\mdist)}$ and $\rad \lesssim \mdist^2/\biasp$.
	We assume that $\NNlayer \colon \R^d\to \R^{n}$ and $\NNlayersec \colon \R^{n}\to\nobreak \R^{\layersec{n}}$ are two (independent) random ReLU-layers with maximal biases $\biasp, \layersec{\biasp} \geq 0$, respectively, such that
	\begin{align}\label{eq:eucl_balls:first_layer}
		n&\gtrsim (1+\biasp^6 \mdist^{-8} r^2) \cdot d + \big(\tfrac{\biasp}{\mdist}\big)^8 \cdot \log(2 N^-N^+/\probsuccess)
	\end{align}
	and \eqref{eq:memorization:second_layer} holds with $\alpha = \rad\sqrt{d}+\sqrt{\log N^+}$ and $\theta$ as in \eqref{eqn:exponentialTerm}.
	Then, given the two-layer random NN $\NN \colon \R^d\to \R^{\layersec{n}}, \ \x \mapsto \NNlayersec(\NNlayer(\x))$, with probability at least $1-\probsuccess$, the sets $\NN(\set^-), \NN(\set^+) \subset \layersec{\biasp} \ball[2][\layersec{n}]$ are linearly separable with margin $c\biasp^2/(\layersec{\biasp}\theta)$.
\end{theorem}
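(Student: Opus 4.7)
The plan is to reduce Theorem~\ref{thm:eucl_balls} to Theorem~\ref{thm:memorization} by viewing the first random ReLU-layer $\NNlayer$ as a preprocessing step that contracts each ball $\ball[2][d](\cent, \rad)$ into a thin ``blob'' around its image $\NNlayer(\cent)$, while preserving the $\mdist$-separation between the two classes up to absolute constants. Once this is established, the geometry effectively reduces to a finite collection of $N^- + N^+$ well-separated centers in $\R^n$, and a memorization-type argument analogous to that behind Theorem~\ref{thm:memorization} can be applied to these centers via the second layer $\NNlayersec$, provided the resulting separating hyperplane has margin exceeding the residual blob diameter after both layers.

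The first step is a \emph{uniform approximate distance preservation} property of $\NNlayer$ over $\set^- \cup \set^+$. Using the hyperplane tessellation argument that underlies the proof sketch in Section~\ref{sec:intro:proof}, one can show that on an event of probability at least $1 - \probsuccess/2$, for every $(l, j) \in [N^-] \times [N^+]$ and every $\x \in \ball[2][d](\cent^-_l, \rad)$, $\x' \in \ball[2][d](\cent^+_j, \rad)$, the quantity $\lnorm{\NNlayer(\x) - \NNlayer(\x')}$ approximates $\lnorm{\x - \x'}$ up to an additive error of order $\mdist^2/\biasp$; in particular $\diam \NNlayer(\ball[2][d](\cent, \rad))$ stays small while the image-ball separation is $\gtrsim \mdist$. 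The pointwise version over the finite set of centers requires $n \gtrsim (\biasp/\mdist)^8 \log(2 N^- N^+/\probsuccess)$ and contributes the second summand of \eqref{eq:eucl_balls:first_layer}. Upgrading this to a uniform statement over each $d$-dimensional ball calls for a covering net at scale $\delta \asymp \mdist^2/\biasp$ combined with a Lipschitz-type bound on $\NNlayer$; the metric entropy $\asymp d \log(\rad/\delta)$ of $\ball[2][d](0,\rad)$ accounts for the extra $(1 + \biasp^6 \mdist^{-8} \rad^2) d$ term. The smallness hypothesis $\rad \lesssim \mdist^2/\biasp$ ensures that propagated ball diameters cannot overwhelm the surviving separation.

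Conditional on this good first-layer event, the image centers $\{\NNlayer(\cent^-_l)\}_l$ and $\{\NNlayer(\cent^+_j)\}_j$ form finite subsets of $\R^n$ that are $\mdist'$-separated with $\mdist' \gtrsim \mdist$. I would then invoke a parameter-tracked version of the proof of Theorem~\ref{thm:memorization} to these centers through $\NNlayersec$, yielding a separating hyperplane with margin $c\biasp^2/(\layersec{\biasp}\theta)$. The parameter $\alpha$ must be enlarged from $\sqrt{\log N^+}$ to $\rad\sqrt{d}+\sqrt{\log N^+}$ because a uniform bound on $\sup_{\x \in \set^+} \lnorm{\NNlayersec(\NNlayer(\x))}$ now picks up the typical contribution of the ball, which is captured by $\rad\sqrt{d}$ (essentially the Gaussian mean width of $\ball[2][d](0,\rad)$). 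The main obstacle, and where the proof deviates most from Theorem~\ref{thm:memorization}, is to certify that this margin on the centers is not destroyed by the propagated blob diameter after $\NNlayersec$: one must show, via operator-norm and Lipschitz bounds on both random layers, that $\diam \NN(\ball[2][d](\cent, \rad))$ stays well below $c\biasp^2/(\layersec{\biasp}\theta)$. A final union bound over both layers' good events then produces the claim with probability at least $1 - \probsuccess$.
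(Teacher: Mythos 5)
Your plan breaks down at its final and decisive step. You propose to separate only the images of the centers via a memorization-type argument in the second layer (with margin $c\biasp^2/(\layersec{\biasp}\theta)$) and then to absorb the continuous balls by showing that $\diam \NN(\ball[2][d](\cent,\rad))$ stays well below that margin. This is quantitatively impossible: both random ReLU-layers approximately \emph{preserve} Euclidean distances (this is exactly Theorem~\ref{thm:distance_preservation_ReLU}; they do not contract a set beyond constant factors plus a small additive error), so $\diam \NN(\ball[2][d](\cent,\rad)) \asymp \rad$, which under the allowed scaling $\rad \asymp \mdist^2/\biasp$ is only polynomially small, whereas the target margin $c\biasp^2/(\layersec{\biasp}\theta)$ is exponentially small since $\theta=\exp\big(C(\alpha^2+\biasp^2)\biasp^6\mdist^{-8}\log(\biasp/\mdist)\big)$. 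The blob diameter therefore dwarfs the margin, and the ``centers plus small diameter'' reduction collapses. The infinite sets have to be carried through the second-layer analysis itself: in the paper, the per-neuron event is that a random hyperplane in $\R^n$ $\hyppsep$-separates the whole set $\NNlayer(\set^+)$ from the whole component $\NNlayer(\set_l^-)$, and the probability of that event (Theorem~\ref{thm:hyperplane_sep:general}) decays exponentially in the mean width of the difference set --- this is where $\alpha=\rad\sqrt{d}+\sqrt{\log N^+}$ enters the exponent of $\theta$, not through a bound on $\sup_{\x}\lnorm{\NN(\x)}$ as you suggest.

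A second, related gap: your first-layer event records only approximate distance preservation, and you then ask the single remaining layer $\NNlayersec$ to ``memorize'' the ($\approx\mdist$-separated) image centers. But Theorem~\ref{thm:memorization} is a two-layer statement; nothing in the paper (or in your sketch) shows that one random ReLU-layer applied to an arbitrary separated finite set makes it linearly separable --- that is the original problem with one layer fewer. What the second-layer argument actually needs is the geometric configuration produced by the first layer (Theorems~\ref{thm:lin_sep_random_ReLU/Thres_NN_finite_1st_layer} and~\ref{thm:lin_sep_random_ReLU_NN_continuous_1st_layer}): for every $l$, $\NNlayer(\set_l^-)$ is linearly separable from all of $\NNlayer(\set^+)$ with margin $\asymp \mdist^2/\biasp$, i.e.\ the images are $(\narrow,\narrowmdist)$-linearly separable, which is precisely the hypothesis of Theorem~\ref{thm:hyperplane_sep:general}; distance preservation alone does not provide it. For comparison, the paper's proof of Theorem~\ref{thm:eucl_balls} is a short verification that the ball centers form a $C'\biasp$-mutual covering (using $\rad\lesssim\mdist^2/\biasp$), that $\mwloc^+,\mwloc^-\lesssim\rad\sqrt{d}$ and $\meanwidth{\set^+}\lesssim\rad\sqrt{d}+\sqrt{\log N^+}$ (Lemma~\ref{lem:Gaussian_width_union}), and that the cone effective dimensions are $\lesssim d$, after which Theorem~\ref{thm:main} yields the statement directly.
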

\begin{figure}[t]
	\centering
	\includegraphics[width=0.35\linewidth]{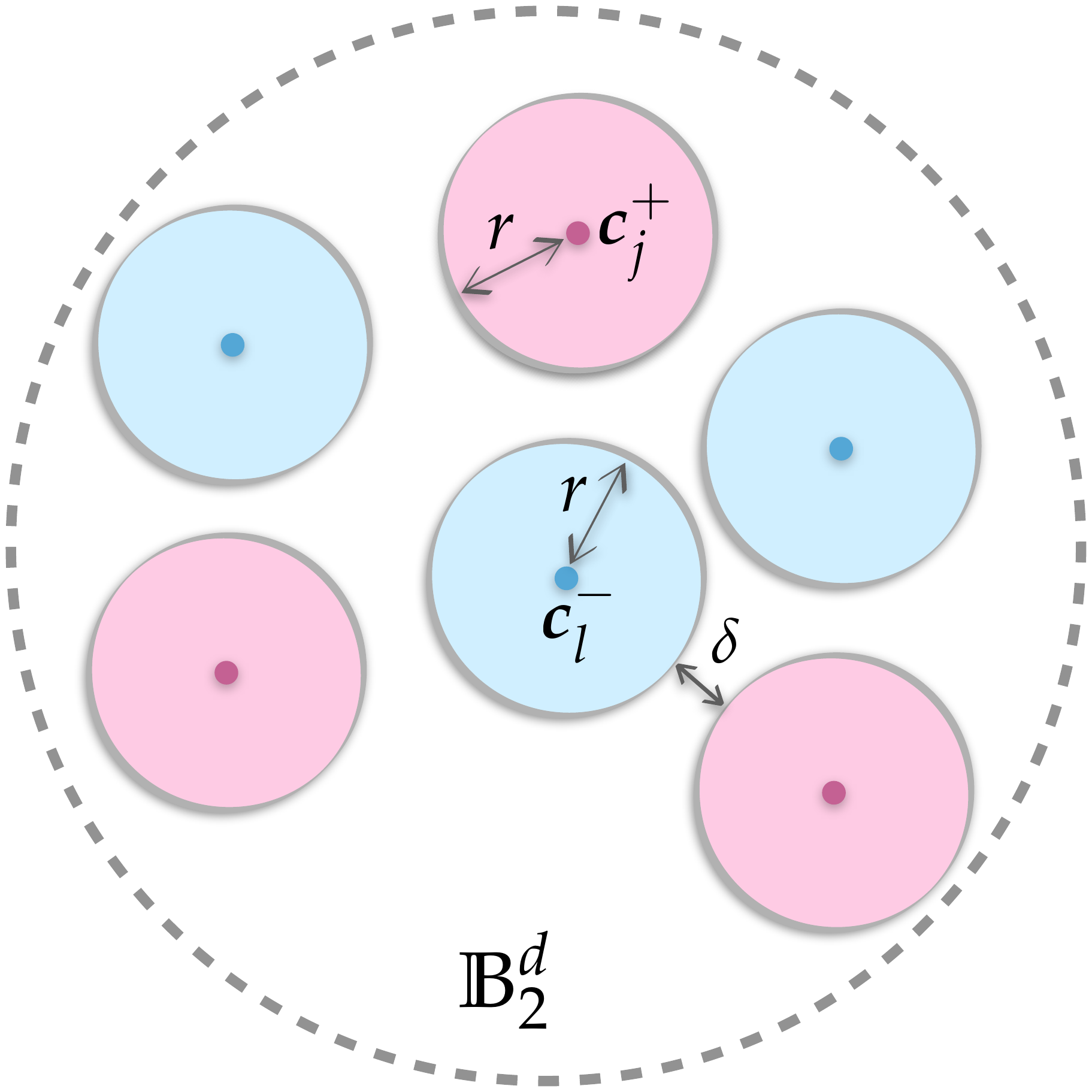}
	\caption{\textbf{Illustration of the Euclidean-ball model in Theorem~\ref{thm:eucl_balls}.} The sets $\set^-, \set^+ \subset \ball[2][d]$ consist of Euclidean balls of radius $\rad$, where the center points are denoted by $\cent^-_l$ and $\cent^+_j$, respectively. Note that the $\mdist$-separation only concerns balls of different classes, while arbitrary intersections are allowed within each class.}
	\label{fig:intro:eucl_balls}
\end{figure}
It is worth noting that in the limit case $\rad = 0$, the above statement is essentially consistent with Theorem~\ref{thm:memorization}.
On the other hand, Theorem~\ref{thm:eucl_balls} reveals the price of dealing with full-dimensional sets instead of points: the bound on the output dimension $\hat{n}$ scales exponentially in terms of~$\rad^2 d$.
Thus, to avoid the curse of dimensionality, the radius needs to satisfy $\rad \lesssim 1 / \sqrt{d}$.
Under this assumption, Theorem~\ref{thm:eucl_balls} certifies that random NNs can efficiently separate (unstructured) collections of Euclidean balls.

\subsection{Towards a General Separation Guarantee}
\label{sec:intro:uniform_covering}

So far, we have only considered specific examples of data sets.
Our next theorem concerns Problem~\ref{prob:rnn-sep} for arbitrary $\mdist$-separated classes.
For a formal statement, we need to introduce two important geometric parameters. The \emph{covering number} of a bounded subset $\set \subset \R^d$ at scale $\rad > 0$ is given by
\begin{equation}\label{eq:intro:covering_number}
	\covnumber{\set}{\rad} \coloneqq \min\Big\{ N \in \N \suchthat \exists \cent_1, \dots, \cent_N \in \R^d \colon \set \subset \textstyle\bigunion_{j\in [N]}\ball[2][d](\cent_j,\rad) \Big\},
\end{equation}
i.e., the smallest number of Euclidean balls of radius $\rad$ required to cover $\set$.
Moreover, the \emph{(Gaussian) mean width} of $\set$ is defined as
\begin{equation}\label{eq:intro:mean_width}
	\meanwidth{\set} \coloneqq \mean_{\g}\Big[ \sup_{\x \in \set} \sp{\g}{\x} \Big],
\end{equation}
where $\g \distributed \Normdistr{\vnull}{\I{d}}$ denotes a standard Gaussian random vector.
Both $\covnumber{\set}{\rad}$ and $\meanwidth{\set}$ are natural complexity measures, which are well-established in high-dimensional geometry, statistics, and signal processing, e.g., see~\citet{gm04,crpw12,tal14,ver18}.
\begin{theorem}[General sets]\label{thm:uniform_covering}
	There exist absolute constants $c, C > 0$ such that the following holds. 
	
	Let $\set^-, \set^+\subset \ball[2][d]$ be $\mdist$-separated sets and suppose that $\biasp \gtrsim  \sqrt{\log(e\biasp/\mdist)}$.
	Moreover, let $N^-\coloneqq \covnumber{\set^-}{c\mdist^2/\biasp}$ and $N^+\coloneqq \covnumber{\set^+}{c\mdist^2/\biasp}$.
	We assume that $\NNlayer \colon \R^d\to \R^{n}$ and $\NNlayersec \colon \R^{n}\to\nobreak \R^{\layersec{n}}$ are two (independent) random ReLU-layers with maximal biases $\biasp, \layersec{\biasp} \geq 0$, respectively, such that\footnote{Note that the first line in \eqref{eq:uniform_covering:first_layer} is always satisfied if $n \gtrsim d$, since we have that $\effdim{\S^{d-1}} \asymp d$. See also Section~\ref{sec:intro:notation} for a precise definition of $\cone{\cdot}$.}
	\begin{align}
		n&\gtrsim  \effdim{\cone{\set^--\set^-}\intersec\S^{d-1}} + \effdim{\cone{\set^+-\set^+}\intersec\S^{d-1}}, \\
		n&\gtrsim \big(\tfrac{\biasp}{\mdist}\big)^{8} \cdot \Big( \biasp^{-2}\big(\effdim{\set^-}+\effdim{\set^+}\big) + \log (2N^-N^+/\probsuccess) \Big)
		\label{eq:uniform_covering:first_layer}
	\end{align}
	and \eqref{eq:memorization:second_layer} holds with $\alpha=\meanwidth{\set^-}+\meanwidth{\set^+}$ and $\theta$ as in \eqref{eqn:exponentialTerm}. Then, given the two-layer random NN $\NN \colon \R^d\to \R^{\layersec{n}}, \ \x \mapsto \NNlayersec(\NNlayer(\x))$, with probability at least $1-\probsuccess$, the sets $\NN(\set^-), \NN(\set^+) \subset \layersec{\biasp} \ball[2][\layersec{n}]$ are linearly separable with margin $c\biasp^2/(\layersec{\biasp}\theta)$.
\end{theorem}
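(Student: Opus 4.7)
The plan is to reduce Theorem~\ref{thm:uniform_covering} to (a refinement of) Theorem~\ref{thm:eucl_balls} via a covering argument. Set $\rad \coloneqq c\mdist^2/\biasp$ for a suitably small constant and let $\{\cent^-_l\}_{l\in[N^-]}, \{\cent^+_j\}_{j\in[N^+]}$ be centers of optimal $\rad$-coverings of $\set^-$ and $\set^+$, as guaranteed by $N^\pm = \covnumber{\set^\pm}{\rad}$. The inflated classes $\tilde{\set}^\pm \coloneqq \bigunion_{l} \ball[2][d](\cent^\pm_l, \rad) \supset \set^\pm$ are still $(\mdist - 2\rad)$-separated, which is at least $\mdist/2$ provided $c$ is small. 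This places us formally in the Euclidean-ball regime of Theorem~\ref{thm:eucl_balls}.

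A direct application of Theorem~\ref{thm:eucl_balls}, however, would produce a first-layer bound involving $\rad^2 d$, which bounds the mean width of each radius-$\rad$ ball by the worst-case estimate $\rad\sqrt{d}$. To sharpen this into $\effdim{\set^-} + \effdim{\set^+}$ as in~\eqref{eq:uniform_covering:first_layer}, I would revisit the first-layer step in the proof of Theorem~\ref{thm:eucl_balls}. That step amounts to a uniform hyperplane-tessellation statement: with high probability, the random hyperplanes induced by $(\W, \Bias)$ separate opposite-class points of $\set^- \cup \set^+$ while merging same-class points that lie inside a common $\rad$-ball. The sharp quantitative version of this statement, in the spirit of Plan--Vershynin-type uniform tessellations, scales with the Gaussian mean widths of $\set^\pm$ rather than with the ambient dimension~$d$. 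The auxiliary condition on $\cone{\set^\pm-\set^\pm} \intersec \S^{d-1}$ enters exactly here: localized conic mean widths control the distortion along short chords joining nearby same-class points, ensuring that the tessellation is \emph{coherent} over each covering ball.

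Given a first layer that retains the $\mdist$-separation structure with this refined complexity dependence, the second-layer analysis is imported verbatim from the proof of Theorem~\ref{thm:eucl_balls}: the random ReLU neurons with biases uniform on $[-\layersec{\biasp}, \layersec{\biasp}]$ realize a dense family of bump-type features whose linear combination produces a hyperplane separating $\NN(\set^-)$ from $\NN(\set^+)$ with margin $c\biasp^2/(\layersec{\biasp}\theta)$. The relevant complexity parameter $\alpha$ in~\eqref{eq:memorization:second_layer} is the Gaussian complexity of the first-layer image of the larger class, which standard sub-Gaussian concentration for $\NNlayer$ bounds by $\meanwidth{\set^-} + \meanwidth{\set^+}$ (the $\meanwidth{\set^-}$ contribution accounts for centering relative to the opposite class).

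The principal technical obstacle is the refined first-layer tessellation step: establishing that $\asympeq{\effdim{\set^-} + \effdim{\set^+}}$ neurons (plus the logarithmic term for separating the covering centers) suffice to induce a tessellation that is simultaneously fine enough to separate opposite classes at the $\mdist$-scale and coarse enough to merge points within any $\rad$-ball, \emph{uniformly} over $\set^- \cup \set^+$. Once this uniform tessellation is established with probability $\gtrsim 1 - \probsuccess/2$, the remainder of the argument is essentially the concatenation of the corresponding steps from the proofs of Theorems~\ref{thm:memorization} and~\ref{thm:eucl_balls}, combined with a union bound over the two layers.
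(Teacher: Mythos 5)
Your proposal misses that Theorem~\ref{thm:uniform_covering} is, in the paper, an almost immediate corollary of the general result Theorem~\ref{thm:main}: one checks that minimal $c\mdist^2/\biasp$-coverings of $\set^+$ and $\set^-$ (with centers inside the sets, hence themselves $\mdist$-separated) form a $\biasp/c$-mutual covering in the sense of Definition~\ref{def:mutual_covering}, bounds the local widths trivially by $\mwloc^\pm \leq \meanwidth{\set^\pm}$, and invokes Theorem~\ref{thm:main}. Your route instead tries to upgrade Theorem~\ref{thm:eucl_balls} --- which in the paper is itself derived from Theorem~\ref{thm:main} --- and the step you yourself flag as the ``principal technical obstacle'' (a first-layer statement whose complexity scales with $\effdim{\set^-}+\effdim{\set^+}$) is precisely the content that would have to be proved; it is left as an unproven claim, so the proposal is not a proof.

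Moreover, the mechanism you propose for that step would fail. The first layer does not produce a tessellation that ``merges same-class points lying in a common $\rad$-ball'': with $n$ as in \eqref{eq:uniform_covering:first_layer}, the random hyperplanes are dense enough that they generically cut through the small covering balls, so nearby same-class points do not share a cell, and no Plan--Vershynin-type coarseness statement of the kind you describe holds. What the paper actually uses is (i) separation of the \emph{finite} set of covering centers only, pair by pair, via Theorem~\ref{thm:prob_single_hyperplane_separates_two_points} and a Chernoff bound (Theorem~\ref{thm:lin_sep_random_ReLU/Thres_NN_finite_1st_layer}), and (ii) uniform distance preservation of the random ReLU-layer (Theorem~\ref{thm:distance_preservation_ReLU}, proved via concentration of empirical product processes), which is where $\effdim{\set^\pm}$ enters the first-layer width; since the covering radius $c\mdist^2/\biasp$ is below the margin $\asymp \mdist^2/\biasp$ achieved for the centers, distance preservation transfers the separation from centers to the full components. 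You also misattribute the conic-width condition: it is not about ``coherence'' of a tessellation but is used in Lemma~\ref{lem:Gaussian_width_after_1_layer} to guarantee $\meanwidth{\NNlayer(\set')}\lesssim\meanwidth{\set'}$, which is needed because the \emph{second}-layer success probability (Theorem~\ref{thm:hyperplane_sep:general}) decays exponentially in $\effdim{\NNlayer(\set_l^-)-\NNlayer(\set^+)}$; this is also the source of $\alpha=\meanwidth{\set^-}+\meanwidth{\set^+}$ in \eqref{eq:memorization:second_layer}. Finally, the second layer cannot be ``imported verbatim'' from knowing only that $\mdist$-separation is retained: it requires the specific post-first-layer configuration (each $\NNlayer(\set_l^-)$ is $(\narrow,\narrowmdist)$-linearly separable from $\NNlayer(\set^+)$ with explicit parameters, as in Theorem~\ref{thm:lin_sep_random_ReLU_NN_continuous_1st_layer}), which your sketch never establishes.
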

Compared to Theorem~\ref{thm:eucl_balls}, the above guarantee yields a much stronger statement due to the use of the mean width as a complexity measure.
To see this, let $\set^+ \subset \bigunion_{j\in [N^+]}\ball[2][d](\cent^+_j,\rad)$ be any covering of $\set^+$ at scale $\rad = c \biasp^{-1}\mdist^2$ and consider the following upper bound (see Lemma~\ref{lem:Gaussian_width_union}):
\begin{equation}\label{eq:intro:mean_width_localized}
	\meanwidth{\set^+} \lesssim  w^+ + \sqrt{\log N^+},
\end{equation}
where $w^+=\max_{j\in[N^+]}\meanwidth{\set^+ \intersec \ball[2][d](\cent^+_j,\rad)}$. An analogous bound holds for $\set^-$. While using the worst-case estimate $\mwloc^+ \lesssim \rad \sqrt{d}$ would lead to a similar bottleneck as in Theorem~\ref{thm:eucl_balls}, the \emph{localized} mean width parameter $\mwloc^+$ can be substantially smaller for structured data sets. Typical examples are data residing on a low-dimensional manifold or contained in the convex hull of finitely many points, see Remark~\ref{rmk:meanwidth} below for some concrete examples.
On the other hand, the covering number $N^+$ reflects the \emph{global} size of $\set^+$ in \eqref{eq:intro:mean_width_localized}.

For these reasons, Theorem~\ref{thm:uniform_covering} takes an important step towards a general solution to Problem~\ref{prob:rnn-sep}, which meets our overall goal of \emph{instance-specific} bounds for the separation margin and layer widths.
Having said this, the geometric parameters in this result only capture the individual complexities of $\set^-$ and $\set^+$, but remain silent about their mutual arrangement.
For instance, one would expect that two sets become easier to separate if their \enquote{centers of mass} are farther apart, even though the minimal distance $\mdist$ is small; see Figure~\ref{fig:main:mutual_covering} in Section~\ref{sec:main} for an illustration.
In such scenarios, a non-uniform covering strategy for $\set^-$ and~$\set^+$ is preferable, in the sense that data points far away from the decision boundary should be covered by fewer but larger balls.
The most general outcome of this work, Theorem~\ref{thm:main}, makes this intuition precise by employing a novel notion of \emph{mutual complexity} (see Definition~\ref{def:mutual_covering} and~\ref{def:mutual_complexity}).
We refer to Section~\ref{sec:main} for an in-depth discussion and further refinements due to Theorem~\ref{thm:main}.
Finally, we emphasize that all previously presented results follow from Theorem~\ref{thm:main} as special cases, see Section~\ref{sec:special_cases} for detailed proofs.

We close this part with a few examples of concrete bounds on the mean width to highlight its usefulness as a complexity measure:
	\begin{remark}[Controlling the mean width]
		\begin{rmklist}
			\item 
			\emph{Worst-case bound.} Since $\set^-, \set^+\subset\nobreak \ball[2][d]$, the mean width parameter in Theorem~\ref{thm:uniform_covering} satisfies the trivial bound
			\begin{equation}
				\alpha = \meanwidth{\set^-} + \meanwidth{\set^+} \lesssim \meanwidth{\ball[2][d]} \asymp \sqrt{d}.
			\end{equation}
			Thus, an exponential width of the second layer in terms of~$d$ allows us to solve Problem~\ref{prob:rnn-sep} for arbitrary $\mdist$-separated sets, regardless of their specific shape.
			\item
			\emph{Low-dimensional subspaces.}
			As highlighted above, already much smaller networks can achieve separation if the underlying classes carry more structure.
			A typical example of low-complexity structure is a situation where $\set^-$ and $\set^+$ reside in a union of low-dimensional subspaces, say $\set^-, \set^+\subset \bigunion_{j\in [N]} L_j \intersec \ball[2][d]$ with $\dim L_j \ll d$. Then,
			\begin{equation}
				\alpha = \meanwidth{\set^-} + \meanwidth{\set^+} \lesssim \max_{j\in[N]}\sqrt{\dim L_j}+ \sqrt{\log N} \ll \sqrt{d},
			\end{equation}
			assuming that $N$ is not exponentially large (see Lemma~\ref{lem:Gaussian_width_union}).
			\item
			\emph{Point clouds and their convex hulls.}
			Another important example of a low-complexity set is the convex hull of finitely many points.
			Indeed, assuming $\set^-, \set^+\subset \convhull{\x_1, \dots, \x_N} \subset\nobreak \ball[2][d]$, the mean width only scales logarithmically in $N$:
			\begin{equation}
				\alpha = \meanwidth{\set^-} + \meanwidth{\set^+} \leq 2 \cdot \meanwidth{\convhull{\x_1, \dots, \x_N}} = 2 \cdot \meanwidth{\{\x_1, \dots, \x_N\}} \lesssim \sqrt{\log N},
			\end{equation}
			where we have used a basic bound on the mean width (e.g., see \citealp[Ex.~1.3.8]{ver15}) and its invariance under taking the convex hull.
			Note that this bound particularly extends the situation of finitely many data points from Theorem~\ref{thm:memorization} to infinite data sets.
		\end{rmklist}\label{rmk:meanwidth}
	\end{remark}

\subsection{Proof Strategy}
\label{sec:intro:proof}

To keep our exposition as simple as possible, we will describe our proof strategy in the prototypical situation of Euclidean balls from Theorem~\ref{thm:eucl_balls}, see also Figure~\ref{fig:intro:eucl_balls}.
Recall that $\setcen^- \coloneqq \{\cent^-_1, \dots, \cent^-_{N^-}\}$ and $\setcen^+ \coloneqq \{\cent^+_1, \dots, \cent^+_{N^+}\}$ denote the center points of~$\set^-$ and~$\set^+$, respectively. For convenience, we also set $\set_l^- \coloneqq \ball[2][d](\cent^-_l,\rad)$ and $\set_j^+ \coloneqq \ball[2][d](\cent^+_j,\rad)$ so that $\set^- = \bigunion_{l\in [N^-]}\set_l^-$ and $\set^+ = \bigunion_{j\in [N^+]}\set_j^+$.

Our data separation approach consists of a \emph{two-step procedure}, which essentially corresponds to the composition of the random layers $\NNlayer \colon \R^d\to \R^{n}$ and $\NNlayersec \colon \R^{n}\to \R^{\layersec{n}}$.
Although $\NNlayer$ and $\NNlayersec$ both follow exactly the same random design (see Definition~\ref{def:intro:random-layer}), we will see that their purposes are different:
while the first one already establishes a desirable geometrical configuration under mild conditions, a major challenge is to show that one can actually take advantage of it by applying a second (wider) random layer.
The central finding of our geometric analysis of Problem~\ref{prob:rnn-sep} is a subtle interplay between the separation capacity of random NNs and their stability properties.
In particular, we demonstrate that the linearization of complicated data is possible on a global scale, without too much disturbing its local geometry, e.g., Euclidean point distances.

To understand the effect of the first layer $\NNlayer$, it is useful to take a coordinate-wise perspective:
\begin{equation}
	[\NNlayer(\x)]_i = \sqrt{\tfrac{2}{n}}\relu(\sp{\w_i}{\x}+\bias_i), \quad i = 1, \dots, n, \quad \x \in \R^d,
\end{equation}
where $\w_i \in \R^d$ is the $i$-th row of the weight matrix $\W\in \R^{n\times d}$ and $b_i \in [-\biasp, \biasp]$ the corresponding bias. Thus, $[\NNlayer(\x)]_i$ indicates on which side of the (unnormalized) random hyperplane $\hypp[\bias_i]{\w_{i}}$ a given point $\x \in \R^d$ lies.
Our first major proof step (elaborated in Theorem~\ref{thm:lin_sep_random_ReLU/Thres_NN_finite_1st_layer}) shows that as long as $\biasp \gtrsim 1$, the following holds with high probability:
for every pair of center points $(\cent^-_l, \cent^+_j) \in \setcen^- \times \setcen^+$, there are coordinates $\setind_{l,j} \subset [n]$ with $\cardinality{\setind_{l,j}} \gtrsim \mdist \biasp^{-1} n$ such that $\hypp[\bias_i]{\w_{i}}$ separates $\cent^-_l$ from $\cent^+_j$ for all $i \in \setind_{l,j}$, in fact
\begin{equation}\label{eq:intro:proof:sep_layer1}
	[\NNlayer(\cent^-_l)]_i = 0 \quad \text{and} \quad [\NNlayer(\cent^+_j)]_i \gtrsim \tfrac{\mdist}{\sqrt{n}},
\end{equation}
see Figure~\ref{fig:intro:tessellation} for an illustration.
The key insight to show \eqref{eq:intro:proof:sep_layer1} is that the probability of a single random hyperplane separating a fixed pair of \mbox{$\mdist$-separated} points is of order $\asympslower{\mdist \biasp^{-1}}$, see Theorem~\ref{thm:prob_single_hyperplane_separates_two_points}.
Combining this with a Chernoff bound over all hyperplanes associated with $\NNlayer$ then leads to a high-probability event of the above type.
A remarkable fact about this argument is that the required layer width~$n$ is very moderate, scaling only logarithmically with the number of center points (see \eqref{eq:eucl_balls:first_layer}).

\begin{figure}[t]
	\centering
	\includegraphics[width=0.85\linewidth]{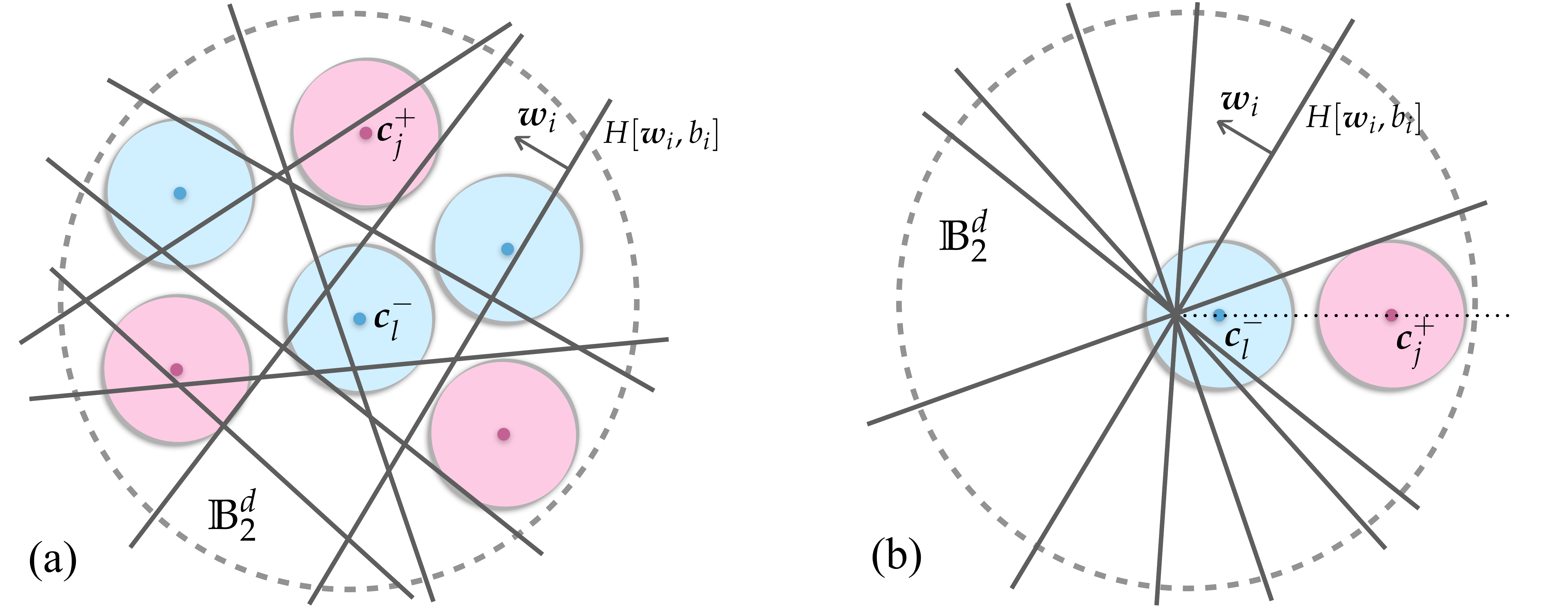}
	\caption{\textbf{Random hyperplanes in the input domain.} (a) Each coordinate of the first layer $\NNlayer \colon \R^d\to \R^{n}$ can be associated with a random hyperplane $\hypp[\bias_i]{\w_{i}}$, where $\w_{i} \distributed \Normdistr{\vnull}{\I{d}}$ and $\bias_i$ is uniformly distributed on $[-\biasp, \biasp]$. Already a relatively few of such hyperplanes are enough to separate every pair of center points $(\cent^-_l, \cent^+_j) \in \setcen^- \times \setcen^+$ at least once. For this, a sufficiently large bias parameter ($\biasp \gtrsim 1$) is vital, as it ensures a \emph{uniform tessellation} of the input domain $\ball[2][d]$; otherwise, the probability of separating points close to the boundary of $\ball[2][d]$ would become too low. Subfigure (b) illustrates what could go wrong for $\biasp = 0$: if the center points reside on a ray starting at the origin, a separation by hyperplanes without offsets becomes impossible.}
	\label{fig:intro:tessellation}
\end{figure}

A crucial part of \eqref{eq:intro:proof:sep_layer1} is that the corresponding coordinates in $\NNlayer(\cent^-_l)$ are actually vanishing, due to the \emph{non-linear} activation.
	Based on this, we can explicitly construct a (normalized) hyperplane $\hypp[\sepbias_l]{\sepdir_l}$ for each $l \in\nobreak [N^-]$ that separates $\NNlayer(\cent^-_l)$ and $\NNlayer(\setcen^+)$ with margin $\tilde{\marg} \asymp \mdist^2 \biasp^{-1}$ (see Theorem~\ref{thm:lin_sep_random_ReLU/Thres_NN_finite_1st_layer}).
	The resulting arrangement of the transformed sets $\NNlayer(\setcen^-)$ and $\NNlayer(\setcen^+)$ resembles a big \enquote{planet} which is orbited by small \enquote{satellites} and is illustrated in Figure~\ref{fig:intro:first_layer}.\footnote{Note that such a geometric arrangement would not be achievable without some kind of non-linearity in~$\NNlayer$. For example, imagine a series of points on a straight line where the class label $\pm1$ alternates with each point. This arrangement cannot be transformed into the situation in Figure~\ref{fig:intro:first_layer} by an affine map, which maps lines to lines.}

To conclude with the first layer, we need to ensure that the simplification achieved by~$\NNlayer$ does not only apply to the center points but to the entire data set.
Indeed, leveraging the \emph{geometry-preserving} properties of random ReLU-layers (i.e., $\NNlayer$ preserves $\l{2}$-distances between nearby points; see Theorem~\ref{thm:distance_preservation_ReLU}), it follows that $\NNlayer(\set_l^-)$ and $\NNlayer(\set^+)$ are also linearly separable for every $l \in [N^-]$ (still with margin $\tilde{\marg} \asymp \mdist^2 \biasp^{-1}$, see Theorem~\ref{thm:lin_sep_random_ReLU_NN_continuous_1st_layer}). Hence, the geometric picture of Figure~\ref{fig:intro:first_layer} remains true when replacing~$\setcen^-$ and~$\setcen^+$ by~$\set^-$ and~$\set^+$, respectively.\footnote{Using such distance preservation properties of $\NNlayer$ is very different from a direct approach, according to which a random hyperplane $\hypp[\bias_i]{\w_{i}}$ would separate pairs of balls $(\set^-_l, \set^+_j)$. In fact, the latter event is much less likely than the separation of single points by a random hyperplane (this will become clear with Theorem~\ref{thm:hyperplane_sep:general} below). Instead, distance preservation is an integral property of the random layer, which exploits information from \emph{all} coordinates of $\NNlayer$.}
As for \eqref{eq:intro:proof:sep_layer1}, this reasoning requires the bias $\Bias$ to be large enough (i.e., $\biasp \gtrsim\nobreak \sqrt{\log(\biasp/\mdist)}$) and exploits that it is \emph{uniformly} distributed. 

\begin{figure}[t]
	\centering
	\includegraphics[width=0.6\linewidth]{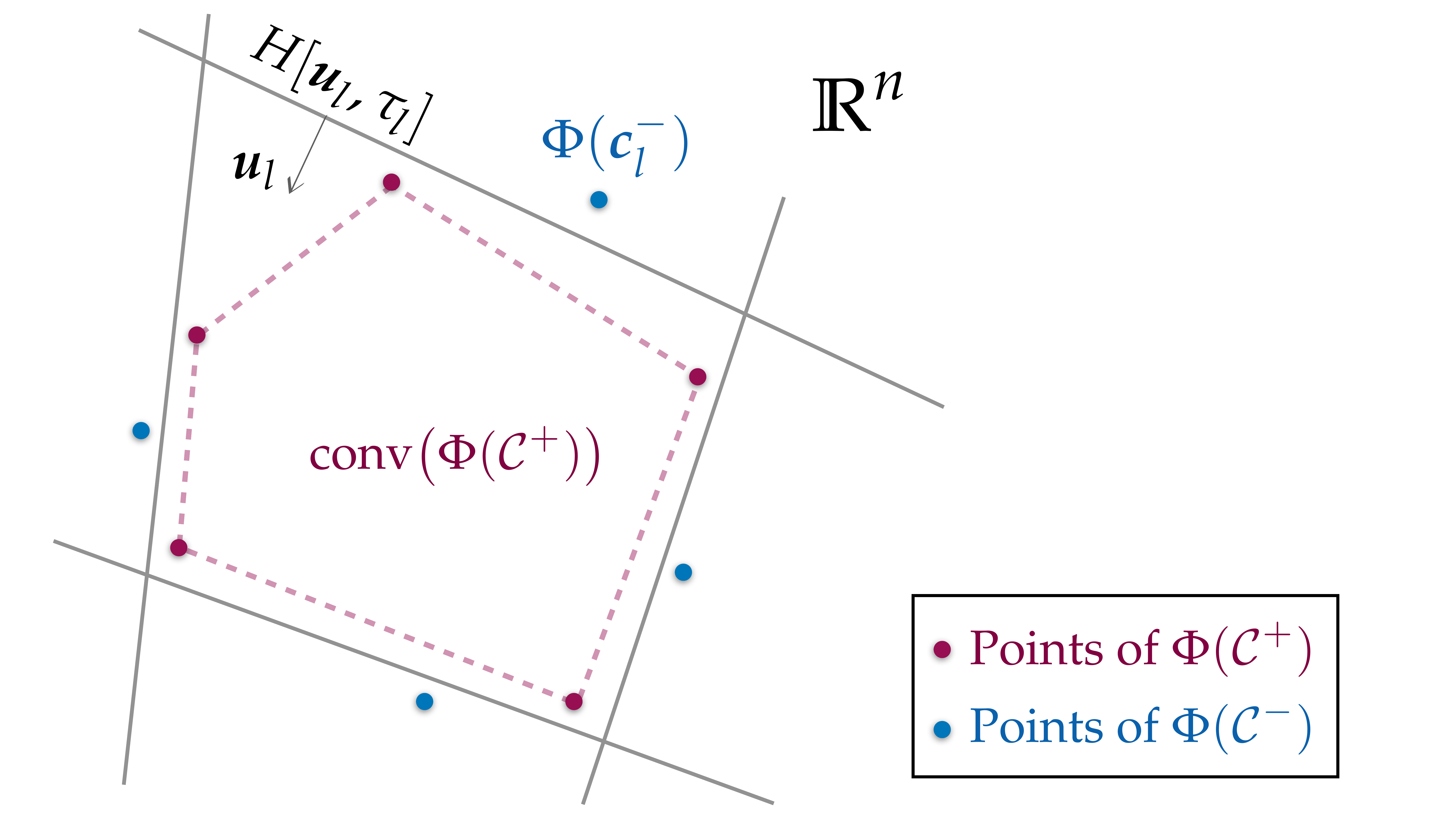}
	\caption{\textbf{The geometric effect of the first random ReLU-layer.} For each $l \in [N^-]$, the sets $\NNlayer(\cent^-_l)$ and $\NNlayer(\setcen^+)$ are linearly separable, or equivalently, it holds that $\NNlayer(\setcen^-) \intersec \convhull{\NNlayer(\setcen^+)} = \emptyset$, where $\convhull{\cdot}$ is the convex hull operator. One can picture $\convhull{\NNlayer(\setcen^+)}$ as a big \enquote{planet} which is orbited by small \enquote{satellites} namely the transformed center points in $\NNlayer(\setcen^-)$. For symmetry reasons, an analogous statement holds with high probability if the roles of $\setcen^-$ and $\setcen^+$ are interchanged.}
	\label{fig:intro:first_layer}
\end{figure}

Let us now turn to the second layer $\layersec{\NNlayer} \colon \R^{n}\to \R^{\layersec{n}}$, which builds directly on the geometric situation after applying $\NNlayer$.
It is again helpful to treat each coordinate individually:
\begin{equation}
	[\layersec{\NNlayer}(\x)]_i = \sqrt{\tfrac{2}{\layersec{n}}}\relu(\sp{\layersec{\w}_i}{\x}+\layersec{\bias}_i), \quad i = 1, \dots, \layersec{n}, \quad \x \in \R^n,
\end{equation}
where $\layersec{\w}_i \distributed \Normdistr{\vnull}{\I{n}}$ and $\layersec{\bias}_i$ is uniformly distributed on $[-\layersec{\biasp}, \layersec{\biasp}]$.
Our main goal is to show that for every $l \in [N^-]$ there exist sufficiently many coordinates $i \in [\layersec{n}]$ such that
\begin{equation}\label{eq:intro:proof:sep_layer2}
	[\layersec{\NNlayer}(\NNlayer(\set_l^-))]_i \geq \tfrac{t}{\sqrt{\layersec{n}}} \quad \text{and} \quad [\layersec{\NNlayer}(\NNlayer(\set^+))]_i = 0,
\end{equation}
where $t > 0$ depends on the complexity of $\set^-$ and $\set^+$; note that the vanishing coordinates are associated with data from~$\set^+$ instead of~$\set^-$.
Our basic strategy to establish \eqref{eq:intro:proof:sep_layer2} is similar to \eqref{eq:intro:proof:sep_layer1}, but there is a major difference: we now have to deal with the probability that a random hyperplane separates the sets $\layersec\set_l^- \coloneqq \NNlayer(\set_l^-)$ and $\layersec\set^+ \coloneqq \NNlayer(\set^+)$.
The outcome of the first layer implies the existence of a separator, e.g., $\hypp[\sepbias_l]{\sepdir_l}$ (see also Figure~\ref{fig:intro:first_layer}), but this does not mean that it is likely to be found by a single random draw.\footnote{To be clear about this point, $\hypp[\sepbias_l]{\sepdir_l}$ does explicitly depend on the unknown sets $\set^-$ and $\set^+$. Hence, in contrast to random hyperplanes, it cannot be used for practical purposes offhand.}
Certainly, the probability of successful separation may not only depend on the distance between $\layersec\set_l^-$ and $\layersec\set^+$, but also on their complexity and mutual arrangement.
The following result makes this concern precise and forms a key component of our analysis; its proof can be found in Section~\ref{sec:proof:hyperplane_sep}.
The notion of $(\narrow,\narrowmdist)$-linear separability used below is formally introduced in Definition~\ref{def:separable_refined}; for now, it is useful to think of a refinement of linear separability that captures how much a separating hyperplane for two sets can be perturbed, such that it is still separates the sets.
\begin{theorem}\label{thm:hyperplane_sep:general}
	There exists an absolute constant $C>0$ such that the following holds.
	
	For $\narrow\in [0,1]$, $\narrowmdist>0$, and $\Rad\geq 1$, let $\set^-, \set^+\subset \Rad \ball[2][d]$ be two $(\narrow,\narrowmdist)$-linearly separable sets\footnote{Note that the notation for $\set^-$ and $\set^+$ is generic here. In the proof of our main result, we will apply Theorem~\ref{thm:hyperplane_sep:general} with $\set^- \coloneqq \layersec\set^+$ and $\set^+ \coloneqq \layersec\set_l^-$.} and put $\marg \coloneqq (1-\narrow)\narrowmdist$.
	Let $\g \distributed \Normdistr{\vnull}{\I{d}}$ be a standard Gaussian vector and let $\sepbias$ be uniformly distributed on $[-\biasp,\biasp]$ for some $\biasp > 0$. For any $\hyppsep \gtrsim  \meanwidth{\set^+-\set^-}+\Rad$ with $\biasp\gtrsim \Rad\hyppsep\marg^{-1}$, the hyperplane $\hypp[\sepbias]{\g}$ $\hyppsep$-separates $\set^-$ from $\set^+$ with probability at least 
	\begin{equation}\label{eq:hyperplane_sep:general:prob}
		\tfrac{\hyppsep}{\biasp} \cdot \exp\big({-C} \cdot \hyppsep^2\marg^{-2} \cdot\log(4(1-\narrow)^{-1})\big).
	\end{equation}
\end{theorem}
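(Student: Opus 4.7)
The plan is to first condition on the Gaussian weight $\g$, reducing the separation event to a one-dimensional probability over the uniform bias $\sepbias$. Setting $u^+(\g) := \inf_{\x \in \set^+} \sp{\g}{\x}$ and $u^-(\g) := \sup_{\x \in \set^-} \sp{\g}{\x}$, the hyperplane $\hypp[\sepbias]{\g}$ $\hyppsep$-separates $\set^-$ from $\set^+$ exactly when $\sepbias$ lies in the interval $I(\g) := [\hyppsep - u^+(\g),\, -\hyppsep - u^-(\g)]$, so that
\begin{equation*}
\mathbb{P}[\sepbias \in I(\g) \mid \g] \;=\; \frac{|I(\g) \cap [-\biasp, \biasp]|}{2\biasp}.
\end{equation*}
It therefore suffices to identify an event in $\g$-space on which $I(\g)$ has length at least $2\hyppsep$ and is contained in $[-\biasp, \biasp]$: this yields a conditional probability of at least $\hyppsep/\biasp$, matching the prefactor of the claim.

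Let $\sepdir_0 \in \S^{d-1}$ be the witness direction from the $(\narrow, \narrowmdist)$-separability hypothesis and decompose $\g = \xi\, \sepdir_0 + \vec{h}$, with $\xi := \sp{\g}{\sepdir_0} \distributed \mathsf{N}(0,1)$ and $\vec{h}$ an independent Gaussian vector in $\sepdir_0^{\perp}$. The quantity $u^+(\g) - u^-(\g) = \inf_{(\x^+, \x^-)} \sp{\g}{\x^+ - \x^-}$ splits into a \emph{signal} contribution $\xi \cdot \inf_{(\x^+, \x^-)} \sp{\sepdir_0}{\x^+-\x^-} \geq 2\marg\,\xi$, controlled by the margin along $\sepdir_0$, and a \emph{noise} term $-W$ with $W := \sup_{\vec{w} \in \set^- - \set^+} \sp{\vec{h}}{\vec{w}}$. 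Since $\set^- - \set^+$ has diameter $\leq 2\Rad$ and mean width $\leq \meanwidth{\set^+ - \set^-}$, Borell--TIS yields $W \leq \meanwidth{\set^+ - \set^-} + O(\Rad\,s)$ with probability at least $1 - e^{-s^2/2}$; taking $s \asymp \hyppsep/\Rad$ and invoking the hypothesis $\hyppsep \gtrsim \meanwidth{\set^+ - \set^-} + \Rad$ forces $W \lesssim \hyppsep$ on an event of exponentially high probability. A parallel bound on $\sup_{\x \in \set^\pm} |\sp{\vec{h}}{\x}|$ controls $|u^\pm(\g)|$ by $|\xi|\Rad + O(\hyppsep)$.

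Forcing $\xi \geq \xi_0$ with $\xi_0 \asymp \hyppsep/\marg$ then gives $u^+(\g) - u^-(\g) \geq 2\marg\,\xi_0 - W \gtrsim \hyppsep$, hence $|I(\g)| \geq 2\hyppsep$; simultaneously $|u^\pm(\g)| \lesssim \Rad\hyppsep/\marg$, so the standing hypothesis $\biasp \gtrsim \Rad\hyppsep\marg^{-1}$ ensures $I(\g) \subseteq [-\biasp, \biasp]$. Gaussian anti-concentration gives $\mathbb{P}[\xi \geq \xi_0] \gtrsim \xi_0^{-1} e^{-\xi_0^2/2} \gtrsim e^{-C\hyppsep^2/\marg^2}$, and multiplying by the conditional factor $\hyppsep/\biasp$ reproduces the target bound up to the refinement in the exponent. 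The main obstacle is then to upgrade this exponent to $\exp(-C\hyppsep^2\marg^{-2}\log(4/(1-\narrow)))$ by fully exploiting the $(\narrow, \narrowmdist)$-definition: using only the margin $2\marg$ along the single direction $\sepdir_0$ produces the naive rate $\exp(-C\hyppsep^2/\marg^2)$, whereas $\sepdir_0$ itself enjoys the sharper margin $\narrowmdist = \marg/(1-\narrow)$ while all directions within a $\narrow$-neighborhood of $\sepdir_0$ retain margin $\marg$. Balancing the resulting tradeoff between the sharper angular alignment of $\g$ demanded by using the larger margin $\narrowmdist$ and the weaker alignment allowed by the cone argument with margin $\marg$---equivalently, optimizing over a logarithmic grid of intermediate margin scales indexed by $(1-\narrow)^{-1}$---should yield the $\log(4/(1-\narrow))$ correction, and executing this optimization cleanly is the most delicate bookkeeping step of the proof.
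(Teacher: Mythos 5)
Your core argument takes a genuinely different route from the paper, and it is essentially viable. The paper proves Theorem~\ref{thm:hyperplane_sep:general} by factorizing $\g=\G^\T\normgauss'$ with $\G\in\R^{k\times d}$ Gaussian and $\normgauss'$ uniform on $\S^{k-1}$, showing via the matrix deviation inequality (Lemma~\ref{lem:dimred}) that for $k\gtrsim \narrowmdist^{-2}(1-\narrow)^{-2}(\effdim{\set^+-\set^-}+1)$ the projected sets stay $(\tfrac{1+\narrow}{2},\tfrac{\narrowmdist}{2})$-linearly separable with probability $\tfrac12$, and then applying a uniform-direction separation result in $\R^k$ (Theorem~\ref{thm:hyperplane_sep:uniform}, via the spherical-cap estimate of Lemma~\ref{lem:uniform_distr}); the factor $\log(4(1-\narrow)^{-1})$ is exactly the price of the cap estimate in dimension $k\asymp\hyppsep^2\marg^{-2}$. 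Your decomposition $\g=\xi\,\sepdir+\vec{h}$ along the witness direction, with Borell--TIS absorbing the noise term $W$ into the assumption $\hyppsep\gtrsim\meanwidth{\set^+-\set^-}+\Rad$, bypasses the dimension reduction entirely and, once completed, yields $\tfrac{\hyppsep}{\biasp}\exp(-C\hyppsep^2\marg^{-2})$ \emph{without} any logarithmic factor, i.e.\ a slightly stronger statement.

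This is precisely where your final paragraph goes wrong: since $\narrow\in[0,1]$, we have $\log(4(1-\narrow)^{-1})\geq\log 4>1$, so the stated bound \eqref{eq:hyperplane_sep:general:prob} is \emph{weaker}, not stronger, than your ``naive'' rate. There is nothing to upgrade: your bound implies the theorem by simply enlarging the absolute constant $C$ (using $\hyppsep/\marg\gtrsim 1$, which follows from $\marg\leq\narrowmdist\leq 2\Rad\lesssim\hyppsep$, to absorb polynomial prefactors). The proposed ``optimization over a logarithmic grid of intermediate margin scales'' is a phantom step, and declaring it the delicate core of the proof while leaving it unexecuted is the one real defect of the write-up. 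Two technical points also need care when finishing your argument: (i) the Borell--TIS failure probability $\exp(-s^2/2)$ with $s\asymp\hyppsep/\Rad$ is in general exponentially \emph{larger} than the target probability $\exp(-C\hyppsep^2\marg^{-2})$ when $\marg\ll\Rad$, so a union bound fails; you must instead use that $W$ and $\sup_{\x\in\set^\pm}\abs{\sp{\vec{h}}{\x}}$ depend only on $\vec{h}$, which is independent of $\xi$, so that it suffices to have $\mathbb{P}(W\lesssim\hyppsep)\geq\tfrac12$. (ii) To get $\abs{u^\pm(\g)}\lesssim\Rad\hyppsep/\marg$ and hence $I(\g)\subset[-\biasp,\biasp]$ you must also bound $\xi$ from above, e.g.\ by restricting to $\xi\in[\xi_0,2\xi_0]$; and the margin of the inner product along $\sepdir$ is $\marg$, not $2\marg$ --- harmless constant adjustments, but they belong in a complete proof.
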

While the factor $\hyppsep / \biasp$ may become small due to the condition $\biasp\gtrsim \Rad\hyppsep\marg^{-1}$, the dominating term in \eqref{eq:hyperplane_sep:general:prob} is the exponential one. In fact, the central element of Theorem~\ref{thm:hyperplane_sep:general} is the mean width $\meanwidth{\set^+-\set^-}$, which dictates the severity of the exponential decay in \eqref{eq:hyperplane_sep:general:prob}.

An appropriate combination of Theorem~\ref{thm:hyperplane_sep:general} with a Chernoff bound will allow us to derive a statement of the form \eqref{eq:intro:proof:sep_layer2}.
With this at hand, it is then relatively straightforward to show that $\NN(\set^-) = \layersec{\NNlayer}(\NNlayer(\set^-))$ and $\NN(\set^+) = \layersec{\NNlayer}(\NNlayer(\set^+))$ are indeed linearly separable (see~Corollary~\ref{coro:lin_sep}).
Noteworthy is that the resulting margin and the number of required neurons~$\layersec{n}$ both inherit the exponential scaling from \eqref{eq:hyperplane_sep:general:prob}, which is reflected in all presented separation guarantees.
This observation particularly explains why the mean width appears as a natural measure of complexity for the data sets.

To the best of our knowledge, Theorem~\ref{thm:hyperplane_sep:general} is a new result and could be of independent interest: it concerns the fundamental question of when pairs of sets are likely to be separated by a random hyperplane and when not.
Perhaps not very surprisingly, the probability of success might scale poorly in the worst case, which is an inevitable consequence of the concentration of measure phenomenon.
But for highly structured (low-dimensional) sets, the situation can be much more benign;
finite point sets as considered in our analysis of the first layer are a good example.

\begin{remark}
	Using the definition of $(\narrow,\narrowmdist)$-linear separability (see Definition~\ref{def:separable_refined}), one can show that $\meanwidth{\set^+-\set^-} \lesssim \Rad \sqrt{\narrow d}$ holds in the setup of Theorem~\ref{thm:hyperplane_sep:general}.
	This general upper bound indicates that in the worst case, the probability of separation may decrease exponentially with the ambient dimension~$d$ (unless $\narrow \lesssim \tfrac{1}{d}$).
\end{remark}

\subsection{Related Literature}
\label{sec:intro:literature}

The two-step separation procedure underlying our proofs (see Section~\ref{sec:intro:proof}) is inspired by a construction of \citet{abb15}.
Their main result verifies that any two disjoint sets $\set^+, \set^- \subset \R^d$ can be made linearly separable by a \emph{deterministic} two-layer NN.
However, \citet{abb15} show a pure existence statement and their method is not feasible from an algorithmic perspective, since the selected weight parameters explicitly depend on the sets to be separated; furthermore, no informative bounds for the number of required neurons are provided.
By using random weights and suitable notions of complexity (namely mutual covering), we are able to derive much more practical separation guarantees, which eliminate the aforementioned shortcomings.
This achievement entails novel mathematical ingredients, most notably the separation capacity of random hyperplanes (see Theorem~\ref{thm:hyperplane_sep:general}) and uniform distance preservation by random ReLU-layers (see Theorem~\ref{thm:distance_preservation_ReLU}).

Below we will survey some works from the rich literature on random NNs that have notable conceptual similarities to our work. We are not aware of a comparable result that addresses the separation capacity of random NNs.

\paragraph{Approximation theory.} 
A very active line of research investigates to what extent random NNs are \emph{universal approximators} (e.g., see~\citealp{apvz14,sgt18,ys19,nnss20,hssv21} and the references therein). Specifically, one considers a class of real-valued functions on a domain in~$\R^d$ (e.g., continuous or Lipschitz functions), an approximation metric (typically the $L^2$- or $L^{\infty}$-norm), and a shallow NN consisting of a ReLU-layer with random weights followed by a linear layer with arbitrary weights (that may depend on the function to be approximated). The aforementioned works quantify which size of the random layer guarantees that the NN can reach a pre-specified approximation error for every function in the given class. These results feature an exponential bottleneck, in the sense that the width of the random layer needs to scale exponentially in terms of the data dimension $d$ to ensure accurate approximation (see also \citet{nnss20} for a refinement if the domain is a lower-dimensional smooth manifold).

In principle, one could try to approach Problem~\ref{prob:rnn-sep} by applying such an approximation result to a function $f:\R^d\to \R$ that takes values $+1$ and $-1$ on $\mathcal{X}^+$ and $\mathcal{X}^-$, respectively. However, existing approximation guarantees cannot certify a \emph{zero} approximation error on the two sets. In addition, even if an existing result would apply, it would lead to a more pessimistic statement that involves a random ReLU-layer whose width scales exponentially in terms of $d$, rather than the more refined, instance-specific complexity measures considered here.

\paragraph{Learning with random features.} 
The concept of random features was introduced by \citet{rr07} as a cheap computational alternative to kernel methods. The idea is to construct a random feature map such that inner products between random data features approximate kernel evaluations of the original data, provided that the feature dimension is high enough. A prime example are random Fourier features, which are designed to approximate the Gaussian kernel \citep{rr07}. Instead of a computationally expensive kernel method (e.g., kernel SVM), one can use a linear method (e.g., SVM) on the random features. A previous line of research has analyzed the generalization error of such methods, thereby quantifying the feature dimension that guarantees a performance on par with the associated kernel method (e.g., see~\citealp{rr08,rr17,bac17,sgt18,ltos19} and~\citealp{lhcs20} for a survey). Several of these results particularly apply when the feature map is a random ReLU-layer. Although these works indicate that the data is transformed in a beneficial way for learning, they do not have a direct connection to Problem~\ref{prob:rnn-sep}. Perhaps the closest connection can be found in \citealp{CaG19RF}, where it is shown that if the random ReLU feature function class from \citealp{rr08} can separate a finite set of data on the sphere, then a sufficiently wide random ReLU layer (without bias) can make the same data linearly separable with high probability, see \citealp[Asm.~4.10 and Lem.~B.2]{CaG19RF}. It is, however, unclear how to extend this statement to infinite datasets and how this separability assumption relates to the Euclidean separability assumption in Problem~\ref{prob:rnn-sep}.

\paragraph{Neural tangent kernels and mean field regime.} 
An intriguing finding of deep learning theory is that training randomly initialized NNs via gradient descent in the infinite-width limit is equivalent to kernel gradient descent with a specific type of kernel, called the neural tangent kernel (NTK); see \citealp{jgh18}.
The behaviour in the infinite-width limit has partially motivated a line of work on the analysis of (stochastic) gradient descent for training NNs in the overparametrized regime, starting from a random initialization, e.g., see \citealp{aro19,os19,yy18,du+19,als19,ZoG19,CaG19} and the references therein. These works have roughly shown that (S)GD can achieve an arbitrarily small training (and sometimes even generalization) error if the NN is wide enough and, moreover, the (S)GD iterates remain close to the initialization. The required width of the NN is implicitly or explicitly linked to the NTK. Most closely connected to our work are \citealp{NCS19,jt20,CCZG19}, which explicitly link the required width to the separation capacity of the infinite-width NTK-feature map at initialization.
As part of the analysis it is shown that if the training data satisfies a separability condition in the reproducing kernel Hilbert space induced by the infinite-width NTK, 
then the NTK-feature map associated with the finite-width random NN at initialization makes the training data linearly separable with high probability, e.g., see \citealp[Asm.~2.1, Lem. 2.3 \& Sec.~5]{jt20}. These results bear resemblance with Problem~\ref{prob:rnn-sep}, but there are several important differences. While we are primarily interested in the separation of \emph{infinite} data sets, these works focus on finite-sample scenarios. It is not clear how the latter could be extended accordingly. 
Moreover, note that the NTK-feature map associated with a finite-width random NN is not a random NN itself.\footnote{If $\NN_{\vec{\theta}} : \R^d \to \R$ is a NN with (randomly initialized) weights $\vec{\theta}$, then the associated NTK-feature map is $\x \mapsto \frac{\partial \NN_{\vec{\theta}}(\x)}{\partial \vec{\theta}}$.} Therefore, the aforementioned results do not address Problem~\ref{prob:rnn-sep} as such and the bounds on the network width needed to achieve linear separability are not directly comparable to ours.

Let us mention for completeness that the connection with the NTK arises due to our choice of scaling in the ReLU layers. A different scaling leads to the mean field regime \citep{mmn18}. The key insight of \citealp{mmn18} is that in the infinite-width limit, the gradient flow is captured by a specific non-linear partial differential equation (PDE).
Due to non-asymptotic bounds on the accuracy of this measure-valued PDE model, new convergence results for (S)GD can be derived. The connection between the kernel and mean field regimes is explained in detail in \citealp[Sec.~4 and App.~H]{mmm19}.

\paragraph{Random embeddings.} 
A key component of our analysis is the capability of random ReLU-layers to preserve Euclidean distances with high probability (see Theorem~\ref{thm:distance_preservation_ReLU}).
This finding is related to results on non-linear random embeddings, which play a major role in the field of quantized compressed sensing (e.g., see~\citealp{jlbb13,pv14,or15,cxj17,dm18,dir19,xj18,dms22a,dms22b}).
In particular, our choice of the bias vector (see Definition~\ref{def:intro:random-layer}) is inspired by \emph{dithering}, a technique that has already proven useful in various signal reconstruction problems \citep{jc17,dm18b,dm18,xj18,jmps21}.
A remarkable new (and somewhat counterintuitive) insight of the present work is that for appropriate non-linearities like the ReLU-activation, desirable distance preservation properties and data separation can be achieved simultaneously.

Theorem~\ref{thm:distance_preservation_ReLU} is new in its own right and improves on a previous result by \citet{gsb18}, see also \citet{gsb20}. It is also closely related to a work of \citet{ab19}, who have investigated the capability of a random ReLU-layer as in Definition~\ref{def:intro:random-layer} (but with bias $\Bias=\vnull$) to preserve Euclidean norms.

\paragraph{Rare eclipse problem.}
Finally, we point out an interesting connection between the separation capacity of random hyperplanes (see Theorem~\ref{thm:hyperplane_sep:general}) and the rare eclipse problem studied by \citet{bmr17,cxj17}. In both cases, the goal is to use a random transform $T:\R^d\to \R^k$ to map two linearly separable sets $\set^+, \set^-\subset \R^d$ into a lower dimensional space $\R^k$ such that the following holds with a certain probability $p$:
\begin{equation}\label{eq:rareeclipse}
	T(\set^+)\intersec T(\set^-)=\emptyset.
\end{equation}
More specifically, the rare eclipse problem asks how small $k$ can become such that \eqref{eq:rareeclipse} holds with probability at least $p=1-\eta$, where $\eta>0$ is fixed but can be arbitrarily small. Using Gordon's Escape Through a Mesh Theorem \citep{gor88}, \citet{bmr17} have shown that if~$\set^+$ and~$\set^-$ are disjoint, closed, and convex sets, then $k\gtrsim \effdim{\cone{\set^+-\set^-}\intersec \S^{d-1}}+\log(\eta^{-1})$ ensures \eqref{eq:rareeclipse} with probability at least $1-\eta$, where $T\in \R^{k\times d}$ is a standard Gaussian random matrix. 

In contrast, Theorem~\ref{thm:hyperplane_sep:general} considers a map of the form $T(\x)=\sp{\g}{\x}+\sepbias$, where $\g$ is a standard Gaussian random vector and $\sepbias\in [-\biasp, \biasp]$ uniformly distributed for $\biasp>0$ large enough.
If $\set^+, \set^-\subset \Rad\ball[2][d]$ are $(\narrow, \narrowmdist)$-linearly separable (with some minimal distance), then \eqref{eq:rareeclipse} holds with probability at least $\Rad\biasp^{-1}\exp(-C\effdim{\set^+-\set^-})$, where $C>0$ only depends on $\narrow$,~$\narrowmdist$, and~$\Rad$.
Hence, Theorem~\ref{thm:hyperplane_sep:general} guarantees disjoint sets even for a single coordinate ($k=1$), however at the expense of a worse probability of success.
Remarkably, the Gaussian mean width and the difference set $\set^+-\set^-$ play a key role both in the rare eclipse problem and Theorem~\ref{thm:hyperplane_sep:general}.

\subsection{Overview and Notation}
\label{sec:intro:notation}

The rest of the article is organized as follows: In Section~\ref{sec:main}, we present our main result, Theorem~\ref{thm:main}, based on the notion of mutual complexity (see Definition~\ref{def:mutual_covering} and~\ref{def:mutual_complexity}).
The next two sections are then devoted to our main mathematical tools, namely separation by random hyperplanes (Section~\ref{sec:separation}) and distance preservation (Section~\ref{sec:distance}).
Finally, the proof of Theorem~\ref{thm:main} is given in Section~\ref{sec:proof:main}, followed by a derivation of its variants (Theorem~\ref{thm:memorization},~\ref{thm:eucl_balls}, and~\ref{thm:uniform_covering}) in Section~\ref{sec:special_cases}.

Before proceeding, let us fix some standard notations and conventions that are commonly used in this paper.
The letters $c$ and $C$ denote absolute (positive) constants, whose values may change from line to line. We speak of an \emph{absolute constant} if its value does not depend on any other involved parameter.
If an inequality holds up to an absolute constant $C$, we usually write $A \lesssim B$ instead of $A \leq C \cdot B$. The notation $A \asymp B$ is a shortcut for $A \lesssim B \lesssim A$.

For $d \in \N$, we set $[d] \coloneqq \{1, \dots, d\}$. The \emph{cardinality} of an index set $\setind \subset [d]$ is denoted by~$\cardinality{\setind}$. Vectors and matrices are denoted by lower- and uppercase boldface letters, respectively.
The \mbox{$i$-th} entry of a vector $\vgen \in \R^d$ is denoted by $[\vgen]_i$, or simply by $z_i$ if there is no danger of confusion.
We write $\I{d} \in \R^{d \times d}$ and $\vnull \in \R^d$ for the \emph{identity matrix} and the \emph{zero vector} in $\R^d$, respectively.
For $1 \leq q \leq \infty$, we denote the \emph{$\l{q}$-norm} on $\R^d$ by $\lnorm{\cdot}[q]$ and the associated closed \emph{unit ball} by~$\ball[q][d]$.
The \emph{Euclidean unit sphere} is given by $\S^{d-1} \coloneqq \{ \vgen \in \R^d \suchthat \lnorm{\vgen}[2] = 1 \}$, and we also set $\S_+^{d-1} \coloneqq \S^{d-1} \intersec [0, \infty)^d$.

Let $\set, \set' \subset \R^d$ and $\vgen \in \R^d$. The \emph{linear cone} generated by~$\set$ is denoted by $\cone{\set} \coloneqq \{v \tilde{\vgen} \suchthat \tilde{\vgen} \in \set, v \geq 0 \}$.
The \emph{Minkowski difference} between $\set$ and $\set'$ is defined by $\set - \set' \coloneqq \{ \vgen_1 -\nobreak \vgen_2 \suchthat \vgen_1 \in \set, \vgen_2 \in \set' \}$, and we use the shortcut $\set - \vgen \coloneqq \set - \{\vgen\}$.
The \emph{distance} between~$\vgen$ and~$\set$ is $\distance(\vgen, \set) \coloneqq \inf_{\tilde{\vgen} \in \set} \lnorm{\vgen - \tilde{\vgen}}$. Moreover, the \emph{diameter} and \emph{radius} of $\set$ are denoted by $\diam(\set) \coloneqq \sup_{\vgen_1, \vgen_2 \in \set} \lnorm{\vgen_1 - \vgen_2}$ and $\radius(\set) \coloneqq \sup_{\tilde{\vgen} \in \set} \lnorm{\tilde{\vgen}}$, respectively.

The \emph{$L^q$-norm} of a real-valued random variable $g$ is given by $\norm{g}_{L^q} \coloneqq (\mean[\abs{g}^q])^{1/q}$. We call $g$ \emph{sub-Gaussian} if $\normsubg{g} \coloneqq \inf\big\{v > 0 \suchthat \mean[\exp(\abs{g}^2 / v^2)] \leq 2 \big\} < \infty$; see \citet[Chap.~2~\&~3]{ver18} for more details on sub-Gaussian random variables and their properties.
Finally, we write $\g \distributed \Normdistr{\vnull}{\I{d}}$ if $\g$ is a \emph{standard Gaussian random vector} in $\R^d$.

The \emph{ceiling} and \emph{floor function} of $z \in \R$ are denoted by $\ceil{z}$ and $\floor{z}$, respectively.

\section{Main Separation Result and Mutual Complexity}
\label{sec:main}

This section presents our most general solution to Problem~\ref{prob:rnn-sep}, containing all guarantees from the introduction (Theorem~\ref{thm:memorization},~\ref{thm:eucl_balls}, and~\ref{thm:uniform_covering}) as special cases.
To formulate the main result, Theorem~\ref{thm:main}, we require two important definitions formalizing the idea of \emph{mutual complexity} between two sets.
The first one can be seen as a refinement of the uniform covering introduced in \eqref{eq:intro:covering_number}:	
\begin{definition}[Mutual covering]\label{def:mutual_covering}
	Let $\set^+, \set^- \subset \R^d$ and $\biasp > 0$. 
	
	We call $\setcen^+ \coloneqq \{ \cent_1^+, \dots, \cent_{N^+}^+ \} \subset \R^d$ and $\setcen^- \coloneqq \{ \cent_1^-, \dots, \cent_{N^-}^- \} \subset \R^d$ a \emph{$\biasp$-mutual covering} for $\set^+$ and $\set^-$ if there exist $\rad_1^+, \dots, \rad_{N^+}^+ \geq 0$ and $\rad_1^-, \dots, \rad_{N^-}^- \geq 0$ such that
	\begin{thmproperties}
		\item\label{def:mutual_covering:cover}
		the sets $\set_j^+ \coloneqq \set^+ \intersec\ball[2][d](\cent_j^+,\rad_j^+)$ for $j \in [N^+]$, and $\set_l^- \coloneqq \set^- \intersec\ball[2][d](\cent_l^-,\rad_l^-)$ for $l \in [N^-]$, cover $\set^+$ and $\set^-$, respectively;
		\item\label{def:mutual_covering:radius}
		$\rad_j^+ \leq \biasp^{-1} \distsq{\cent_j^+}{\setcen^-}$ for all $j \in [N^+]$, and $\rad_l^- \leq \biasp^{-1}\distsq{\cent_l^-}{\setcen^+}$ for all $l \in [N^-]$.
	\end{thmproperties}
	Furthermore, the sets $\set_1^+, \dots, \set_{N^+}^+ \subset \set^+$ and $\set_1^-, \dots, \set_{N^-}^- \subset \set^-$ are referred to as the \emph{components} of the covering.
\end{definition}
\begin{figure}[t]
	\centering
	\includegraphics[width=0.6\linewidth]{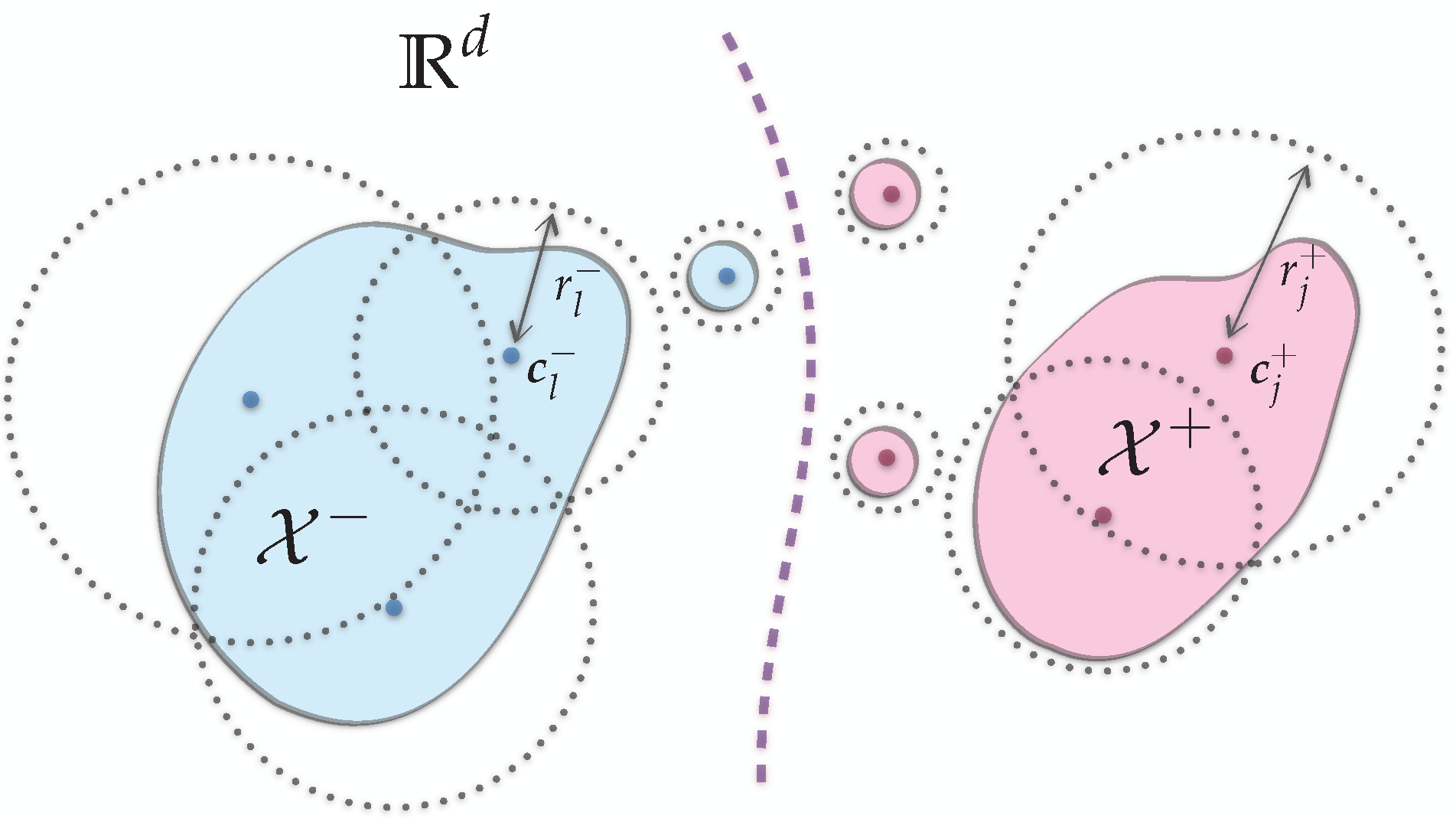}
	\caption{\textbf{Mutual covering.} This figure illustrates the geometric idea underlying Definition~\ref{def:mutual_covering}: those parts of $\set^-$ and $\set^+$ further away from the decision boundary may be covered by larger, and therefore fewer, Euclidean balls.}
	\label{fig:main:mutual_covering}
\end{figure}
Although the notion of $\biasp$-mutual covering involves some technicalities, it is conceptually simple:
We allow $\set^-$ and $\set^+$ to be covered by Euclidean balls of any radius, as long as the balls corresponding to different classes do not get too close in the sense of condition~\ref{def:mutual_covering:radius}.
This constraint is also consistent with the setting of Theorem~\ref{thm:uniform_covering}, which is obtained by choosing $\rad_j^+ = \rad_l^- = c\mdist^2/\biasp$.
However, Definition~\ref{def:mutual_covering} is much more flexible and accounts for the mutual arrangement of the classes.
For example, those parts of $\set^-$ that are far away from the decision boundary may be covered by a few large balls, while smaller radii are only needed for data closer to $\set^+$; see Figure~\ref{fig:main:mutual_covering} for an illustration.
In general, this strategy leads to more efficient coverings and motivates the following geometric complexity parameters:
\begin{definition}[Mutual complexity]\label{def:mutual_complexity}
	Let $\set^+, \set^- \subset \R^d$ and $\mdist, \biasp > 0$.
	
	We say that~$\set^+$ and~$\set^-$ have \emph{$(\Rad, \mdist, \biasp)$-mutual complexity} $(N^+, N^-, \mwloc^+, \mwloc^-)$ if there exists a $\biasp$-mutual covering $\setcen^+ =\nobreak \{ \cent_1^+, \dots, \cent_{N^+}^+ \}$ and $\setcen^- = \{ \cent_1^-, \dots, \cent_{N^-}^- \}$ for $\set^+$ and~$\set^-$ such that
	\begin{thmproperties}
		\item
		$\displaystyle\max_{j \in [N^+]} \meanwidth{\set_j^+} \leq \mwloc^+$ and $\displaystyle\max_{l \in [N^-]} \meanwidth{\set_l^-} \leq \mwloc^-$;
		\item
		$\setcen^+, \setcen^-\subset \Rad\ball[2][d]$ are $\mdist$-separated.
	\end{thmproperties}
\end{definition}
It is useful to keep in mind that the covering numbers~$N^+$ and~$N^-$ reflect the \emph{global} size of~$\set^+$ and~$\set^-$, respectively, while $\mwloc^+$ and $\mwloc^-$ should be viewed as \emph{local} complexity measures (cf.~\eqref{eq:intro:mean_width_localized}).
In contrast, the parameters $(\Rad, \mdist, \biasp)$ are not instance-specific and concern the general problem setting.

We are now ready to state the main result of this work:
\begin{theorem}[Main result]\label{thm:main}
	There exist absolute constants $c, C, C' > 0$ such that the following holds.
	
	For $\Rad \geq 1$, let $\set^-, \set^+\subset \Rad\ball[2][d]$ be $\mdist$-separated and let $\biasp \geq e\mdist$ be such that $\biasp \gtrsim\nobreak  \Rad\sqrt{\log(\biasp/\mdist)}$.
	Furthermore, let~$\set^+$ and~$\set^-$ have $(\Rad, \mdist, C'\biasp)$-mutual complexity $(N^+, N^-,\allowbreak \mwloc^+, \mwloc^-)$.
	We assume that $\NNlayer \colon \R^d\to \R^{n}$ and $\NNlayersec \colon \R^{n}\to \R^{\layersec{n}}$ are two (independent) random ReLU-layers with maximal biases $\biasp, \layersec{\biasp} \geq 0$, respectively, such that
	\begin{align}
		n &\gtrsim  \effdim{\cone{\set^--\set^-}\intersec\S^{d-1}} + \effdim{\cone{\set^+-\set^+}\intersec\S^{d-1}}, \\*
		n &\gtrsim \big(\tfrac{\biasp}{\mdist}\big)^{8} \cdot \Big( \biasp^{-2}\big(\mwloc^- + \mwloc^+\big)^2 + \log (2N^-N^+/\probsuccess) \Big)
		\label{eq:main:first_layer}
	\end{align}
	and
	\begin{align}
		\layersec{\biasp}&\gtrsim  \big(\tfrac{\biasp}{\mdist}\big)^{4} \cdot \big(\mwloc^-+\meanwidth{\set^+}+\biasp\big), \\ 
		\layersec{n} &\gtrsim \big(\tfrac{\layersec{\biasp}}{\mwloc^-+\meanwidth{\set^+}+\biasp}\big)\cdot \exp\Big(C \cdot \big(\mwloc^-+\meanwidth{\set^+}+\biasp\big)^2 \cdot \biasp^6\cdot \mdist^{-8} \cdot \log(\biasp/\mdist)\Big)\cdot\log(N^-/\probsuccess). \\
		\label{eq:main:second_layer}
	\end{align}
	Then, given the two-layer random NN $\NN \colon \R^d\to \R^{\layersec{n}}, \ \x \mapsto \NNlayersec(\NNlayer(\x))$, with probability at least $1-\probsuccess$, the sets $\NN(\set^-), \NN(\set^+) \subset \layersec{\biasp} \ball[2][\layersec{n}]$ are linearly separable with margin
	\begin{equation}\label{eq:main:margin}
		c\cdot\tfrac{(\mwloc^-+\meanwidth{\set^+}+\biasp)^2}{\layersec{\biasp}}\cdot 
		\exp\Big(-C \cdot \big(\mwloc^-+\meanwidth{\set^+}+\biasp\big)^2\cdot  \biasp^6\cdot \mdist^{-8}\cdot  \log(\biasp/\mdist)\Big).
	\end{equation}
\end{theorem}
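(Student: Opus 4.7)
My plan is to follow the two-stage strategy outlined in Section~\ref{sec:intro:proof}, but with the $\biasp$-mutual covering replacing the uniform covering at scale $c\mdist^2/\biasp$, so that the per-component radii $\rad_l^-,\rad_j^+$ are now allowed to grow quadratically with the distance of each center to the opposite class.

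\textbf{Stage 1 (first layer $\NNlayer$).} Using the $(\Rad,\mdist,C'\biasp)$-mutual complexity, I would decompose $\set^-=\bigunion_{l\in[N^-]}\set_l^-$, $\set^+=\bigunion_{j\in[N^+]}\set_j^+$ with centers $\setcen^\pm$. Applied to the finite sets $\setcen^\pm$, a hyperplane-tessellation argument (essentially Theorem~\ref{thm:lin_sep_random_ReLU/Thres_NN_finite_1st_layer}, relying on Theorem~\ref{thm:prob_single_hyperplane_separates_two_points} for the per-pair probability plus a Chernoff bound over $N^-N^+$ pairs) shows that with probability at least $1-\probsuccess/2$ and under the width condition $n\gtrsim(\biasp/\mdist)^8\log(2N^-N^+/\probsuccess)$, for every $l\in[N^-]$ there is a hyperplane $\hypp[\sepbias_l]{\sepdir_l}$ in $\R^n$ separating $\NNlayer(\cent_l^-)$ from $\NNlayer(\setcen^+)$ with margin $\tilde{\marg}\asymp \mdist^2/\biasp$ (and a symmetric statement with roles exchanged). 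Then I would invoke Theorem~\ref{thm:distance_preservation_ReLU} to extend this from centers to full components: the extra width $n\gtrsim \effdim{\cone{\set^--\set^-}\intersec\S^{d-1}}+\effdim{\cone{\set^+-\set^+}\intersec\S^{d-1}}$ together with $n\gtrsim(\biasp/\mdist)^8\biasp^{-2}(\mwloc^-+\mwloc^+)^2$ in \eqref{eq:main:first_layer} guarantees that $\NNlayer$ preserves $\l{2}$-distances on each component up to an error $\ll \tilde{\marg}$. Crucially, condition \ref{def:mutual_covering:radius} pairs each $\rad_l^-\lesssim (C'\biasp)^{-1}\distsq{\cent_l^-}{\setcen^+}$ against the margin budget at $\cent_l^-$, so the $\tilde{\marg}$-separation survives uniformly. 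The outcome is a $(\narrow,\narrowmdist)$-linear separability (Definition~\ref{def:separable_refined}) of $\NNlayer(\set_l^-)$ from $\NNlayer(\set^+)$ in $\R^n$ for every $l\in[N^-]$, with $(1-\narrow)\narrowmdist\asymp \mdist^2/\biasp$.

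\textbf{Stage 2 (second layer $\NNlayersec$).} Fix $l\in[N^-]$ and write $\layersec{\set}_l^-\coloneqq\NNlayer(\set_l^-)$, $\layersec{\set}^+\coloneqq \NNlayer(\set^+)$. Each coordinate of $\NNlayersec$ is a random hyperplane of the exact form treated in Theorem~\ref{thm:hyperplane_sep:general} (applied with the roles swapped, namely $\set^-\coloneqq\layersec{\set}^+$, $\set^+\coloneqq\layersec{\set}_l^-$). To apply it, I must bound $\meanwidth{\layersec{\set}^+-\layersec{\set}_l^-}$ and a common ambient radius. Since the ReLU is $1$-Lipschitz, the standard Dudley/contraction-type estimates (combined with the $\sqrt{2/n}$ normalization and the uniform bias of scale $\biasp$) produce $\meanwidth{\layersec{\set}_l^-}\lesssim \mwloc^-+\biasp$ and $\meanwidth{\layersec{\set}^+}\lesssim \meanwidth{\set^+}+\biasp$, as well as $\radius(\layersec{\set}_l^-\union\layersec{\set}^+)\lesssim \Rad+\biasp$. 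Choosing the threshold $\hyppsep\asymp \mwloc^-+\meanwidth{\set^+}+\biasp$ and using $\marg\asymp\mdist^2/\biasp$ in \eqref{eq:hyperplane_sep:general:prob} yields a per-coordinate success probability
\begin{equation*}
p\gtrsim \tfrac{\hyppsep}{\layersec{\biasp}}\exp\bigl(-C(\mwloc^-+\meanwidth{\set^+}+\biasp)^2\biasp^6\mdist^{-8}\log(\biasp/\mdist)\bigr),
\end{equation*}
which is exactly the exponential factor appearing in the margin \eqref{eq:main:margin} and in the width condition \eqref{eq:main:second_layer}. The condition $\layersec{\biasp}\gtrsim (\biasp/\mdist)^4(\mwloc^-+\meanwidth{\set^+}+\biasp)$ is precisely what is needed to make the hypothesis $\biasp\gtrsim \Rad\hyppsep\marg^{-1}$ of Theorem~\ref{thm:hyperplane_sep:general} hold.

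\textbf{Stage 3 (Chernoff and global separator).} A Chernoff bound over the $\layersec{n}$ coordinates, with the prescribed width $\layersec{n}\gtrsim p^{-1}\log(N^-/\probsuccess)$, ensures that for every fixed $l\in[N^-]$ a positive fraction of coordinates $i$ satisfy simultaneously $[\NNlayersec(\NNlayer(\set^+))]_i=0$ (thanks to ReLU clipping on the "negative side" of the separating half-space) and $[\NNlayersec(\NNlayer(\set_l^-))]_i\gtrsim \hyppsep/\sqrt{\layersec{n}}$. A union bound over $l\in[N^-]$ eats up another $\probsuccess/2$ of the probability budget. Invoking Corollary~\ref{coro:lin_sep} (aggregating these coordinate-wise separations into a single linear separator via $\sepdir\propto \sum_l \indcoeff{\setind_l}/\sqrt{\abs{\setind_l}}$-type construction, plus the analogous symmetric step with the roles of $\set^-,\set^+$ swapped) produces one hyperplane in $\R^{\layersec{n}}$ with the claimed margin.

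\textbf{Main obstacle.} The delicate part is Stage 2: showing that the first-layer geometry of Figure~\ref{fig:intro:first_layer} is not only qualitatively preserved but that the \emph{mean width} $\meanwidth{\layersec{\set}_l^-\!-\!\layersec{\set}^+}$ (which controls the exponential decay in Theorem~\ref{thm:hyperplane_sep:general} and hence the final margin) is tightly bounded in terms of the original $\mwloc^-$, $\meanwidth{\set^+}$, and $\biasp$, \emph{uniformly over $l$}. Bookkeeping the probability budget across the two stages, the union bound over $N^-$ components, and the symmetric statement for the roles exchanged — while keeping all exponents in $\biasp/\mdist$ matched between \eqref{eq:main:first_layer}, \eqref{eq:main:second_layer}, and \eqref{eq:main:margin} — is the other technically burdensome piece.
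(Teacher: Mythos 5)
Your outline deliberately retraces the paper's own two-stage route (mutual covering plus Theorem~\ref{thm:lin_sep_random_ReLU/Thres_NN_finite_1st_layer}/Theorem~\ref{thm:distance_preservation_ReLU} for the first layer, Theorem~\ref{thm:hyperplane_sep:general} per coordinate of the second layer, then Chernoff and Corollary~\ref{coro:lin_sep}), and most of the bookkeeping matches the actual proof. The genuine gap is precisely the step you label the \enquote{main obstacle} and then settle by assertion: the bound $\meanwidth{\NNlayer(\set_l^-)-\NNlayer(\set^+)}\lesssim \mwloc^-+\meanwidth{\set^+}$, valid \emph{uniformly over $l$ and on a first-layer event compatible with the conditional application of Theorem~\ref{thm:hyperplane_sep:general}}, does not follow from \enquote{standard Dudley/contraction-type estimates} alone. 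After conditioning on $(\W,\Bias)$, the mean width of the image is a random quantity; a per-$l$ concentration bound plus a union bound over the $N^-$ components would either degrade with $N^-$ or require extra conditions. The paper resolves this with Lemma~\ref{lem:Gaussian_width_after_1_layer}: Gaussian (Talagrand) contraction removes the ReLU and the bias (the bias contributes nothing, since the mean width is translation invariant -- so the \enquote{$+\biasp$} in your bound is not where $\biasp$ enters; it enters only through the radius requirement $\hyppsep\gtrsim\meanwidth{\cdot}+\Rad$ with $\Rad\asymp\biasp$ after Corollary~\ref{coro:distance_preservation_ReLU:norm}), reducing matters to $\meanwidth{\tfrac{1}{\sqrt{n}}\W\set_l^-}$; then a matrix deviation inequality gives $\sup_{\x\neq\x'\in \set^-}\lnorm{\tfrac{1}{\sqrt{n}}\W(\x-\x')}/\lnorm{\x-\x'}\leq 2$ with high probability provided $n\gtrsim \effdim{\cone{\set^--\set^-}\intersec\S^{d-1}}$, and via Sudakov--Fernique this single event yields $\meanwidth{\tfrac{1}{\sqrt{n}}\W\set'}\leq 2\meanwidth{\set'}$ for \emph{every} subset $\set'\subset\set^-$ simultaneously, hence uniformly over $l$ with no union bound. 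This is in fact the purpose of the first condition in \eqref{eq:main:first_layer}, which you instead assign to the distance-preservation step; that step only needs the localized widths $\effdim{\set_l^-},\effdim{\set_j^+}$ together with the $\log(2N^-N^+/\probsuccess)$ term, i.e., the second condition in \eqref{eq:main:first_layer}, with accuracy measured against the \emph{local} margin $\asymp\biasp^{-1}\distance^2(\cent_l^-,\setcen^+)$ rather than the global $\mdist^2/\biasp$.

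Two smaller corrections. First, no symmetric step is needed in your Stage 3: applying Corollary~\ref{coro:lin_sep} with the roles swapped (coordinates vanish on all of $\NNlayer(\set^+)$ and exceed $\hyppsep$ on each $\NNlayer(\set_l^-)$, and $\NNlayer(\set^-)=\bigunion_{l\in[N^-]}\NNlayer(\set_l^-)$) already produces a single separating hyperplane with the claimed margin; moreover the separator is the normalized indicator of the set of coordinates vanishing on the whole of $\NNlayer(\set^+)$, not a sum of per-$l$ indicators. Second, the conclusion $\NN(\set^-),\NN(\set^+)\subset\layersec{\biasp}\ball[2][\layersec{n}]$ is part of the statement and needs its own event: the paper obtains it from Corollary~\ref{coro:distance_preservation_ReLU:norm} applied to the second layer, using $\meanwidth{\NNlayer(\set^\pm)}\lesssim\meanwidth{\set^\pm}$ on the same event as above together with Lemma~\ref{lem:Gaussian_width_union} to bound $\meanwidth{\set^-}\leq \mwloc^-+C\Rad\sqrt{\log N^-}$, and checks that \eqref{eq:main:second_layer} covers the required width; your probability budget does not account for this.
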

Despite a strong resemblance to Theorem~\ref{thm:uniform_covering}, the above result entails several important improvements.
First, the exponential terms in \eqref{eq:main:second_layer} and \eqref{eq:main:margin} only depend on the localized mean width~$\mwloc^-$, but not the covering number~$N^-$.
Hence, the global size of $\set^-$ does not have any (negative) impact here.
The situation is different for $\set^+$, whose complexity is still captured by $\meanwidth{\set^+}$.
In fact, the following adaption of \eqref{eq:intro:mean_width_localized} clarifies the role of~$N^+$:
\begin{equation}\label{eq:main:mean_width_localized}
	\meanwidth{\set^+} \lesssim  \mwloc^+ + \Rad \sqrt{\log N^+}.
\end{equation}
The aforementioned asymmetry in Theorem~\ref{thm:main} becomes especially useful when the set~$\set^+$ is relatively ``small'' compared to $\set^-$.
A prototypical example in this regard is a low-complexity set ($= \set^+$), say a small Euclidean ball, which is surrounded by a hypersphere ($= \set^-$); see Figure~\ref{fig:main:hypersphere} for an illustration.
\begin{figure}[t]
	\centering
	\includegraphics[width=0.35\linewidth]{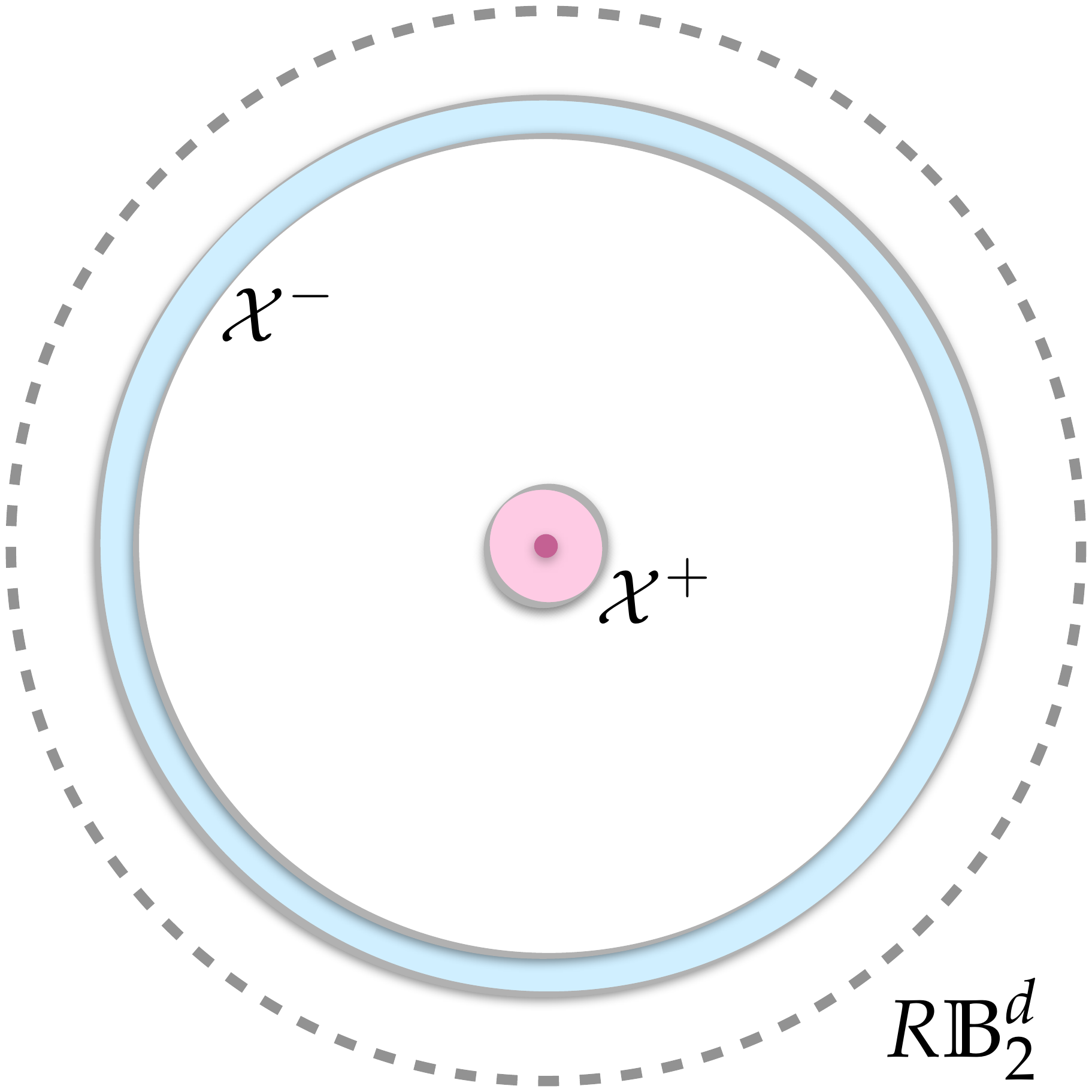}
	\caption{\textbf{An example of ``asymmetric complexity'' in the context of Theorem~\ref{thm:main}.} The set $\set^-$ corresponds to a thin hypersphere (say $\S^{d-1}$) around the origin, surrounding a small concentric ball $\set^+=\rad\ball[2][d]$. If $\rad\lesssim 1 / \sqrt{d}$, then $\mwloc^-$ and $\meanwidth{\set^+}$ are of constant order. Crucially, the covering number $N^-$, which scales exponentially in $d$, has no detrimental effect on the condition \eqref{eq:main:second_layer}.}
	\label{fig:main:hypersphere}
\end{figure}

Another distinctive feature of Theorem~\ref{thm:main} is the usage of \emph{mutual} complexity.
To understand its merits over the uniform covering considered in Theorem~\ref{thm:uniform_covering}, it is worth revisiting the scenario of Figure~\ref{fig:main:mutual_covering}: while the largest portion of the two classes is away from the (\mbox{$\mdist$-separated}) decision boundary, only a few \enquote{outliers} are close to it. 
Thus, a uniform covering would preset a very small radius (at the order $\asympfaster{\biasp^{-1}\mdist^2}$), which is appropriate for the outlier part but inefficient for the remaining bulk; this would lead to unnecessarily large covering numbers and thereby to poor complexity bounds (cf.~\eqref{eq:main:mean_width_localized}).
In contrast, our mutual covering strategy is flexible enough to handle such data configurations.
Therefore, Theorem~\ref{thm:main} indeed presents an \emph{instance-specific} solution to Problem~\ref{prob:rnn-sep}, including a variant of outlier robustness.

\begin{remark}[Possible extensions]\label{rmk:main:extend}
	For the sake of clarity, we have omitted some possible variations and generalizations of Theorem~\ref{thm:main}, which are however relatively straightforward to implement:
	\begin{rmklist}
		\item\label{rmk:main:extend:symmetry}
		\emph{Symmetry.} As discussed above, the asymmetric way of measuring complexity in Theorem~\ref{thm:main} can be advantageous in certain situations. On the other hand, it is obvious that the roles of~$\set^-$ and~$\set^+$ are interchangeable.
		Hence, Theorem~\ref{thm:main} could be \enquote{symmetrized} in this respect by a simple union bound argument.
		\item\label{rmk:main:extend:activation}
		\emph{Non-linear activation.} The considered random network design is tailored to the ReLU-activation (see Definition~\ref{def:intro:random-layer}). Nevertheless, our proof strategy is applicable to other functions as well, e.g., the thresholding activation.
		This might involve a slight adaption of Definition~\ref{def:mutual_covering}\ref{def:mutual_covering:radius} and lead to a different scaling of $\mdist$ and $\biasp$ in Theorem~\ref{thm:main}, but the qualitative statement remains valid.
		\item\label{rmk:main:extend:multiclass}
		\emph{Multiclass classification.} While we have focused on binary labels for the sake of simplicity, our main results can readily be extended to categorical data using a simple one-vs-rest strategy. Assume we are given data from $K$ different classes, say $\set^1, \set^2, \dots, \set^K \subset \R^d$. Then, for any $l\in [K]$, Theorem~\ref{thm:main} implies that a sufficiently large random NN separates $\set^+ := \set^l$ and $\set^- := \bigunion_{k \in [K]\setminus \{l\}} \set^k$ with high probability.
			Taking the union bound over these $K$ events, we conclude that with high probability a single, large random NN $\NN$ makes each individual set $\NN(\set^1), \dots, \NN(\set^K)$ linearly separable from the remaining ones. Analogously to the binary case, this separation property allows us to train a standard one-vs-rest SVM classifier on the transformed data sets.
	\end{rmklist}
\end{remark}

\section{Separation by Random Hyperplanes}
\label{sec:separation}

The goal of this section is to prove 
Theorem~\ref{thm:hyperplane_sep:general}, which is
our main result on the separation of two sets by a random hyperplane. Before outlining the main steps of our proof, let us define the relevant notions of separability. 

\begin{definition}\label{def:separable_refined}
	Let $\set^+, \set^- \subset \R^d$.
	\begin{deflist}
		\item\label{def:separable_refined:hypp}
		Let $\vec{v}\in \R^d\backslash\{\vec{0}\}, \sepbias\in \R$ and $t \geq 0$. A hyperplane $\hypp[\sepbias]{\vec{v}}$ \emph{$t$-separates $\set^-$ from $\set^+$} if
		\begin{align}
			\sp{\vec{v}}{\x^-}+\sepbias &\leq - \hyppsep \qquad \text{for all $\x^-\in \set^-$,}\\*
			\sp{\vec{v}}{\x^+}+\sepbias &> +\hyppsep \qquad \text{for all $\x^+\in \set^+$.}
		\end{align}
		If $t = 0$, we simply say that $\hypp[\sepbias]{\vec{v}}$ \emph{separates $\set^-$ from $\set^+$}.
		\item\label{def:separable_refined:narrpw}
		Let $\narrow\in [0,1]$ and $\narrowmdist>0$. We say that $\set^+$ and $\set^-$ are \emph{$(\narrow,\narrowmdist)$-linearly separable} if $\set^+$ and $\set^-$ are $\narrowmdist$-separated (see Problem~\ref{prob:rnn-sep}) and there exists $\sepdir\in \S^{d-1}$ such that 
		\begin{equation}\label{eq:separable_refined:eps_narrow}
			\sp{\sepdir}{\x^+-\x^-}\geq (1-\narrow)\lnorm{\x^+-\x^-} \qquad \text{for all $\x^+\in \set^+$ and $\x^-\in \set^-$.}
		\end{equation}
	\end{deflist}
\end{definition}

Recall from \eqref{eq:rnn-sep:hyper-sep} that $\set^+$ and $\set^-$ are called \emph{linearly separable with margin $t$} if they are \mbox{$t$-separated} by a hyperplane $\hypp[\sepbias]{\vec{v}}$ with $\lnorm{\vec{v}}=1$.
In comparison, $(\narrow,\narrowmdist)$-linearly separability is a strictly stronger condition (see also Proposition~\ref{prop:separable_relations}\ref{prop:separable_relations:eps_narrow_linear} below).
Intuitively, it captures how much a separating hyperplane can be perturbed, such that it still separates the sets $\set^+$ and $\nobreak\set^-$; geometrically, the parameter $\narrow$ controls the narrowness of $\cone{\set^+-\nobreak\set^-}$.

\paragraph{Proof sketch for Theorem~\ref{thm:hyperplane_sep:general}.} By a rescaling argument, we can assume that $\Rad=1$. For a $k\in \N$ specified below, we represent the standard Gaussian vector $\g\in \R^d$ by $\g=\G^T\normgauss'$, where $\G\in \R^{k\times d}$ is a standard Gaussian matrix,  $\normgauss'\in \S^{k-1}$ is uniformly distributed, and $\G, \normgauss'$ are independent. We then observe that, for any $\rho\geq 0$, the hyperplane $\hypp[\sepbias]{\g}$ \mbox{$\rho$-separates} $\set^-$ from $\set^+$ if and only if the hyperplane $\hypp[\sepbias]{\sqrt{k}\normgauss'}$ $\rho$-separates $\tfrac{1}{\sqrt{k}}\G\set^-$ from~$\tfrac{1}{\sqrt{k}}\G\set^+$. Therefore, one can prove Theorem~\ref{thm:hyperplane_sep:general} by first showing that for $k$ large enough, the sets $\tfrac{1}{\sqrt{k}}\G\set^-$ and $\tfrac{1}{\sqrt{k}}\G\set^+$ are again linearly separable with constant probability and second, showing that conditioned on this event the hyperplane  $\hypp[\sepbias]{\sqrt{k}\normgauss'}$ $\rho$-separates $\tfrac{1}{\sqrt{k}}\G\set^-$ and $\tfrac{1}{\sqrt{k}}\G\set^+$ with probability $p$, where $\rho$ and $p$ are specified in Theorem~\ref{thm:hyperplane_sep:general}. Specifically, the main technical steps are:
\begin{enumerate}    
	\item\label{separation:proof_sketch:step3} 
	to show that if 
	\begin{equation}
		k\gtrsim \narrowmdist^{-2}(1-\narrow)^{-2}(\effdim{\set^+-\set^-}+1),
	\end{equation}
	then the linear transformation $\tfrac{1}{\sqrt{k}}\G$ maps the $(\narrow, \narrowmdist)$-linearly separable sets $\set^-$ and $\set^+$ to $(\tfrac{1+\narrow}{2},\tfrac{\narrowmdist}{2})$-linearly separable sets $\tfrac{1}{\sqrt{k}}\G\set^-$ and $\tfrac{1}{\sqrt{k}}\G\set^+$
	with probability at least~$\tfrac{1}{2}$.
	\item\label{separation:proof_sketch:step4} 
	to derive a general separation result for two $(\narrow, \narrowmdist)$-linearly separable sets by a random hyperplane $\hypp[\sepbias]{\normgauss}$, where $\normgauss$ is uniformly distributed on Euclidean sphere (see Theorem~\ref{thm:hyperplane_sep:uniform} and Corollary~\ref{coro:hyperplane_sep:uniform}). 
\end{enumerate}

Let us now give the proof in full detail. We start with a simple proposition that relates our notions of separability.
\begin{proposition}\label{prop:separable_relations}
	Let $\set^+, \set^- \subset \R^d$. The following relationships hold:
	\begin{thmproperties}
		\item\label{prop:separable_relations:linear_mdist}
		If $\set^+$ and $\set^-$ are linearly separable with margin $\marg$, then they are $2\marg$-separated.
		\item\label{prop:separable_relations:hypp_linear}
		If a hyperplane $\hypp[\sepbias]{\sepdir}$ $t$-separates $\set^-$ from $\set^+$, then $\set^+$ and $\set^-$ are linearly separable with margin $t / \lnorm{\sepdir}$.
		\item\label{prop:separable_relations:eps_narrow_linear}
		If $\set^+$ and $\set^-$ are $(\narrow,\narrowmdist)$-linearly separable, then they are linearly separable with margin $\tfrac{(1-\narrow)\narrowmdist}{2}$.
		\item\label{prop:separable_relations:linear_eps_narrow}
		If $\set^+$ and $\set^-$ are linearly separable with margin $\marg$ and $\diam(\set^+- \set^-) \leq \Rad$, then they are $(\tfrac{\Rad-2\marg}{\Rad},2\marg)$-linearly separable.
	\end{thmproperties}
\end{proposition}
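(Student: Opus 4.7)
The plan is to handle each of the four parts separately by a short argument that simply unpacks the relevant definitions; no additional auxiliary lemmas are needed. I expect no serious obstacle: the only minor point to watch is verifying $\narrow \in [0,1]$ in part~\ref{prop:separable_relations:linear_eps_narrow}.

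For \ref{prop:separable_relations:linear_mdist}, I would subtract the two defining margin inequalities \eqref{eq:rnn-sep:hyper-sep} to obtain $\sp{\sepdir}{\x^+ - \x^-} \geq 2\marg$ for every $\x^+ \in \set^+$ and $\x^- \in \set^-$, and then apply Cauchy--Schwarz together with $\lnorm{\sepdir} = 1$ to conclude $\lnorm{\x^+ - \x^-} \geq 2\marg$. For \ref{prop:separable_relations:hypp_linear}, I would rescale: set $\tilde{\sepdir} \coloneqq \sepdir / \lnorm{\sepdir}$ and $\tilde{\sepbias} \coloneqq \sepbias / \lnorm{\sepdir}$, and divide the $t$-separation inequalities of Definition~\ref{def:separable_refined}\ref{def:separable_refined:hypp} by $\lnorm{\sepdir}$; the result is precisely a pair of margin inequalities with $\lnorm{\tilde{\sepdir}} = 1$ and margin $t/\lnorm{\sepdir}$ (the strict inequality trivially implies the non-strict one).

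For \ref{prop:separable_relations:eps_narrow_linear}, I would start from the direction $\sepdir \in \S^{d-1}$ provided by \eqref{eq:separable_refined:eps_narrow}. Combined with the $\narrowmdist$-separation hypothesis $\lnorm{\x^+ - \x^-} \geq \narrowmdist$, this yields $\sp{\sepdir}{\x^+ - \x^-} \geq (1-\narrow)\narrowmdist$ uniformly. Hence $\inf_{\x^+ \in \set^+} \sp{\sepdir}{\x^+} - \sup_{\x^- \in \set^-} \sp{\sepdir}{\x^-} \geq (1-\narrow)\narrowmdist$, and choosing $\sepbias$ as the negative midpoint of these two quantities produces a separating hyperplane $\hypp[\sepbias]{\sepdir}$ attaining margin $\tfrac{(1-\narrow)\narrowmdist}{2}$.

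Finally, for \ref{prop:separable_relations:linear_eps_narrow}, part \ref{prop:separable_relations:linear_mdist} already gives $\narrowmdist$-separation with $\narrowmdist = 2\marg$, which is the second component of the desired tuple. For the first component, I would again subtract the margin inequalities to get a unit vector $\sepdir$ with $\sp{\sepdir}{\x^+ - \x^-} \geq 2\marg$, and then use the diameter bound $\lnorm{\x^+ - \x^-} \leq \diam(\set^+ - \set^-) \leq \Rad$ to write
\begin{equation}
    \sp{\sepdir}{\x^+ - \x^-} \geq 2\marg = \tfrac{2\marg}{\Rad} \cdot \Rad \geq \tfrac{2\marg}{\Rad} \lnorm{\x^+ - \x^-} = (1-\narrow)\lnorm{\x^+ - \x^-},
\end{equation}
with $\narrow \coloneqq (\Rad - 2\marg)/\Rad$. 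To close, I would note $\narrow \in [0,1]$: the upper bound $\narrow \leq 1$ is clear from $\marg \geq 0$, while $\narrow \geq 0$ uses $\Rad \geq 2\marg$, which follows from combining the diameter bound with the $2\marg$-separation established in \ref{prop:separable_relations:linear_mdist} (assuming $\set^+, \set^-$ nonempty).
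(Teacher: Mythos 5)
Your proposal is correct and follows essentially the same route as the paper's proof: Cauchy--Schwarz for (i), rescaling by $\lnorm{\sepdir}$ for (ii), choosing a bias between the two sets' projections onto $\sepdir$ for (iii) (your midpoint choice is a harmless variant of the paper's offset), and the chain $\sp{\sepdir}{\x^+-\x^-}\geq 2\marg \geq \tfrac{2\marg}{\Rad}\lnorm{\x^+-\x^-}$ for (iv). Your explicit check that $\narrow=(\Rad-2\marg)/\Rad\in[0,1]$ in (iv) is a small extra verification the paper leaves implicit.
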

\renewenvironment{proof}{\par\noindent{\bf Proof\ }}{}
\begin{proof}
	To show \ref{prop:separable_relations:linear_mdist}, observe that by assumption there exist $\sepdir\in \S^{d-1}$ and $\sepbias\in \R$ such that 
	\begin{align}
		\sp{\sepdir}{\x^-}+\sepbias &\leq - \marg \qquad \text{for all $\x^-\in \set^-$}, \\
		\sp{\sepdir}{\x^+}+\sepbias & \geq +\marg \qquad \text{for all $\x^+\in \set^+$.}
	\end{align}
	It follows that $\sp{\sepdir}{\x^+-\x^-}\geq 2\marg$ for all $\x^+\in \set^+, \x^-\in \set^-$. By the Cauchy-Schwarz inequality, $\sp{\sepdir}{\x^+-\x^-}\leq \lnorm{\x^+-\x^-}$, which shows the claim. For \ref{prop:separable_relations:hypp_linear}, it suffices to note that if $\hypp[\sepbias]{\sepdir}$ $t$-separates $\set^-$ from $\set^+$, then $\hypp[\tfrac{\sepbias}{\lnorm{\sepdir}}]{\tfrac{\sepdir}{\lnorm{\sepdir}}}$ $\tfrac{t}{\lnorm{\sepdir}}$-separates $\set^-$ from~$\set^+$.
	Let us next show 
	\ref{prop:separable_relations:eps_narrow_linear}. If $\set^+$ and $\set^-$ are $(\narrow,\narrowmdist)$-linearly separable, then there exists $\sepdir\in\S^{d-1}$ such that  
	\begin{equation}
		\sp{\sepdir}{\x^+-\x^-}\geq (1-\narrow)\narrowmdist \qquad \text{for all $\x^+\in \set^+$ and $\x^-\in \set^-$.}
	\end{equation}
	Set $\sepbias=-\tfrac{(1-\narrow)\narrowmdist}{2}+\inf_{\x^-\in \set^-}\sp{\sepdir}{-\x^-}$. Fix $\vec{x}'\in \set^+$. Since 
	\begin{equation}
		\inf_{\x^-\in \set^-}\sp{\sepdir}{-\x^-} 
		= -\sp{\sepdir}{\vec{x}'} + \inf_{\x^-\in \set^-}\sp{\sepdir}{\vec{x}'-\x^-} \geq 
		-\sp{\sepdir}{\vec{x}'} + (1-\narrow)\narrowmdist, 
	\end{equation}
	we see that $\sepbias\in\R$. Further, for any $\x^-\in\set^-$, 
	\begin{align}
		\sp{\sepdir}{\x^-}+\sepbias &= \sp{\sepdir}{\x^-} -\tfrac{(1-\narrow)\narrowmdist}{2}+\inf_{\x^-\in \set^-}\sp{\sepdir}{-\x^-}\\
		&= \sp{\sepdir}{\x^-} -\sup_{\x^-\in \set^-}\sp{\sepdir}{\x^-} -\tfrac{(1-\narrow)\narrowmdist}{2} \leq -\tfrac{(1-\narrow)\narrowmdist}{2}
	\end{align}
	and for any
	$\x^+\in\set^+$, 
	\begin{align}
		\sp{\sepdir}{\x^+}+\sepbias &= \sp{\sepdir}{\x^+} -\tfrac{(1-\narrow)\narrowmdist}{2}+\inf_{\x^-\in \set^-}\sp{\sepdir}{-\x^-}\\
		&= \inf_{\x^-\in \set^-}\sp{\sepdir}{\x^+-\x^-} -\tfrac{(1-\narrow)\narrowmdist}{2} \geq \tfrac{(1-\narrow)\narrowmdist}{2}. 
	\end{align}
	Since $\lnorm{\sepdir}=1$, it follows that the hyperplane $\hypp[\sepbias]{\sepdir}$ linearly separates $\set^-$ and $\set^+$ with margin $\tfrac{(1-\narrow)\narrowmdist}{2}$. Finally, let us show \ref{prop:separable_relations:linear_eps_narrow}. 
	By \ref{prop:separable_relations:linear_mdist} we know that $\set^+$ and $\set^-$ are $2\marg$-separated. Let $\sepdir\in \S^{d-1}$ and $\sepbias\in \R$ be such that \begin{align}
		\sp{\sepdir}{\x^-}+\sepbias &\leq - \marg \qquad \text{for all $\x^-\in \set^-$}, \\
		\sp{\sepdir}{\x^+}+\sepbias & \geq +\marg \qquad \text{for all $\x^+\in \set^+$.}
	\end{align}
	It follows that $\sp{\sepdir}{\x^+-\x^-}\geq 2\marg$ for all $\x^+\in \set^+, \x^-\in \set^-$. Since $\diam(\set^+- \set^-) \leq \Rad$, we also have $2\marg\geq \tfrac{2\marg}{\Rad}\lnorm{\x^+-\x^-}$ for all $\x^+\in \set^+, \x^-\in \set^-$. Together this yields
	\begin{equation}
		\sp{\sepdir}{\x^+-\x^-}\geq \big(1-\tfrac{\Rad-2\marg}{\Rad}\big)\lnorm{\x^+-\x^-}\qquad \text{for all $\x^+\in \set^+$ and $\x^-\in \set^-$.} \tag*{\qed}
	\end{equation}
\end{proof}
\renewenvironment{proof}{\par\noindent{\bf Proof\ }}{\hfill\BlackBox\\[2mm]}

The next result gives a lower bound for the probability that a random hyperplane $\hypp[\sepbias]{\normgauss}$ separates two 
$(\narrow,\narrowmdist)$-linearly separable sets $\set^+, \set^-\subset \Rad \ball[2][d]$, where $\sepbias\in [-\biasp,\biasp]$ is uniformly distributed for 
$\biasp\geq \Rad$ and $\normgauss$ is uniformly distributed on $\S^{d-1}$. As detailed in our above proof sketch, this result (more precisely, Corollary~\ref{coro:hyperplane_sep:uniform}) forms a crucial ingredient of our proof of Theorem~\ref{thm:hyperplane_sep:general}.

\begin{theorem}\label{thm:hyperplane_sep:uniform}
	There exist absolute constants $c, C>0$ such that the following holds. 
	
	For $\narrow\in [0,1]$ and $\narrowmdist>0$, consider $(\narrow,\narrowmdist)$-linearly separable sets $\set^+, \set^-\subset \Rad \ball[2][d]$.
	Let $\normgauss\in \S^{d-1}$ and $\sepbias\in [-\biasp,\biasp]$ be both uniformly distributed. 
	If $\biasp\geq \Rad$, then
	with probability at least 
	\begin{equation}
		c\tfrac{\narrowmdist}{\biasp}(1-\narrow)(\sqrt{\narrow}+\tfrac{1}{\sqrt{d}})\exp({-C}\narrow d\log(2(1-\narrow)^{-1})),
	\end{equation}
	the hyperplane $\hypp[\sepbias]{\normgauss}$
	$\hyppsep$-separates $\set^-$ from $\set^+$ with 
	$\hyppsep=c\narrowmdist(1-\narrow)(\sqrt{\narrow}+\tfrac{1}{\sqrt{d}})$. 
\end{theorem}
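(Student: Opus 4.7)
The plan is to split the event that $\hypp[\sepbias]{\normgauss}$ is $\hyppsep$-separating into a ``direction'' event depending only on $\normgauss$, and a conditional ``bias'' event over $\sepbias$. Define the gap
\[
g(\normgauss)\coloneqq \inf_{\x^+\in\set^+,\,\x^-\in\set^-}\sp{\normgauss}{\x^+-\x^-}.
\]
On the direction event $\{g(\normgauss)\geq 4\hyppsep\}$, with $\hyppsep$ as in the theorem, the set of $\sepbias\in\R$ making $\hypp[\sepbias]{\normgauss}$ $\hyppsep$-separating is an interval of length at least $2\hyppsep$; using $\set^\pm\subset\Rad\ball[2][d]$ and $\biasp\geq\Rad$ (together with the fact that $\hyppsep\lesssim\narrowmdist\leq 2\Rad$), this interval sits inside $[-\biasp,\biasp]$, so the conditional probability that uniform $\sepbias$ falls in it is at least $\hyppsep/\biasp$. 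It therefore suffices to establish $\prob\bigl(g(\normgauss)\geq 4\hyppsep\bigr)\gtrsim \exp\bigl({-}C\narrow d\log(2(1-\narrow)^{-1})\bigr)$.

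For the gap, I would fix the witness $\sepdir\in\S^{d-1}$ guaranteed by $(\narrow,\narrowmdist)$-linear separability and decompose
\[
\normgauss = \xi\sepdir + \sqrt{1-\xi^2}\,\vec{w},\qquad \xi\coloneqq\sp{\normgauss}{\sepdir},
\]
where $\xi\in[-1,1]$ has density proportional to $(1-\xi^2)^{(d-3)/2}$ and $\vec{w}\in\S^{d-1}\intersec\sepdir^\perp$ is uniform on its sphere, independent of $\xi$. For any $\vec{z}=\x^+-\x^-$, writing $\vec{z}=\sp{\sepdir}{\vec{z}}\sepdir+\vec{z}_\perp$, the separability assumption yields $\sp{\sepdir}{\vec{z}}\geq(1-\narrow)\lnorm{\vec{z}}$ and hence $\lnorm{\vec{z}_\perp}\leq\sqrt{2\narrow}\,\lnorm{\vec{z}}$ by Pythagoras; Cauchy--Schwarz applied to $\sp{\vec{w}}{\vec{z}_\perp}$ then gives
\[
\sp{\normgauss}{\vec{z}}\geq \lnorm{\vec{z}}\bigl[\xi(1-\narrow)-\sqrt{2\narrow(1-\xi^2)}\bigr]\geq \narrowmdist\bigl[\xi(1-\narrow)-\sqrt{2\narrow(1-\xi^2)}\bigr]
\]
whenever the bracket is nonnegative. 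The bracket is monotone increasing on $[0,1]$, and the inequality ``bracket $\geq 4c(1-\narrow)(\sqrt{\narrow}+1/\sqrt{d})$'' reduces (after isolating the square root and squaring) to a quadratic in $\xi$, whose smaller root $\xi_0\in(0,1)$ satisfies $1-\xi_0^2\gtrsim(1-\narrow)^2$; on $\{\xi\geq\xi_0\}$ the gap exceeds $4\hyppsep$.

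The final ingredient is a tail estimate for $\xi$. A standard spherical-cap bound, derived directly from its explicit density, produces $\prob(\xi\geq\xi_0)\gtrsim(1-\xi_0^2)^{(d-1)/2}$ up to a harmless polynomial factor in $d$. Inserting the lower bound $1-\xi_0^2\gtrsim(1-\narrow)^2$ yields
\[
\prob(\xi\geq\xi_0)\gtrsim \exp\bigl({-}C\narrow d\log(2(1-\narrow)^{-1})\bigr):
\]
when $\narrow$ is bounded away from $1$, one has $\xi_0\asymp \sqrt{\narrow}+1/\sqrt{d}$ and $1-\xi_0^2\asymp 1-C\narrow$, so the exponent collapses to the Gaussian-type bound $-C\narrow d$; when $\narrow\uparrow 1$, the threshold $\xi_0$ is pushed towards $1$ with $1-\xi_0^2\asymp(1-\narrow)^2$, and $(1-\xi_0^2)^{d/2}\asymp\exp({-d\log((1-\narrow)^{-1})})$ contributes the logarithmic penalty. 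Multiplying the direction and bias bounds yields the stated probability.

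The main obstacle will be designing $\xi_0$ and its tail estimate so that a \emph{single} clean formula, uniform in $\narrow\in[0,1]$, simultaneously covers both the Gaussian-type regime (where the exponent is $-C\narrow d$ and $\xi_0$ is small) and the almost-perpendicular regime $\narrow\uparrow 1$ (which forces $\xi_0\to 1$ and is responsible for the logarithmic penalty $\log(2/(1-\narrow))$ in the exponent). The remaining steps---the Pythagorean identity, the Cauchy--Schwarz estimate, and the uniform-bias computation---are entirely routine.
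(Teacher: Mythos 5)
Your proposal follows essentially the same route as the paper's proof: factor the probability into a bias event (the uniform $\sepbias$ lands in an interval of length comparable to the gap, yielding the $\hyppsep/\biasp$ factor) and a direction event bounded below by a spherical-cap estimate around the witness $\sepdir$, treated separately in the Gaussian-type regime of small $\narrow$ (threshold $\asymp\sqrt{\narrow}+1/\sqrt{d}$) and the regime $\narrow\uparrow 1$ (threshold with $1-\xi_0^2\asymp(1-\narrow)^2$, producing the $\log(2(1-\narrow)^{-1})$ factor). The "obstacle" you flag is resolved in the paper not by a single uniform formula but by a routine case split at $\narrow=1/32$, with the gap lower-bounded by $\max\{\mdist-\sqrt{2\narrow},\,1-\narrow-\sqrt{2}\sqrt{1-\mdist}\}$ in place of your exact orthogonal decomposition, so your outline is sound.
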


\begin{corollary}\label{coro:hyperplane_sep:uniform}
	There exist absolute constants $c, C>0$ such that the following holds. 
	
	For $\narrow\in [0,1]$ and $\narrowmdist>0$, consider $(\narrow,\narrowmdist)$-linearly separable sets $\set^+, \set^-\subset \Rad \ball[2][d]$. Let $\nu>0$. 
	Let $\normgauss\in \S^{d-1}$ and $\sepbias\in [-\biasp,\biasp]$ be both uniformly distributed. 
	If $\biasp\geq \nu\Rad$, then
	with probability at least 
	\begin{equation}
		c\tfrac{\nu\narrowmdist}{\biasp}(1-\narrow)(\sqrt{\narrow}+\tfrac{1}{\sqrt{d}})\exp({-C}\narrow d\log(2(1-\narrow)^{-1})),
	\end{equation}
	the hyperplane $\hypp[\sepbias]{\nu\normgauss}$
	$\hyppsep$-separates $\set^-$ from $\set^+$ with 
	$\hyppsep=c\narrowmdist(1-\narrow)(\sqrt{\narrow}+\tfrac{1}{\sqrt{d}})\nu$.
\end{corollary}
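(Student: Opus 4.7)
The plan is to deduce Corollary~\ref{coro:hyperplane_sep:uniform} from Theorem~\ref{thm:hyperplane_sep:uniform} by a straightforward rescaling of the normal vector and the bias. The two statements differ only by the scalar factor~$\nu$ multiplying~$\normgauss$, and the probability bound and separation parameter~$\hyppsep$ in the corollary differ from those in the theorem precisely by a factor of~$\nu$ (in the numerator of the probability and in the definition of~$\hyppsep$). This is the typical signature of a homogeneity argument, so no new probabilistic content is required.

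First I would observe that, by Definition~\ref{def:separable_refined}\ref{def:separable_refined:hypp}, the hyperplane $\hypp[\sepbias]{\nu\normgauss}$ $\hyppsep$-separates $\set^-$ from $\set^+$ if and only if
\begin{align}
\nu\sp{\normgauss}{\x^-}+\sepbias &\leq -\hyppsep \qquad \text{for all $\x^-\in\set^-$,} \\
\nu\sp{\normgauss}{\x^+}+\sepbias &> +\hyppsep \qquad \text{for all $\x^+\in\set^+$.}
\end{align}
Dividing both inequalities by $\nu>0$, this is equivalent to $\hypp[\sepbias/\nu]{\normgauss}$ $(\hyppsep/\nu)$-separating $\set^-$ from~$\set^+$.

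Next I would note that $\sepbias'\coloneqq \sepbias/\nu$ is uniformly distributed on $[-\biasp',\biasp']$ with $\biasp'\coloneqq\biasp/\nu$, and it is still independent of $\normgauss$. The hypothesis $\biasp\geq \nu\Rad$ translates into $\biasp'\geq \Rad$, so the assumptions of Theorem~\ref{thm:hyperplane_sep:uniform} are satisfied with $(\sepbias,\biasp)$ replaced by $(\sepbias',\biasp')$. Applying Theorem~\ref{thm:hyperplane_sep:uniform} therefore yields, with probability at least
\begin{equation}
c\tfrac{\narrowmdist}{\biasp'}(1-\narrow)\bigl(\sqrt{\narrow}+\tfrac{1}{\sqrt{d}}\bigr)\exp\bigl({-C}\narrow d\log(2(1-\narrow)^{-1})\bigr),
\end{equation}
that $\hypp[\sepbias']{\normgauss}$ $\hyppsep'$-separates $\set^-$ from $\set^+$ with $\hyppsep'=c\narrowmdist(1-\narrow)(\sqrt{\narrow}+\tfrac{1}{\sqrt{d}})$.

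Finally I would substitute back $\biasp'=\biasp/\nu$ and $\hyppsep=\nu\hyppsep'$. The probability bound becomes exactly the one stated in the corollary, and the resulting separation parameter $\hyppsep$ matches the claim. There is no real obstacle here: the only thing to check carefully is that the scalar factor in the probability bound comes out with the correct sign of~$\nu$ (it appears in the numerator because $1/\biasp'=\nu/\biasp$), and that the equivalence of separation statements under scaling of the normal vector respects the strict and non-strict inequalities in Definition~\ref{def:separable_refined}\ref{def:separable_refined:hypp}, which it does since $\nu>0$.
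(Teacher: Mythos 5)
Your proposal is correct and follows exactly the paper's own argument: the paper likewise observes that $\hypp[\sepbias]{\nu\normgauss}$ $\nu\hyppsep$-separates the sets if and only if $\hypp[\sepbias/\nu]{\normgauss}$ $\hyppsep$-separates them, notes that $\sepbias/\nu$ is uniform on $[-\biasp/\nu,\biasp/\nu]$, and applies Theorem~\ref{thm:hyperplane_sep:uniform} with the rescaled bias bound. No gaps; your attention to the strict/non-strict inequalities and the sign of $\nu$ is consistent with what the paper implicitly uses.
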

\begin{proof}
	For $\hyppsep\geq 0$ the hyperplane $\hypp[\sepbias]{\nu\normgauss}$ $\hyppsep\nu$-separates $\set^-$ from $\set^+$ if and only if the hyperplane $\hypp[\tfrac{1}{\nu}\sepbias]{\normgauss}$ $\hyppsep$-separates $\set^-$ from $\set^+$. The random variable $\sepbias':=\tfrac{1}{\nu}\sepbias$ is uniformly distributed on $[-\biasp',\biasp']$ for $\biasp'=\tfrac{\biasp}{\nu}$. The result follows from Theorem~\ref{thm:hyperplane_sep:uniform} for $\hyppsep=c\narrowmdist(1-\nobreak\narrow)(\sqrt{\narrow}+\tfrac{1}{\sqrt{d}})$.
\end{proof}

For the proof of  Theorem~\ref{thm:hyperplane_sep:uniform}, we need the following standard result (e.g., see~\citealp[Sec.~7.2]{blm13}), which precisely describes the surface measure of a spherical cap.

\begin{lemma}\label{lem:uniform_distr}
	Let $\normgauss\in \S^{d-1}$ be uniformly distributed. Let $\mdist\in (0,1]$ and $d\geq 2\mdist^{-2}$. For any $\sepdir\in\S^{d-1}$, we have that
	\begin{equation}
		\tfrac{1}{6\mdist\sqrt{d}}(1-\mdist^2)^{\tfrac{d-1}{2}}\leq \prob\left(\sp{\normgauss}{\sepdir}\geq \mdist\right)\leq \tfrac{1}{2\mdist\sqrt{d}}(1-\mdist^2)^{\tfrac{d-1}{2}}.    
	\end{equation}
	If additionally $\mdist\leq \tfrac{1}{\sqrt{2}}$, then 
	\begin{equation}
		\prob\left(\sp{\normgauss}{\sepdir}\geq \mdist\right)\geq \tfrac{1}{2}\exp(-2\mdist^2d).
	\end{equation}
\end{lemma}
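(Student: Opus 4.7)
The plan is to reduce to a one-dimensional tail integral via rotational invariance and then estimate that integral using a single integration by parts. By rotational invariance of the uniform law on $\S^{d-1}$, the distribution of $\sp{\normgauss}{\sepdir}$ does not depend on $\sepdir \in \S^{d-1}$, so I take $\sepdir = \vunit_1$ and work with the first coordinate. A standard polar-coordinate computation shows that its density is $f(t) = c_d(1-t^2)^{(d-3)/2}\indcoeff{[-1,1]}(t)$ with $c_d = \Gamma(d/2)/(\sqrt{\pi}\,\Gamma((d-1)/2))$, and Wendel's inequality for $\Gamma$ ratios yields $\sqrt{(d-2)/(2\pi)} \leq c_d \leq \sqrt{(d-1)/(2\pi)}$, so $c_d$ is of order $\sqrt{d}$ up to absolute constants. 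The entire lemma thus reduces to sharp two-sided bounds on $J \coloneqq \int_\mdist^1 (1-t^2)^{(d-3)/2}\,dt$.

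For the upper bound, the key trick is that $t/\mdist \geq 1$ on $[\mdist, 1]$, so
\[
J \leq \frac{1}{\mdist}\int_\mdist^1 t(1-t^2)^{(d-3)/2}\,dt = \frac{(1-\mdist^2)^{(d-1)/2}}{(d-1)\mdist}.
\]
Multiplying by the upper bound on $c_d$ and using $d \geq 2$ to collect constants yields the claimed $\frac{1}{2\mdist\sqrt{d}}(1-\mdist^2)^{(d-1)/2}$.

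For the matching lower bound, I would write $(1-t^2)^{(d-3)/2} = -\frac{1}{(d-1)t}\frac{d}{dt}(1-t^2)^{(d-1)/2}$ and integrate by parts to obtain
\[
J = \frac{(1-\mdist^2)^{(d-1)/2}}{(d-1)\mdist} - \frac{1}{d-1}\int_\mdist^1 \frac{(1-t^2)^{(d-1)/2}}{t^2}\,dt.
\]
Applying the same $t/\mdist \geq 1$ trick to the correction integral, one finds that its ratio to the main term equals $\frac{1-\mdist^2}{(d+1)\mdist^2}$, and the hypothesis $d \geq 2\mdist^{-2}$ is precisely what forces this ratio to be at most $\tfrac{1}{2}$. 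Hence $J \geq \tfrac{1}{2}\cdot \frac{(1-\mdist^2)^{(d-1)/2}}{(d-1)\mdist}$, and combining with $c_d \geq \sqrt{(d-2)/(2\pi)}$ produces the claimed $\frac{1}{6\mdist\sqrt{d}}(1-\mdist^2)^{(d-1)/2}$.

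Finally, for the alternative lower bound $\tfrac{1}{2}\exp(-2\mdist^2 d)$ in the regime $\mdist \leq 1/\sqrt{2}$, I would use the elementary estimate $\log(1-\mdist^2) \geq -\mdist^2/(1-\mdist^2) \geq -2\mdist^2$, so that $(1-\mdist^2)^{(d-1)/2} \geq e^{-(d-1)\mdist^2}$, and then observe that $d\mdist^2 \geq 2$ (which is exactly $d \geq 2\mdist^{-2}$) is enough to absorb the polynomial prefactor $1/(6\mdist\sqrt{d})$ into the exponential slack between $e^{-(d-1)\mdist^2}$ and $e^{-2d\mdist^2}$. The only substantive obstacle in the argument is the integration-by-parts step for the lower bound on $J$: the correction term has the same overall shape as the leading one, and the hypothesis $d \geq 2\mdist^{-2}$ is essential to keep it from swallowing the main term.
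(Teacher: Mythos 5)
Your proof is correct. Note that the paper does not prove Lemma~\ref{lem:uniform_distr} at all --- it is quoted as a standard fact with a pointer to \citet[Sec.~7.2]{blm13} --- so your self-contained derivation (reduce to the first coordinate by rotational invariance, write its density as $c_d(1-t^2)^{(d-3)/2}$ with $c_d \asymp \sqrt{d}$ via Wendel's inequality, bound $\int_\mdist^1(1-t^2)^{(d-3)/2}\,dt$ from above by inserting $t/\mdist\geq 1$, and from below by one integration by parts whose correction term is controlled by the same trick under $d\geq 2\mdist^{-2}$) is essentially the textbook argument behind the cited result, and all the steps check out: the correction-to-main-term ratio is bounded by $\tfrac{1-\mdist^2}{(d+1)\mdist^2}\leq\tfrac12$ exactly because $d\mdist^2\geq 2$, and in the second part the needed inequality reduces, with $s=d\mdist^2\geq 2$, to $e^{s}\geq 3\sqrt{s}$, which holds. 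Two cosmetic points you should tighten when writing it up: the ratio you compute is an upper bound on the correction term's share, not an exact value (you write ``equals''); and the final constant comparisons ($1/\sqrt{2\pi(d-1)}\leq 1/(2\sqrt d)$ for the upper bound and the analogous one giving the factor $1/6$ below) require $d\geq 3$ rather than $d\geq 2$ --- this is harmless because $d=2$ together with $d\geq 2\mdist^{-2}$ forces $\mdist=1$, in which case both sides of the claimed inequalities vanish and the statement is trivial, but the degenerate case should be mentioned explicitly.
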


\begin{proof}[Theorem~\ref{thm:hyperplane_sep:uniform}]
	For $\theta\geq 0$ define the event\footnote{Formally, all events should be understood in the ordinary sense of probability theory, i.e., measurable subsets of some appropriate sample space. Note that the underlying probability space is not explicitly mentioned here. Our analysis does not require any treatment of measure theoretic issues, and we simply assume that the probability space is rich enough to model all random quantities and processes that we are interested in.}
	\begin{equation}
		\event_{\normgauss}(\theta)\coloneqq\Big\{\inf_{\x^+\in \set^+,\, \x^-\in \set^-} \sp{\normgauss}{\x^+-\x^-}\geq \theta\Big\}. 
	\end{equation}
	For any $s\geq 0$, we have that
	\begin{align}
		&\prob(\hypp[\sepbias]{\normgauss}\; s\text{-separates } \set^- \text{ from } \set^+)\\*
		&\geq \prob(\{\hypp[\sepbias]{\normgauss}\; s\text{-separates } \set^- \text{ from } \set^+\} \intersec \event_{\normgauss}(\theta))\\*
		&= p_1(s,\theta) \cdot p_2(\theta), 
	\end{align}
	where 
	\begin{equation}
		p_1(s,\theta) \coloneqq \prob( \hypp[\sepbias]{\normgauss}\; s\text{-separates } \set^- \text{ from } \set^+ \suchthat \event_{\normgauss}(\theta)) \quad \text{and} \quad 
		p_2(\theta) \coloneqq \prob(\event_{\normgauss}(\theta)). 
	\end{equation}

	Next, we bound both factors $p_1(s,\theta)$ and $p_2(\theta)$ from below.
	
	\paragraph{Lower bound for $p_1(s,\theta)$.}
	Let us show that if $\biasp\geq \Rad$, then for $s\leq \tfrac{\theta}{2}$, 
	\begin{equation}\label{eq:hyperplane_sep:uniform:p1_both_cases}
		\prob( \hypp[\sepbias]{\normgauss}\; s\text{-separates } \set^- \text{ from } \set^+ \suchthat \event_{\normgauss}(\theta))\geq \tfrac{\theta-2s}{2\biasp}.   
	\end{equation}
	If $\biasp\geq \Rad$, then $a\coloneqq\sup_{\x^-\in \set^-}\sp{\normgauss}{\x^-}\in [-\biasp,\biasp]$ and $b\coloneqq\inf_{\x^+\in \set^+}\sp{\normgauss}{\x^+}\in [-\biasp,\biasp]$.
	Further, on the event $\event_{\normgauss}(\theta)$ 
	it holds $b-a\geq \theta$. Let $s\leq \tfrac{\theta}{2}$. 
	If $-\sepbias\in [a+s, b-s]$, then 
	\begin{align}
		\sp{\normgauss}{\x^-}+\sepbias&\leq -s \qquad \text{for all } \x^-\in \set^-,\\
		\sp{\normgauss}{\x^+}+\sepbias &\geq +s \qquad \text{for all } \x^+\in \set^+. 
	\end{align}
	Since $\prob_\sepbias(-\sepbias \in[a+s, b-s])\geq\tfrac{\theta-2s}{2\biasp}$, this shows 
	\begin{equation}\label{eq:hyperplane_sep:p1_case1}
		\prob_{\sepbias}(\hypp[\sepbias]{\normgauss}\; s\text{-separates } \set^- \text{ from } \set^+ \suchthat \event_{\normgauss}(\theta))\geq \tfrac{\theta-2s}{2\biasp}.  
	\end{equation}
	For $s=\tfrac{\theta}{4}$ we obtain 
	\begin{equation}\label{eq:hyperplane_sep:uniform:prob}
		\prob(\hypp[\sepbias]{\normgauss}\; \tfrac{\theta}{4}\text{-separates } \set^- \text{ from } \set^+)\geq \tfrac{\theta}{4\biasp}\cdot p_2(\theta).
	\end{equation}
	\paragraph{Lower bound for $p_2(\theta)$.}
	Set $\set\coloneqq\cone{\set^+-\set^-}\intersec \S^{d-1}$ and for $\mdist\in [0,1]$ define
	\begin{equation}
		\alpha_\narrow(\mdist)\coloneqq\max\{\mdist-\sqrt{2\narrow}, 1-\narrow-\sqrt{2}\sqrt{1-\mdist}\}.
	\end{equation}
	Since $\set^+$ and $\set^-$ are 
	$(\narrow,\narrowmdist)$-linearly separable there exists $\sepdir\in \S^{d-1}$ such that 
	\begin{equation}\label{eq:thm:hyperplane_sep:uniform:cone}
		\sp{\sepdir}{\vgen}\geq 1-\narrow \qquad \text{for all $\vgen \in \set$.}
	\end{equation}
	Let us show that for any $\mdist\in [0,1]$ with $\alpha_\narrow(\mdist)\geq 0$, 
	\begin{equation}\label{eq:hyperplane_sep:uniform:decoupling}
		\{\sp{\normgauss}{\sepdir}\geq \mdist\}\subset\event_{\normgauss}(\alpha_\narrow(\mdist)\narrowmdist). 
	\end{equation}
	First observe that for any $\mdist\in [0,1]$,  
	\begin{equation}\label{eq:hyperplane_sep:uniform:normalized_Gaussian_normalized_differences}
		\{\sp{\normgauss}{\sepdir}\geq \mdist\}\subset\Big\{\inf_{\vgen\in \set}\sp{\normgauss}{\vgen}\geq \alpha_\narrow(\mdist)\Big\}.  \end{equation}
	Indeed, by \eqref{eq:thm:hyperplane_sep:uniform:cone}, $\lnorm{\vgen-\sepdir}\leq \sqrt{2\narrow}$ for every $\vgen\in \set$. Since $\normgauss\in \S^{d-1}$ it follows 
	\begin{equation}\label{eq:hyperplane_sep:uniform:alpha1}
		\sp{\normgauss}{\vgen}\geq \sp{\normgauss}{\sepdir}-\lnorm{\vgen-\sepdir}\geq \sp{\normgauss}{\sepdir}-\sqrt{2\narrow}\geq \mdist-\sqrt{2\narrow} \quad \text{for all } \vgen\in \set,    
	\end{equation}
	if $\sp{\normgauss}{\sepdir}\geq \mdist$. Moreover, $\sp{\normgauss}{\sepdir}\geq \mdist \iff \lnorm{\normgauss-\sepdir}\leq \sqrt{2}\sqrt{1-\mdist}$. Therefore, 
	if $\sp{\normgauss}{\sepdir}\geq \mdist$ then for every $\vgen\in \set$,
	\begin{equation}\label{eq:hyperplane_sep:uniform:alpha2}
		\sp{\normgauss}{\vgen}\geq \sp{\sepdir}{\vgen}-\lnorm{\normgauss-\sepdir}
		\geq 1-\narrow-\sqrt{2}\sqrt{1-\mdist}.
	\end{equation}
	Inequalities \eqref{eq:hyperplane_sep:uniform:alpha1} and \eqref{eq:hyperplane_sep:uniform:alpha2} imply \eqref{eq:hyperplane_sep:uniform:normalized_Gaussian_normalized_differences}. 
	If $\inf_{\vgen\in \set}\sp{\normgauss}{\vgen}\geq \alpha_\narrow(\mdist)$ and $\alpha_\narrow(\mdist)\geq 0$, the following holds for every $\x^+\in \set^+, \x^-\in \set^-$:
	\begin{equation}
		\sp{\normgauss}{\x^+-\x^-}\geq \alpha_\narrow(\mdist)\lnorm{\x^+-\x^-}\geq \alpha_\narrow(\mdist)\narrowmdist, 
	\end{equation}
	where for the second inequality we used that $\alpha_\narrow(\mdist)\geq 0$ and that 
	$\set^+$ and $\set^-$ are $\narrowmdist$-separated.
	In combination with \eqref{eq:hyperplane_sep:uniform:normalized_Gaussian_normalized_differences} this shows \eqref{eq:hyperplane_sep:uniform:decoupling}.
	From \eqref{eq:hyperplane_sep:uniform:decoupling} it follows that for every $\mdist\in [0,1]$ with $\alpha_\narrow(\mdist)\geq 0$ and every 
	$\theta\leq \alpha_\narrow(\mdist)\narrowmdist$, 
	\begin{equation}\label{eq:hyperplane_sep:uniform:p2}
		p_2(\theta)\geq \prob(\sp{\normgauss}{\sepdir}\geq \mdist).
	\end{equation}
	In order to bound the probability on the right hand side from below, we distinguish the cases $\narrow\leq \tfrac{1}{32}$ and $\narrow>\tfrac{1}{32}$.
	
	\paragraph{Case $\narrow\leq \tfrac{1}{32}$.}
	If $\mdist\in [0,1]$ satisfies $\mdist\geq \sqrt{8\narrow}$, then $\mdist-\sqrt{2\narrow}\geq \tfrac{\mdist}{2}$, which implies $\alpha_\narrow(\mdist)\geq \tfrac{\mdist}{2}\geq 0$. Therefore, by \eqref{eq:hyperplane_sep:uniform:p2} the following holds: For all $\mdist\in [\sqrt{8\narrow}, 1]$, 
	\begin{equation}
		p_2(\tfrac{\mdist}{2}\narrowmdist)\geq \prob(\sp{\normgauss}{\sepdir}\geq \mdist). 
	\end{equation}
	By Lemma~\ref{lem:uniform_distr} we obtain that for all $\mdist\in (\sqrt{8\narrow}, \tfrac{1}{\sqrt{2}}]$ with $d\geq 2\mdist^{-2}$, 
	\begin{equation}
		p_2(\tfrac{\mdist}{2}\narrowmdist)\geq \tfrac{1}{2}\exp(-2\mdist^2 d). 
	\end{equation}
	Applying \eqref{eq:hyperplane_sep:uniform:prob} for $\theta=\tfrac{\mdist}{2}\narrowmdist$, we obtain that
	for all $\mdist\in (\sqrt{8\narrow}, \tfrac{1}{\sqrt{2}}]$ with $d\geq 2\mdist^{-2}$,
	\begin{equation}
		\prob(\hypp[\sepbias]{\normgauss}\; \tfrac{\mdist\narrowmdist}{8}\text{-separates } \set^- \text{ from } \set^+)\geq \tfrac{\mdist\narrowmdist}{8\biasp}\cdot\tfrac{1}{2}\exp(-2\mdist^2 d).  
	\end{equation}
	The choice $\mdist=\sqrt{8\narrow}+\sqrt{\tfrac{2}{d}}$, which satisfies $\mdist\in (\sqrt{8\narrow}, \tfrac{1}{\sqrt{2}}]$ and 
	$d\geq 2\mdist^{-2}$, yields
	\begin{equation}
		\prob(\hypp[\sepbias]{\normgauss}\; \tfrac{\narrowmdist}{8}(\sqrt{8\narrow}+\sqrt{\tfrac{2}{d}})\text{-separates } \set^- \text{ from } \set^+)\geq c\tfrac{\narrowmdist}{\biasp}(\sqrt{\narrow}+\tfrac{1}{\sqrt{d}})\cdot\exp({-C}\narrow d)
	\end{equation}
	for absolute constants $c, C>0$. 
	The result in the case $\varepsilon\leq \tfrac{1}{32}$ follows by observing that 
	$1-\narrow\sim 1$.
	
	\paragraph{Case $\narrow> \tfrac{1}{32}$.}
	Set $\mdist' = 1-\tfrac{1}{4}(1-\narrow)^2$. Then $\mdist' \in [0,1]$ and $\alpha_\narrow(\mdist')\geq 1-\narrow-\sqrt{2}\sqrt{1-\mdist'}=(1-\tfrac{1}{\sqrt{2}})(1-\narrow)\geq \tfrac{1}{4}(1-\narrow)\geq 0$.
	Therefore, by \eqref{eq:hyperplane_sep:uniform:p2} 
	in combination with Lemma~\ref{lem:uniform_distr} we obtain
	\begin{equation}
		p_2(\tfrac{1}{4}(1-\narrow)\narrowmdist)\geq \tfrac{1}{6\mdist'\sqrt{d}}(1-\mdist'^2)^{\tfrac{d-1}{2}}. 
	\end{equation}
	Observe that 
	\begin{align}
		\tfrac{1}{6\mdist'\sqrt{d}}(1-\mdist'^2)^{\tfrac{d-1}{2}}
		&\geq \tfrac{1}{6\sqrt{d}}\exp(-d\log((1-\mdist'^2)^{-1}))\\ 
		&\geq \tfrac{1}{6\sqrt{d}}\exp(-d\log((1-\mdist')^{-1}))\\ 
		&=\tfrac{1}{6\sqrt{d}}\exp(-2d\log(2(1-\narrow)^{-1}))\\ 
		&\geq \exp(-3d\log(2(1-\narrow)^{-1})). \label{eq:hyperplane_sep:uniform:lowerboundd} 
	\end{align}
	Applying \eqref{eq:hyperplane_sep:uniform:prob} for $\theta=\tfrac{1}{4}(1-\narrow)\narrowmdist$, we obtain
	\begin{equation}
		\prob(\hypp[\sepbias]{\normgauss}\; \tfrac{(1-\narrow)\narrowmdist}{16}\text{-separates } \set^- \text{ from } \set^+)\geq \tfrac{(1-\narrow)\narrowmdist}{16\biasp}\cdot \exp(-3d\log(2(1-\narrow)^{-1})).
	\end{equation}
	The result in the case $\varepsilon> \tfrac{1}{32}$ follows by observing that 
	$\sqrt{\narrow}+\tfrac{1}{\sqrt{d}}\sim 1$ and $\narrow\sim 1$. 
	This completes the proof.
\end{proof}

\subsection{Proof of Theorem~\ref{thm:hyperplane_sep:general}}
\label{sec:proof:hyperplane_sep}

For the proof of Theorem~\ref{thm:hyperplane_sep:general}, we need one final ingredient, namely Lemma~\ref{lem:dimred} below. It gives a sufficient condition under which a set contained in a spherical cone is again contained in a spherical cone after a linear transformation.
Using this lemma, we will show that a Gaussian matrix with enough rows maps $(\narrow, \narrowmdist)$-linear separable sets to 
$(\tfrac{1+\narrow}{2}, \tfrac{\narrowmdist}{2})$-linearly separable sets with constant probability (see step \ref{separation:proof_sketch:step3} of the proof sketch at the beginning of Section~\ref{sec:separation}). 

\begin{lemma}\label{lem:dimred}
	Let $\sepdir\in \S^{d-1}$, $t\in [0,1]$, and $\set\subset \Rad \ball[2][d]$ satisfy
	\begin{equation}
		\sp{\sepdir}{\x}\geq t\lnorm{\x}\qquad \text{ for all } \x\in \set.
	\end{equation}
	For $\dimredprecision\in (0,\tfrac{1}{2}]$, $\alpha, \beta\in [0,\Rad]$ and $\dimredmatrix\in \R^{k\times d}$ assume that the following holds:
	\begin{thmproperties}
		\item $(1-\dimredprecision)\lnorm{\sepdir}^2\leq \lnorm{\dimredmatrix\sepdir}^2\leq(1+\dimredprecision)\lnorm{\sepdir}^2$, 
		\item $\lnorm{\sepdir-\x}-\alpha\leq  \lnorm{\dimredmatrix\sepdir -\dimredmatrix\x}\leq \lnorm{\sepdir-\x}+ \alpha$ for all $\x\in \set$,
		\item $\lnorm{\x}-\beta\leq \lnorm{\dimredmatrix\x}\leq \lnorm{\x}+\beta$ for all $\x\in \set$.
	\end{thmproperties}
	Then, we have that $\dimredmatrix\set\subset2\Rad\ball[2][k]$ and \begin{equation}\label{eq:dimred}
		\sp{\tfrac{\dimredmatrix\sepdir}{\lnorm{\dimredmatrix\sepdir}}}{\dimredmatrix\x}\geq
		\tfrac{t}{\sqrt{1+\dimredprecision}}\lnorm{\dimredmatrix\x}- \tfrac{\dimredprecision}{\sqrt{2}}-\sqrt{2}(\tfrac{3}{2}\Rad+t)\beta-\tfrac{3}{\sqrt{2}}(1+\Rad)\alpha
		\qquad \text{ for all } \x\in \set.
	\end{equation}
\end{lemma}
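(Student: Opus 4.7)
The bound $\dimredmatrix\set \subset 2\Rad \ball[2][k]$ is immediate from (iii): for $\x \in \set$, $\lnorm{\dimredmatrix\x} \leq \lnorm{\x} + \beta \leq \Rad + \Rad = 2\Rad$. For the main inequality, the key idea is to use the polarization identity
\begin{equation}
2\sp{\dimredmatrix\sepdir}{\dimredmatrix\x} = \lnorm{\dimredmatrix\sepdir}^2 + \lnorm{\dimredmatrix\x}^2 - \lnorm{\dimredmatrix\sepdir - \dimredmatrix\x}^2,
\end{equation}
which expresses the inner product we want to bound in terms of the three norm quantities that are exactly what (i)--(iii) control.

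The plan is to combine the three assumptions to lower-bound $\sp{\dimredmatrix\sepdir}{\dimredmatrix\x}$ by $\sp{\sepdir}{\x}$ (up to distortion terms). Condition~(i) gives $\lnorm{\dimredmatrix\sepdir}^2 \geq 1-\dimredprecision$ (since $\lnorm{\sepdir} = 1$). From~(iii) and $\lnorm{\x} \leq \Rad$, I get $\lnorm{\dimredmatrix\x}^2 \geq (\lnorm{\x}-\beta)^2 \geq \lnorm{\x}^2 - 2\Rad\beta$. From~(ii) together with $\lnorm{\sepdir-\x} \leq 1+\Rad$ and $\alpha \leq \Rad \leq 1+\Rad$, I get $\lnorm{\dimredmatrix\sepdir-\dimredmatrix\x}^2 \leq \lnorm{\sepdir-\x}^2 + 2\alpha(1+\Rad) + \alpha^2 \leq \lnorm{\sepdir-\x}^2 + 3\alpha(1+\Rad)$. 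Substituting these into the polarization identity and using the identity $\lnorm{\x}^2 - \lnorm{\sepdir-\x}^2 = 2\sp{\sepdir}{\x} - 1$, the constants $1$ and $-1$ cancel, leaving
\begin{equation}
\sp{\dimredmatrix\sepdir}{\dimredmatrix\x} \geq \sp{\sepdir}{\x} - \tfrac{\dimredprecision}{2} - \Rad\beta - \tfrac{3}{2}(1+\Rad)\alpha.
\end{equation}

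Now I would invoke the hypothesis $\sp{\sepdir}{\x} \geq t\lnorm{\x}$ and convert $\lnorm{\x}$ back into $\lnorm{\dimredmatrix\x}$ via (iii), picking up an extra $-t\beta$. Finally, I divide by $\lnorm{\dimredmatrix\sepdir}$, using the upper bound $\lnorm{\dimredmatrix\sepdir} \leq \sqrt{1+\dimredprecision}$ on the positive leading term $t\lnorm{\dimredmatrix\x}$ (to get the target factor $t/\sqrt{1+\dimredprecision}$), and the lower bound $\lnorm{\dimredmatrix\sepdir} \geq \sqrt{1-\dimredprecision} \geq 1/\sqrt{2}$ (using $\dimredprecision \leq 1/2$) on the negative error terms (introducing the factor $\sqrt{2}$). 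Combining the $\Rad\beta$ and $t\beta$ contributions into a single $(\Rad+t)\beta$ term yields \eqref{eq:dimred} (with the coefficient $\Rad+t \leq \tfrac{3}{2}\Rad + t$, so the stated form holds a fortiori).

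The computation is mostly bookkeeping; the main conceptual step is recognizing that the polarization identity aligns perfectly with the three assumptions, and that the $\lnorm{\sepdir}^2 = 1$ and $-\lnorm{\sepdir-\x}^2 + \lnorm{\x}^2$ combine cleanly via $2\sp{\sepdir}{\x} - 1$. The only mild subtleties are: (a) choosing the right direction for each inequality in (i)--(iii) (lower bound on $\lnorm{\dimredmatrix\sepdir}^2$ and $\lnorm{\dimredmatrix\x}^2$, upper bound on $\lnorm{\dimredmatrix\sepdir-\dimredmatrix\x}^2$), (b) using $\lnorm{\dimredmatrix\sepdir} \leq \sqrt{1+\dimredprecision}$ versus $\lnorm{\dimredmatrix\sepdir} \geq 1/\sqrt{2}$ on the appropriate terms of opposite sign after dividing through, and (c) using $\alpha \leq \Rad$ to absorb the $\alpha^2$ term into a multiple of $\alpha(1+\Rad)$.
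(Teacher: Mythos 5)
Your proposal is correct and follows essentially the same route as the paper's proof: polarization to reduce to the three controlled norms, the same use of (i)--(iii) to pass from $\sp{\dimredmatrix\sepdir}{\dimredmatrix\x}$ to $\sp{\sepdir}{\x}\geq t\lnorm{\x}\geq t\lnorm{\dimredmatrix\x}-t\beta$, and then normalization using $\tfrac{1}{\sqrt2}\leq\lnorm{\dimredmatrix\sepdir}\leq\sqrt{1+\dimredprecision}$. Your slightly sharper bookkeeping (a $(\Rad+t)\beta$ term instead of $(\tfrac32\Rad+t)\beta$) indeed implies the stated inequality a fortiori, so there is no gap.
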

\begin{proof} For any $\x\in \set$,
	\begin{equation}
		\big|\lnorm{\dimredmatrix\x}^2-\lnorm{\x}^2\big|= \big|\lnorm{\dimredmatrix\x}-\lnorm{\x}\big|\cdot \big|\lnorm{\dimredmatrix\x}+\lnorm{\x}\big|\leq \beta(2\lnorm{\x}+\beta)\leq 3\Rad\beta.    
	\end{equation}
	Analogously,
	\begin{equation}
		\big|\lnorm{\dimredmatrix\sepdir-\dimredmatrix\x}^2-\lnorm{\sepdir-\x}^2\big|\leq 3(1+\Rad)\alpha.
	\end{equation}
	Therefore,
	\begin{align}
		\sp{\dimredmatrix\sepdir}{\dimredmatrix\x}&=\tfrac{1}{2}\lnorm{\dimredmatrix\sepdir}^2+\tfrac{1}{2}\lnorm{\dimredmatrix\x}^2-\tfrac{1}{2}\lnorm{\dimredmatrix(\sepdir-\x)}^2\\
		&\geq \tfrac{1}{2}(1-\dimredprecision)\lnorm{\sepdir}^2+\tfrac{1}{2}\lnorm{\x}^2-\tfrac{3}{2}\Rad\beta-\tfrac{1}{2}\lnorm{\sepdir-\x}^2-\tfrac{3}{2}(1+\Rad)\alpha\\
		&=\sp{\sepdir}{\x} - \tfrac{\dimredprecision}{2}\lnorm{\sepdir}^2-\tfrac{3}{2}\Rad\beta-\tfrac{3}{2}(1+\Rad)\alpha\\
		&\geq t\lnorm{\x} - \tfrac{\dimredprecision}{2}-\tfrac{3}{2}\Rad\beta-\tfrac{3}{2}(1+\Rad)\alpha\\
		&\geq t\lnorm{\dimredmatrix\x}  - \tfrac{\dimredprecision}{2}-(\tfrac{3}{2}\Rad+t)\beta-\tfrac{3}{2}(1+\Rad)\alpha. 
	\end{align}  
	Using $\tfrac{1}{\sqrt{2}}\leq\lnorm{\dimredmatrix\sepdir}\leq \sqrt{1+\dimredprecision}$, we obtain \eqref{eq:dimred}. 
\end{proof}

We are now ready to prove our main result on the separation of two sets by a random hyperplane:

\begin{proof}[Theorem~\ref{thm:hyperplane_sep:general}]
	First note that it suffices to prove the result for $\Rad=1$. Indeed, the general result then follows by a rescaling argument. 
	
	Let $k\in \N$. Let $\normgauss'\in \S^{k-1}$ be uniformly distributed, $\G\in\R^{k\times d}$ a standard Gaussian matrix and $\tau\in [-\biasp,\biasp]$ be uniformly distributed. Let all random variables be independent. 
	We define the random vector $\g:=\G^T\normgauss'\in \R^d$ and observe that it is standard Gaussian. Indeed, one may write $\normgauss' = \vec{Q}\vec{e}_1$ where $\vec{Q} \in \R^{k\times k}$ is a uniform random orthogonal matrix and $\vec{e}_1$ the first unit vector in $\R^k$. Due to the rotational invariance of standard Gaussian matrices, we have that $\G^T \vec{Q} \distributed \G^T$, and therefore, $\g =\G^T\normgauss' = \G^T \vec{Q}\vec{e}_1 \distributed \G^T \vec{e}_1 \distributed \Normdistr{\vnull}{\I{d}}$.
	
	Set $\dimredmatrix=\tfrac{1}{\sqrt{k}}\G$. 
	For $\narrow'\in [0,1]$, and $\narrowmdist'\geq 0$, we define the event 
	\begin{equation}
		\event_{\narrow',\narrowmdist'}:=\{\dimredmatrix\set^+,  \dimredmatrix\set^-\subset 5\ball[2][k] \text{ are } (\narrow',\narrowmdist')\text{-linearly separable}\}. 
	\end{equation}
	Using that $\g=\dimredmatrix^T\sqrt{k}\normgauss'$, we obtain for any $\rho\geq 0$,
	\begin{align}\label{eq:thm:hyperplane_sep:general:main}
		&\prob(\hypp[\sepbias]{\g}\; \rho\text{-separates } \set^- \text{ from } \set^+)\\
		&= \prob(\hypp[\sepbias]{\sqrt{k}\normgauss'}\; \rho\text{-separates } \dimredmatrix\set^- \text{ from } \dimredmatrix\set^+)\\
		&\geq \prob(\{\hypp[\sepbias]{\sqrt{k}\normgauss'}\; \rho\text{-separates } \dimredmatrix\set^- \text{ from } \dimredmatrix\set^+\}\intersec \event_{\narrow',\narrowmdist'})\\
		&=\prob(\hypp[\sepbias]{\sqrt{k}\normgauss'}\; \rho\text{-separates } \dimredmatrix\set^- \text{ from } \dimredmatrix\set^+\suchthat \event_{\narrow',\narrowmdist'})\cdot \prob(\event_{\narrow',\narrowmdist'}). 
	\end{align} 
	Since $\set^+$ and $\set^-$ are $(\narrow,\narrowmdist)$-linearly separable, 
	there exists a vector $\sepdir\in \S^{d-1}$ such that  
	\begin{equation}
		\sp{\sepdir}{\x^+-\x^-}\geq (1-\narrow)\lnorm{\x^+-\x^-}\qquad \text{ for all } \x^+\in \set^+, \x^-\in \set^-.
	\end{equation}
	For $\dimredprecision\in (0,\tfrac{1}{2}]$, $\alpha\in [0,2], \beta\in[0,\tfrac{\narrowmdist}{2}]$ define the event $\eventalt_{\kappa,\alpha,\beta}$ where:
	\begin{enumerate}
		\item $(1-\dimredprecision)\lnorm{\sepdir}^2\leq \lnorm{\dimredmatrix\sepdir}^2\leq(1+\dimredprecision)\lnorm{\sepdir}^2$, 
		\item For all $\x^+\in \set^+, \x^-\in \set^-$, 
		\begin{equation}
			\lnorm{\sepdir-(\x^+-\x^-)} -\alpha\leq\lnorm{\dimredmatrix\sepdir -\dimredmatrix(\x^+-\x^-)}\leq \lnorm{\sepdir-(\x^+-\x^-)} +\alpha, 
		\end{equation}
		\item For all $\x^+\in \set^+, \x^-\in \set^-$,
		\begin{equation}
			\lnorm{\x^+-\x^-}-\beta\leq \lnorm{\dimredmatrix(\x^+-\x^-)}\leq \lnorm{\x^+-\x^-}+\beta,
		\end{equation}
		\item There exists $\x^-\in \set^-$ such that $\lnorm{\dimredmatrix\x^-}^2\leq (1+\dimredprecision)\lnorm{\x^-}^2$,
		\item There exists $\x^+\in \set^+$ such that $\lnorm{\dimredmatrix\x^+}^2\leq (1+\dimredprecision)\lnorm{\x^+}^2$.
	\end{enumerate}
	On the event $\eventalt_{\kappa,\alpha,\beta}$ we clearly have $\dimredmatrix\set^-,\dimredmatrix\set^+\subset5\ball[2][k]$ and 
	$\lnorm{\dimredmatrix\x^+-\dimredmatrix\x^-}\geq \tfrac{\narrowmdist}{2}$ for all $\x^+\in\nobreak \set^+$, $\x^-\in\nobreak \set^-$.
	Further, 
	by applying Lemma~\ref{lem:dimred} for $\set=\set^+-\set^-\subset 2 \ball[2][d]$ and $t=1-\narrow$, we obtain that on the event $\eventalt_{\kappa,\alpha,\beta}$,  
	\begin{equation}
		\sp{\tfrac{\dimredmatrix\sepdir}{\lnorm{\dimredmatrix\sepdir}}}{\dimredmatrix(\x^+-\x^-)}\geq \sqrt{\tfrac{2}{3}}(1-\narrow)\lnorm{\dimredmatrix(\x^+-\x^-)}- \tfrac{\dimredprecision}{\sqrt{2}}-\sqrt{2}(3+(1-\narrow))\beta-\tfrac{3}{\sqrt{2}}(1+2)\alpha
	\end{equation}
	for all $\x^+\in \set^+, \x^-\in \set^-$. 
	For $\kappa\lesssim \narrowmdist(1-\narrow)$ and $\alpha,\beta\lesssim \narrowmdist(1-\narrow)$, we obtain 
	\begin{equation}
		\sp{\tfrac{\dimredmatrix\sepdir}{\lnorm{\dimredmatrix\sepdir}}}{\dimredmatrix(\x^+-\x^-)}\geq \tfrac{1}{2}(1-\narrow)\lnorm{\dimredmatrix(\x^+-\x^-)}\qquad \text{for all } \x^+\in \set^+, \x^-\in \set^-. 
	\end{equation}
	Hence,
	if $\kappa\lesssim \narrowmdist(1-\narrow)$ and $\alpha,\beta\lesssim \narrowmdist(1-\narrow)$, then 
	$\eventalt_{\kappa,\alpha,\beta}\subset \event_{\narrow',\narrowmdist'}
	$
	for $\narrow' = \tfrac{1+\narrow}{2}$ and $\narrowmdist'=\tfrac{\narrowmdist}{2}$.
	For this choice of $\narrow'$ and $\narrowmdist'$, 
	Corollary~\ref{coro:hyperplane_sep:uniform} implies that if  
	$\biasp\geq 5\sqrt{k}$,
	then
	\begin{align}
		&\prob_{\normgauss',\tau}(\hypp[\sepbias]{\sqrt{k}\normgauss'}\; c\narrowmdist(1-\narrow)\sqrt{k}\text{-separates } \dimredmatrix\set^- \text{ from } \dimredmatrix\set^+\suchthat \event_{\narrow',\narrowmdist'})\\
		&\geq 
		c\tfrac{\narrowmdist(1-\narrow)}{\biasp}\sqrt{k}\exp({-C}k\log(4(1-\narrow)^{-1})). 
	\end{align}
	Therefore, applying \eqref{eq:thm:hyperplane_sep:general:main} with $\rho=c\narrowmdist(1-\narrow)\sqrt{k}$ and $\narrow' = \tfrac{1+\narrow}{2}$,  $\narrowmdist'=\tfrac{\narrowmdist}{2}$, we obtain that if $\biasp\geq 5\sqrt{k}$, then 
	\begin{align}\label{eq:thm:hyperplane_sep:general:main2}
		&\prob(\hypp[\sepbias]{\g}\; c\narrowmdist(1-\narrow)\sqrt{k}\text{-separates } \set^- \text{ from } \set^+)\\
		&\geq c\tfrac{\narrowmdist(1-\narrow)}{\biasp}\sqrt{k}\exp({-C}k\log(4(1-\narrow)^{-1}))\cdot \prob(\event_{\narrow',\narrowmdist'})\\
		&\geq c\tfrac{\narrowmdist(1-\narrow)}{\biasp}\sqrt{k}\exp({-C}k\log(4(1-\narrow)^{-1}))\cdot \prob(\eventalt_{\kappa,\alpha,\beta}),
	\end{align} 
	where the second inequality holds for $\kappa\lesssim \narrowmdist(1-\narrow)$ and $\alpha,\beta\lesssim \narrowmdist(1-\narrow)$. 
	
	Let $\mathcal{T}\subset \R^d$ be a set. 
	By matrix deviation inequality for Gaussian matrices (e.g., see~\citealp[Sec.~9.1]{ver18}), if
	\begin{equation}
		k\gtrsim \theta^{-2}(\effdim{\mathcal{T}}+\log(2/\probsuccess)\radius^2(\mathcal{T})),
	\end{equation}
	then with probability at least $1-\probsuccess$, 
	\begin{equation}
		\sup_{\x\in\mathcal{T}}\big|\lnorm{\dimredmatrix\x}-\lnorm{\x}\big|\leq \theta. 
	\end{equation}
	Hence, a union bound implies that if 
	\begin{equation}
		k\gtrsim \kappa^{-2}\log(2/\probsuccess),\quad
		k\gtrsim (\alpha^{-2}+\beta^{-2})(\effdim{\set^+-\set^-}+\log(2/\probsuccess)),
	\end{equation}
	then $\eventalt_{\kappa,\alpha,\beta}$ occurs with probability at least $1-\probsuccess$. In particular, if 
	\begin{align}
		k&\gtrsim \narrowmdist^{-2}(1-\narrow)^{-2}(\effdim{\set^+-\set^-}+1), 
	\end{align}
	then $\eventalt_{\kappa,\alpha,\beta}$ with $\kappa\sim \narrowmdist(1-\narrow)$ and $\alpha,\beta\sim \narrowmdist(1-\narrow)$ occurs with probability at least $\tfrac{1}{2}$. 
	Combining this result with \eqref{eq:thm:hyperplane_sep:general:main2}, we obtain that if 
	\begin{align}
		k&\gtrsim \narrowmdist^{-2}(1-\narrow)^{-2}(\effdim{\set^+-\set^-}+1), \quad \biasp\geq 5\sqrt{k}, 
	\end{align}
	then 
	\begin{equation}
		\prob(\hypp[\sepbias]{\g}\; c\narrowmdist(1-\narrow)\sqrt{k}\text{-separates } \set^- \text{ from } \set^+)
		\geq c\tfrac{\narrowmdist(1-\narrow)}{2\biasp}\sqrt{k}\exp({-C}k\log(4(1-\narrow)^{-1})). 
	\end{equation}
	Let $\marg\coloneqq \narrowmdist(1-\narrow)$. 
	Setting $k=\marg^{-2}\hyppsep^2$ completes the proof.
\end{proof}

\subsection{Separation of Two Points}

The following result concerns the separation of two arbitrary points by a random hyperplane, and does not follow directly from Theorem~\ref{thm:hyperplane_sep:general}.
It can be shown in a more elementary way, leading to a stronger statement.
\begin{theorem}\label{thm:prob_single_hyperplane_separates_two_points}
	There exist absolute constants $c,C>0$ such that the following holds.
	
	Let $\x^-, \x^+ \in \Rad\ball[2][d]$.
	Let $\g \in \R^d$ denote a standard Gaussian random vector and let $\sepbias\in\nobreak [-\biasp,\biasp]$ be uniformly distributed.
	If $\biasp\geq C\Rad$, then
	with probability at least $c\lnorm{\x^+ -\nobreak \x^-} / \biasp$, the hyperplane $\hypp[\sepbias]{\g}$ $\lnorm{\x^+ -\nobreak \x^-}$-separates $\x^-$ from $\x^+$.
\end{theorem}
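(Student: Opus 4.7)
The plan is to reduce the separation event to a one-dimensional condition on $\sepbias$ conditional on $\g$, and then show that the relevant event for $\g$ has absolute constant probability. Let $d \coloneqq \lnorm{\x^+-\x^-}$ (assume $d>0$, else the claim is trivial), $\vunit \coloneqq (\x^+-\x^-)/d$, and $\bar{\x} \coloneqq (\x^++\x^-)/2 \in \Rad\ball[2][d]$. Setting $g_1 \coloneqq \sp{\g}{\vunit}$ and $S \coloneqq \sp{\g}{\bar{\x}}$, the identities $\x^{\pm} = \bar{\x} \pm \tfrac{d}{2}\vunit$ yield $\sp{\g}{\x^{\pm}} = S \pm \tfrac{d}{2}g_1$, and the two separation conditions $\sp{\g}{\x^-}+\sepbias \leq -d$ and $\sp{\g}{\x^+}+\sepbias \geq d$ become jointly equivalent to
\begin{equation}
\sepbias \in \bigl[-S - \tfrac{d}{2}(g_1-2),\ -S + \tfrac{d}{2}(g_1-2)\bigr],
\end{equation}
a non-empty interval of length $d(g_1-2)$ precisely when $g_1 > 2$.

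The natural strategy is to freeze $g_1$ in a constant window, say $g_1 \in [3,4]$, which forces this interval to have length between $d$ and $2d$ and half-width at most $d \leq 2\Rad$. Since $\sepbias$ is uniform on $[-\biasp,\biasp]$, the conditional probability of separation over $\sepbias$ is then at least $d/(2\biasp)$ \emph{provided} the whole interval lies inside $[-\biasp,\biasp]$. Containment is equivalent to $|S| \leq \biasp - \tfrac{d}{2}(g_1-2)$, so the main technical step is a uniform upper bound on $|S|$ on the conditioning event. The main obstacle is that $g_1$ and $S$ are correlated Gaussians, so I cannot just apply a marginal Gaussian tail estimate to $|S|$.

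To handle the dependence, I would decompose $\bar{\x} = \alpha \vunit + \bar{\x}_\perp$ with $|\alpha| \leq \Rad$ and $\lnorm{\bar{\x}_\perp} \leq \Rad$, and write $S = \alpha g_1 + S'$, where $S' \coloneqq \sp{\g}{\bar{\x}_\perp}$ is Gaussian with variance $\lnorm{\bar{\x}_\perp}^2 \leq \Rad^2$ and, crucially, independent of $g_1$. The event
\begin{equation}
\mathcal{E} \coloneqq \{g_1 \in [3,4]\} \intersec \{|S'| \leq 2\Rad\}
\end{equation}
then factorizes; $\prob(g_1 \in [3,4])$ is a fixed positive Gaussian tail weight, and $\prob(|S'| \leq 2\Rad) \geq \prob(|\Normdistr{0}{1}| \leq 2) > 0.95$, so $\prob(\mathcal{E}) \geq c_0$ for an absolute constant $c_0 > 0$. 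On $\mathcal{E}$ one has $|S| \leq 4|\alpha| + |S'| \leq 6\Rad$, while $\tfrac{d}{2}(g_1-2) \leq 2\Rad$, so taking $C \geq 8$ in the hypothesis $\biasp \geq C\Rad$ guarantees $|S| + \tfrac{d}{2}(g_1-2) \leq 8\Rad \leq \biasp$, i.e., the separation interval lies inside $[-\biasp,\biasp]$.

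Finally, since $\sepbias$ is independent of $\g$, conditioning on $\mathcal{E}$ and integrating,
\begin{equation}
\prob\bigl(\hypp[\sepbias]{\g} \text{ $d$-separates } \x^- \text{ from } \x^+\bigr) \geq \prob(\mathcal{E}) \cdot \tfrac{d}{2\biasp} \geq \tfrac{c_0}{2} \cdot \tfrac{d}{\biasp},
\end{equation}
which is exactly the claimed bound. The proof is essentially a careful one-dimensional volume estimate once the orthogonal decomposition of $\bar{\x}$ isolates the component of $\g$ that is independent of the ``separation direction'' $\vunit$.
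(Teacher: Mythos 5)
Your proof is correct and takes essentially the same route as the paper: both reduce the claim to the one-dimensional event that the uniform bias $\sepbias$ falls in a window of length $\gtrsim \lnorm{\x^+-\x^-}$ determined by $\g$, verify on a constant-probability event for $\g$ that this window lies inside $[-\biasp,\biasp]$ (which is exactly where $\biasp\geq C\Rad$ enters), and then multiply the two probabilities. The only difference is how that event for $\g$ is produced --- the paper conditions on $\sp{\g}{\x^+-\x^-}\geq 4\lnorm{\x^+-\x^-}$ and controls $\sp{\g}{\x^+}$ and $\sp{\g}{\x^-}$ via sub-Gaussian tails plus a union bound (with $C$ implicit), whereas you split off the component of the midpoint orthogonal to the separation direction to gain exact independence and explicit constants --- and the non-strict inequality at the endpoint of your $\sepbias$-interval versus the strict inequality in Definition~\ref{def:separable_refined} is immaterial since $\sepbias$ has a density (the paper's proof shares this feature).
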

\begin{proof}
	Since $\tfrac{\x^+-\x^-}{\lnorm{\x^+-\x^-}}\in \S^{d-1}$, the random variable $\sp{\g}{\tfrac{\x^+-\x^-}{\lnorm{\x^+-\x^-}}}$ is standard Gaussian. Therefore, 
	\begin{equation}
		\prob(\sp{\g}{\x^+-\x^-}\geq 4\lnorm{\x^+-\x^-})\geq c
	\end{equation}
	for an absolute constant $c>0$. Further, we have the inequalities 
	\begin{equation}
		\prob(\sp{\g}{\x^+}\leq \biasp)\geq 1-\exp(-\biasp^2/2\lnorm{\x^+}^2)\geq 1-\exp(-\biasp^2/2\Rad^2)
	\end{equation}
	and 
	\begin{equation}
		\prob(\sp{\g}{\x^-}\geq -\biasp)\geq 1-\exp(-\biasp^2/2\Rad^2).
	\end{equation}
	Define the event 
	\begin{equation}
		\event\coloneqq \{\sp{\g}{\x^+-\x^-}\geq 4\lnorm{\x^+-\x^-}\}\intersec \{\sp{\g}{\x^+}\leq \biasp\}\intersec \{\sp{\g}{\x^-}\geq -\biasp\}.
	\end{equation}
	By the above, $\prob(\event)\geq c-2\exp(-\biasp^2/2\Rad^2)$. Therefore, 
	if $\biasp\geq C\Rad$ for $C>0$ an absolute constant that is chosen large enough, then 
	$\prob(\event)\geq\tfrac{c}{2}$. 
	Let us show that
	\begin{equation}
		\prob_\sepbias(\hypp[\sepbias]{\g}\; \lnorm{\x^+ - \x^-}\text{-separates } \x^- \text{ from } \x^+\; | \; \event)\geq \tfrac{\lnorm{\x^+-\x^-}}{\biasp}.
	\end{equation}
	Indeed, on the event $\event$ it holds $\sp{\g}{\x^+}, \sp{\g}{\x^-}\in [-\biasp, \biasp]$ and $\sp{\g}{\x^+-\x^-}\geq 4\lnorm{\x^+-\x^-}$. In particular, the interval $\mathcal{I}\coloneqq [\sp{\g}{\x^-}+\lnorm{\x^+-\x^-}, \sp{\g}{\x^+}-\lnorm{\x^+-\x^-}]$ belongs to $[-\biasp, \biasp]$ with $|\mathcal{I}|\geq 2\lnorm{\x^+-\x^-}$. If $-\sepbias\in \mathcal{I}$, then $\hypp[\sepbias]{\g}$ $\lnorm{\x^+ - \x^-}$-separates $\x^-$ from~$\x^+$. Therefore, 
	\begin{align}
		\prob_\sepbias(\hypp[\sepbias]{\g}\; \lnorm{\x^+ - \x^-}\text{-separates } \x^- \text{ from } \x^+\; | \; \event)\geq
		\prob_\sepbias(-\sepbias\in \mathcal{I}\; | \; \event) = \tfrac{|\mathcal{I}|}{2\biasp}\geq \tfrac{\lnorm{\x^+-\x^-}}{\biasp}. 
	\end{align}
	The result now follows from
	\begin{align}
		&\prob(\hypp[\sepbias]{\g}\; \lnorm{\x^+ - \x^-}\text{-separates } \x^- \text{ from } \x^+)\\
		&\geq \prob(\{\hypp[\sepbias]{\g}\; \lnorm{\x^+ - \x^-}\text{-separates } \x^- \text{ from } \x^+\}\intersec \event)\\
		&=
		\prob(\hypp[\sepbias]{\g}\; \lnorm{\x^+ - \x^-}\text{-separates } \x^- \text{ from } \x^+\; | \; \event)\cdot \prob(\event)\\
		&\geq \tfrac{\lnorm{\x^+-\x^-}}{\biasp}\cdot \tfrac{c}{2}.
	\end{align}
\end{proof}

\section{Distance Preservation}
\label{sec:distance}


The following theorem describes how the Euclidean geometry of two sets $\set^-, \set^+$ is transformed by applying a random ReLU-layer. It shows that with high probability, Euclidean distances are approximately preserved provided that both the layer is wide and the bias parameter $\biasp$ is large enough. 

\begin{theorem}\label{thm:distance_preservation_ReLU}
	There exist absolute constants $C, C', c >0$ such that the following holds. 
	
	Let $\set^-, \set^+\subset \Rad \ball[2][d]$ and let $\NNlayer \colon \R^d\to \R^n$ be a random ReLU-layer with maximal bias $\biasp \geq 0$.
	If $\biasp\geq C \Rad \sqrt{\log(\biasp^2/\varepsilon)}$ for $0 < \varepsilon \leq \biasp^2/ e$, and
	\begin{equation}\label{eq:distance_preservation_ReLU:cond_n}  
		n\geq C' \varepsilon^{-2}\biasp^2\big(\effdim{\set^+}+\effdim{\set^-}+ \probsuccessexp ^2\biasp^2\big),
	\end{equation}
	then with probability at least $1-2\exp(-c \probsuccessexp^2)$, the following three events occur: 
	\begin{thmproperties}
		\item For all $\x^+\in \set^+, \x^-\in \set^-$, we have
		\begin{equation}\label{eq:distance_preservation_ReLU:distance}
			\abs[\Big]{\lnorm{\NNlayer(\x^+)-\NNlayer(\x^-)}^2 - \lnorm{\x^+ - \x^-}^2\Big(1  - \sqrt{\tfrac{2}{\pi}} \tfrac{\lnorm{\x^+ -\x^-}}{3\biasp}  \Big)} \leq \varepsilon.
		\end{equation}
		\item For all $\x\in \set^-\union \set^+$, we have
		\begin{equation}\label{eq:distance_preservation_ReLU:norm}
			\abs[\Big]{\lnorm{\NNlayer(\x)}^2 - \big(\lnorm{\x}^2 + \tfrac{\biasp^2}{3}\big)}
			\leq \varepsilon.
		\end{equation}	
		\item
		For all $\x^+\in \set^+, \x^-\in \set^-$, we have
		\begin{equation}\label{eq:distance_preservation_ReLU:ip}
			\abs[\Big]{\sp{\NNlayer(\x^+)}{\NNlayer(\x^-)} - \Big(\sp{\x^+}{\x^-} + 
				\tfrac{\biasp^2}{3}  + \sqrt{\tfrac{2}{\pi}}\tfrac{1}{6}\tfrac{\lnorm{\x^+ -\x^-}^3}{\biasp}\Big) } \leq \varepsilon.
		\end{equation}
	\end{thmproperties}
\end{theorem}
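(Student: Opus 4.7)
The plan is to establish all three items simultaneously by first computing the pointwise expectations exactly and then proving a uniform concentration bound via a chaining-style argument. Throughout, I fix an index $i\in[n]$ and write $g^\pm = \sp{\w_i}{\x^\pm}$, $b=b_i$; the $g^\pm$ are jointly Gaussian with $\var g^\pm = \lnorm{\x^\pm}^2$, $\cov(g^+,g^-)=\sp{\x^+}{\x^-}$, and $b$ is uniform on $[-\biasp,\biasp]$.

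\textbf{Step 1 (pointwise expectations).} For (ii), the symmetry $(g,b)\overset{d}{=}(-g,-b)$ gives $2\mean[\relu(g+b)^2]=\mean[(g+b)^2]=\lnorm{\x}^2+\biasp^2/3$, matching the target. For (iii), the identity $\relu(u)\relu(v)+\relu(-u)\relu(-v)=uv\cdot\mathds{1}_{uv>0}$ combined with the same symmetry yields
\begin{equation}
2\mean[\relu(g^++b)\relu(g^-+b)]=\mean[(g^++b)(g^-+b)]-\mean[(g^++b)(g^-+b)\mathds{1}_{(g^++b)(g^-+b)<0}].
\end{equation}
Integrating in $b$ first, using that $(u+b)(v+b)<0$ iff $b$ lies strictly between $-u$ and $-v$, produces $-|g^+-g^-|^3/(12\biasp)$ whenever the relevant interval is contained in $[-\biasp,\biasp]$. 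The event $\{|g^\pm|>\biasp\}$ contributes an error bounded by a Gaussian tail which, thanks to $\biasp\gtrsim \Rad\sqrt{\log(\biasp^2/\varepsilon)}$, is $\lesssim \varepsilon$. Since $g^+-g^-=\sp{\w}{\x^+-\x^-}\sim\Normdistr{0}{\lnorm{\x^+-\x^-}^2}$ and $\mean|Z|^3=2\sqrt{2/\pi}$, I recover exactly the correction term in (iii). The identity (i) then follows by polarization at the level of expectations.

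\textbf{Step 2 (pointwise concentration).} For each fixed pair, the quantities of interest are sums of $n$ i.i.d.\ sub-exponential variables: products of sub-Gaussians with sub-Gaussian norm $\lesssim \Rad+\biasp$. A standard Bernstein estimate gives deviation $\lesssim \biasp(\Rad+\biasp)\sqrt{(\log(1/\eta))/n}$ with probability $1-\eta$, so the pointwise version of (i)--(iii) with accuracy $\varepsilon$ holds as soon as $n\gtrsim \varepsilon^{-2}\biasp^2(\log(1/\eta)+\biasp^2\log(1/\eta))$, which is consistent with condition \eqref{eq:distance_preservation_ReLU:cond_n}.

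\textbf{Step 3 (uniform bound over $\set^+\times\set^-$).} The hard part is upgrading the pointwise estimate to a supremum, and here the dithering is crucial: averaging $b$ over $[-\biasp,\biasp]$ renders the expectation a smooth, Lipschitz-type function of $(\x^+,\x^-)$ even though the ReLU itself is not, which is exactly what permits a mean-width-based complexity bound. I would view $\x\mapsto\NNlayer(\x)$ as a sub-Gaussian process whose increments satisfy $\normsubg{[\NNlayer(\x^+)-\NNlayer(\x^-)]_i}\lesssim \lnorm{\x^+-\x^-}/\sqrt{n}$ (ReLU is $1$-Lipschitz and the Gaussian rows provide sub-Gaussianity), and apply a generic chaining argument in the spirit of the matrix deviation inequality for sub-Gaussian ensembles. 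The Gaussian width of $\set^+-\set^-$ governs the resulting uniform deviation, producing the term $\effdim{\set^+}+\effdim{\set^-}$ in \eqref{eq:distance_preservation_ReLU:cond_n}; the extra $\probsuccessexp^2\biasp^2$ term accommodates the sub-exponential tail needed to boost success probability to $1-2\exp(-c\probsuccessexp^2)$ and the variance at a single reference point. The uniform version of (ii) is obtained by specializing to $\x^+=\x^-=\x$ (including an auxiliary anchor at $0$), and the uniform version of (iii) then follows from polarization, so that a single chaining run combined with a union bound over the three statements yields the result. The main obstacle is the chaining estimate itself: carefully verifying the sub-Gaussian increment bounds for the \emph{squared} ReLU process and showing that the boundary contributions from the event $\{|g^\pm|>\biasp\}$ remain uniformly $\lesssim\varepsilon$ over all of $\set^-\cup\set^+$, which is where the logarithmic safety factor in the lower bound on $\biasp$ is consumed.
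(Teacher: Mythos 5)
Your overall architecture matches the paper's proof, and your Steps 1 and 2 are sound: the paper likewise reduces everything to the inner-product statement \eqref{eq:distance_preservation_ReLU:ip} (getting \eqref{eq:distance_preservation_ReLU:norm} analogously and \eqref{eq:distance_preservation_ReLU:distance} by polarization), computes $\mean[\relu(a+\sepbias)\relu(b+\sepbias)]$ exactly in Lemma~\ref{lem:inner_dither}, and truncates to the event $\{\abs{\sp{\g}{\x^\pm}}\leq\biasp\}$, using $\biasp\gtrsim\Rad\sqrt{\log(\biasp^2/\varepsilon)}$ to make the tail contribution $\lesssim\varepsilon$ --- exactly the role you assign to the logarithmic factor. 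Your sign-symmetry identities ($\relu(u)\relu(v)+\relu(-u)\relu(-v)=uv$ on $\{uv>0\}$, and the analogous identity for squares) are a legitimate, slightly slicker route to the same expectation formula as Proposition~\ref{prop:expectation_inner_product}, and the $b$-integration indeed produces the $-\abs{g^+-g^-}^3/(12\biasp)$ correction with $\mean\abs{Z}^3=2\sqrt{2/\pi}$ giving the cubic term.

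The genuine gap is Step 3, which is the heart of the theorem. What must be controlled uniformly over $\set^+\times\set^-$ is the centered empirical product $\tfrac{2}{n}\sum_{i=1}^n\relu(\sp{\w_i}{\x^+}+\bias_i)\relu(\sp{\w_i}{\x^-}+\bias_i)$ minus its expectation. A chaining argument \enquote{in the spirit of the matrix deviation inequality}, based on the sub-Gaussian increments of $\x\mapsto\NNlayer(\x)$, does not apply to this quantity: it is a product of two distinct sub-Gaussian processes whose $\psi_2$-radii are of order $\biasp$, so its increments are only sub-exponential with multiplier $\asymp\biasp$ --- the \enquote{sub-Gaussian increment bounds for the squared ReLU process} that you flag as needing verification simply do not hold with constants independent of $\biasp$. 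Bernstein-type chaining with sub-exponential increments produces a $\gamma_1$-type complexity term that is not controlled by the squared mean widths appearing in \eqref{eq:distance_preservation_ReLU:cond_n}, so this route does not yield the stated dependence on $\effdim{\set^+}+\effdim{\set^-}$. The paper closes precisely this step by invoking a concentration inequality for empirical product processes with sub-Gaussian factors (\citealp[Thm.~1.13]{men16}), which exploits the sub-Gaussianity of each factor separately and gives the uniform bound $C\big(\tfrac{(\meanwidth{\set^+}+u\biasp)(\meanwidth{\set^-}+u\biasp)}{n}+\tfrac{\biasp(\meanwidth{\set^+}+\meanwidth{\set^-})+u\biasp^2}{\sqrt{n}}\big)$ with probability $1-2\exp(-cu^2)$, which \eqref{eq:distance_preservation_ReLU:cond_n} renders at most $\varepsilon$. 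Without such a product-process (or decoupling/multiplier-process) inequality, the step you yourself call the \enquote{main obstacle} remains open, and the proposal does not prove the theorem.
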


The following two results are straightforward corollaries of \eqref{eq:distance_preservation_ReLU:norm} and \eqref{eq:distance_preservation_ReLU:distance} in Theorem~\ref{thm:distance_preservation_ReLU}, respectively.

\begin{corollary}\label{coro:distance_preservation_ReLU:norm}
	There exist absolute constants $C, C'>0$ such that the following holds. 
	
	Let $\set\subset \Rad \ball[2][d]$ and let $\NNlayer \colon \R^d\to \R^n$ be a random ReLU-layer with maximal bias $\biasp \geq 0$.
	For any $\probsuccess\in (0,1)$, if $\biasp\geq C \Rad$ and
	\begin{equation}\label{eq:distance_preservation_ReLU:norm:cond}  
		n\geq C' \big(\biasp^{-2}\effdim{\set}+ \log(e/\probsuccess)\big),
	\end{equation}
	then $\NNlayer(\set)\subset \biasp \ball[2][n]$ with probability at least $1-\probsuccess$.
\end{corollary}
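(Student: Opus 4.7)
The plan is to derive this corollary directly from the norm-preservation bound \eqref{eq:distance_preservation_ReLU:norm} of Theorem~\ref{thm:distance_preservation_ReLU} by instantiating it with $\set^- = \set^+ = \set$ and carefully tuning the accuracy parameter $\varepsilon$. The goal is $\lnorm{\NNlayer(\x)}^2 \leq \biasp^2$ uniformly on $\set$. From \eqref{eq:distance_preservation_ReLU:norm} we control the output norm by $\lnorm{\x}^2 + \biasp^2/3 + \varepsilon$, so using $\x \in \Rad\ball[2][d]$ we obtain $\lnorm{\NNlayer(\x)}^2 \leq \Rad^2 + \biasp^2/3 + \varepsilon$. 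The strategy is therefore to pick $\varepsilon$ as a fixed small multiple of $\biasp^2$ (say $\varepsilon = \biasp^2/6$) and invoke $\biasp \geq C\Rad$ with $C$ large enough that $\Rad^2 \leq \biasp^2/C^2 \leq \biasp^2/6$, which then forces $\lnorm{\NNlayer(\x)}^2 \leq \biasp^2/6 + \biasp^2/3 + \biasp^2/6 < \biasp^2$.

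Next, I would verify the hypotheses of Theorem~\ref{thm:distance_preservation_ReLU} under this choice. The condition $\biasp \geq C\Rad\sqrt{\log(\biasp^2/\varepsilon)}$ collapses to $\biasp \gtrsim \Rad$ since $\log(\biasp^2/\varepsilon) = \log 6$ is an absolute constant, and is absorbed into the assumption $\biasp \geq C\Rad$ of the corollary (by enlarging $C$). The width condition \eqref{eq:distance_preservation_ReLU:cond_n} becomes
\begin{equation}
	n \gtrsim \biasp^{-2}\bigl(2\effdim{\set} + \probsuccessexp^2\biasp^2\bigr) = \biasp^{-2}\effdim{\set} + \probsuccessexp^2,
\end{equation}
after multiplying through by $\varepsilon^{-2} = 36\biasp^{-4}$ and simplifying.

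To match the target failure probability $\probsuccess$, I would choose $\probsuccessexp \asymp \sqrt{\log(e/\probsuccess)}$, so that $2\exp(-c\probsuccessexp^2) \leq \probsuccess$. With this choice the width condition reduces exactly to \eqref{eq:distance_preservation_ReLU:norm:cond}, namely $n \gtrsim \biasp^{-2}\effdim{\set} + \log(e/\probsuccess)$, as required. Putting the three pieces together — the norm inequality, the reduction of the bias hypothesis, and the translation of the width hypothesis — yields $\lnorm{\NNlayer(\x)} \leq \biasp$ uniformly over $\set$ with probability at least $1-\probsuccess$.

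There is no serious obstacle here; the content is entirely arithmetic bookkeeping. The only mild point to get right is balancing the three numerical constants ($\Rad^2/\biasp^2$ from the diameter, $1/3$ from the bias variance contribution, and the slack $\varepsilon/\biasp^2$) so that their sum stays strictly below $1$, which pins down how large $C$ must be chosen in terms of the fixed absolute constant $C$ appearing in Theorem~\ref{thm:distance_preservation_ReLU}.
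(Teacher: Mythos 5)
Your proposal is correct and matches the paper's (implicit) argument: the paper presents this corollary as a direct consequence of the norm bound \eqref{eq:distance_preservation_ReLU:norm} in Theorem~\ref{thm:distance_preservation_ReLU}, obtained exactly as you do by taking $\varepsilon \asymp \biasp^2$, absorbing $\log(\biasp^2/\varepsilon)$ into an absolute constant so the bias hypothesis reduces to $\biasp \gtrsim \Rad$, choosing $\probsuccessexp \asymp \sqrt{\log(e/\probsuccess)}$, and noting $\Rad^2 + \biasp^2/3 + \varepsilon \leq \biasp^2$ for $C$ large. The constant bookkeeping in your write-up (including the check $\varepsilon \leq \biasp^2/e$) is sound.
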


\begin{corollary}\label{coro:distance_preservation_ReLU:mdist}
	There exist absolute constants $C, C' >0$ such that the following holds. 
	
	Let $\set^-, \set^+\subset \Rad \ball[2][d]$ be $\mdist$-separated sets and let $\NNlayer \colon \R^d\to \R^n$ be a random ReLU-layer with maximal bias $\biasp \geq 0$.
	For any $\probsuccess\in (0,1)$, if $\biasp\geq C \Rad \sqrt{\log(\biasp/\mdist)}$, $\biasp/\mdist\geq e$, and
	\begin{equation}\label{eq:distance_preservation_ReLU:mdist:cond}  
		n\geq C' \mdist^{-4}\biasp^2\big(\effdim{\set^+}+\effdim{\set^-}+ \log(e/\probsuccess)\biasp^2\big),
	\end{equation}
	then $\NNlayer(\set^-)$ and $\NNlayer(\set^+)$ are $\tfrac{\mdist}{2}$-separated 
	with probability at least $1-\probsuccess$.
\end{corollary}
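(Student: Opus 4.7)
The plan is to deduce Corollary~\ref{coro:distance_preservation_ReLU:mdist} as a direct consequence of Theorem~\ref{thm:distance_preservation_ReLU}, applied with a suitably chosen accuracy parameter $\varepsilon$. Specifically, I will take $\varepsilon \asymp \mdist^2$, so that the distance preservation bound \eqref{eq:distance_preservation_ReLU:distance} becomes strong enough to certify that no pair $\NNlayer(\x^+),\NNlayer(\x^-)$ can be closer than $\mdist/2$.

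First I would verify that the hypotheses of Theorem~\ref{thm:distance_preservation_ReLU} are met for $\varepsilon := \mdist^2/4$. The assumption $\biasp/\mdist \geq e$ guarantees $\varepsilon \leq \biasp^2/e$. The required condition $\biasp \geq C\Rad\sqrt{\log(\biasp^2/\varepsilon)} = C\Rad\sqrt{\log(4\biasp^2/\mdist^2)}$ is implied by the corollary's hypothesis $\biasp \geq C\Rad\sqrt{\log(\biasp/\mdist)}$ (after absorbing the factor $2$ and the additive $\log 4$ into the constant, using $\log(\biasp/\mdist) \geq 1$). Setting the auxiliary deviation parameter $\probsuccessexp^2 \asymp \log(e/\probsuccess)$ absorbs $2\exp(-c\probsuccessexp^2) \leq \probsuccess$, and plugging $\varepsilon = \mdist^2/4$ into \eqref{eq:distance_preservation_ReLU:cond_n} recovers exactly the width condition \eqref{eq:distance_preservation_ReLU:mdist:cond} up to constants.

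Next, on the high-probability event delivered by Theorem~\ref{thm:distance_preservation_ReLU}, I would unpack \eqref{eq:distance_preservation_ReLU:distance} for arbitrary $\x^+\in\set^+$, $\x^-\in\set^-$. Since $\lnorm{\x^+-\x^-} \leq 2\Rad$ and $\biasp \geq C\Rad$ (with $C$ a sufficiently large absolute constant, which follows from $\biasp \gtrsim \Rad\sqrt{\log(\biasp/\mdist)}$), the distortion factor satisfies
\begin{equation}
1 - \sqrt{\tfrac{2}{\pi}}\tfrac{\lnorm{\x^+-\x^-}}{3\biasp} \;\geq\; 1 - \sqrt{\tfrac{2}{\pi}}\tfrac{2\Rad}{3\biasp} \;\geq\; \tfrac{1}{2}.
\end{equation}
Combined with $\lnorm{\x^+-\x^-} \geq \mdist$ and \eqref{eq:distance_preservation_ReLU:distance}, this yields
\begin{equation}
\lnorm{\NNlayer(\x^+)-\NNlayer(\x^-)}^2 \;\geq\; \tfrac{1}{2}\lnorm{\x^+-\x^-}^2 - \varepsilon \;\geq\; \tfrac{\mdist^2}{2} - \tfrac{\mdist^2}{4} = \tfrac{\mdist^2}{4},
\end{equation}
so $\lnorm{\NNlayer(\x^+)-\NNlayer(\x^-)} \geq \mdist/2$ uniformly, which is exactly the claimed $\tfrac{\mdist}{2}$-separation.

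There is no real obstacle in this derivation — the work lies entirely in Theorem~\ref{thm:distance_preservation_ReLU}. The only point that requires any care is checking that the logarithmic factor $\sqrt{\log(\biasp^2/\varepsilon)} = \sqrt{\log(4\biasp^2/\mdist^2)}$ arising from the theorem's hypothesis can be absorbed into the corollary's $\sqrt{\log(\biasp/\mdist)}$, which is immediate from $\biasp/\mdist \geq e$. Everything else is bookkeeping of absolute constants.
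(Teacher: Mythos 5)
Your proposal is correct and is exactly the argument the paper intends: the paper explicitly presents Corollary~\ref{coro:distance_preservation_ReLU:mdist} as a straightforward consequence of \eqref{eq:distance_preservation_ReLU:distance} in Theorem~\ref{thm:distance_preservation_ReLU}, and your choice $\varepsilon \asymp \mdist^2$ together with the constant-checking and the uniform lower bound on the distortion factor is precisely the omitted bookkeeping. No gaps.
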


Let us outline the main steps of the proof of Theorem~\ref{thm:distance_preservation_ReLU}. Note that it suffices to show \eqref{eq:distance_preservation_ReLU:ip}. Indeed, \eqref{eq:distance_preservation_ReLU:norm} trivially follows from \eqref{eq:distance_preservation_ReLU:ip}. Further, \eqref{eq:distance_preservation_ReLU:distance} follows from \eqref{eq:distance_preservation_ReLU:ip} and \eqref{eq:distance_preservation_ReLU:norm} by polarization. To show \eqref{eq:distance_preservation_ReLU:ip}, we proceed in two steps:
\begin{enumerate}
	\item 
	Compute the expected value of $\sp{\NNlayer(\x^+)}{\NNlayer(\x^-)}$ for two arbitrary points $\x^+, \x^-\in \Rad \ball[2][d]$ (see Proposition~\ref{prop:expectation_inner_product}). 
	\item 
	Show uniform concentration of $\sp{\NNlayer(\x^+)}{\NNlayer(\x^-)}$ around its expected value using a concentration result for empirical product processes due to \citet[Thm.~1.13]{men16}.
\end{enumerate}

\begin{proposition}\label{prop:expectation_inner_product}
	There exists an absolute constant $C>0$ such that the following holds. 
	
	Let $\NNlayer \colon \R^d\to \R^n$ be a random ReLU-layer with maximal bias $\biasp \geq 0$.
	If $\biasp\geq C \Rad \sqrt{\log(\biasp^2/\varepsilon)}$ and $\biasp^2/\varepsilon\geq e$, then for any
	$\x^+, \x^-\in \Rad \ball[2][d]$, we have that
	\begin{equation}
		\abs[\Big]{\mean[\sp{\NNlayer(\x^+)}{\NNlayer(\x^-)}] - \Big(\sp{\x^+}{\x^-} + 
			\tfrac{\biasp^2}{3}  + \sqrt{\tfrac{2}{\pi}}\tfrac{1}{6}\tfrac{\lnorm{\x^+-\x^-}^3}{\biasp}\Big)} \leq \varepsilon.
	\end{equation}
\end{proposition}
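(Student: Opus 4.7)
By the i.i.d.\ structure of the neurons, it suffices to show that $2\mean[\relu(g^+ + \tau)\relu(g^- + \tau)]$ matches the claimed expression up to additive error $\varepsilon$, where $g^\pm = \sp{\w}{\x^\pm}$ with $\w \distributed \Normdistr{\vnull}{\I{d}}$ and $\tau \distributed \Unif{[-\biasp,\biasp]}$ independent of $\w$. The decisive substitution will be to set $G \coloneqq \tfrac{g^+ + g^-}{2} = \sp{\w}{\vec m}$ and $H \coloneqq \tfrac{g^+ - g^-}{2} = \sp{\w}{\vec a}$, with $\vec m \coloneqq \tfrac{\x^+ + \x^-}{2}$ and $\vec a \coloneqq \tfrac{\x^+ - \x^-}{2}$; both lie in $\Rad\ball[2][d]$, so $(G, H)$ is centered jointly Gaussian with $\mean[G^2] = \lnorm{\vec m}^2 \leq \Rad^2$ and $\mean[H^2] = \lnorm{\vec a}^2 \leq \Rad^2$. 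Using the elementary identity $\relu(s+H)\relu(s-H) = (s^2 - H^2)\mathbf{1}_{s \geq |H|}$ with $s = G + \tau$, Tonelli's theorem, and the substitution $v = G + \tau$, I will rewrite
\begin{equation}
\mean[\relu(g^+ + \tau)\relu(g^- + \tau)] = \mean\Big[\tfrac{1}{2\biasp}\int_{G-\biasp}^{G+\biasp}(v^2 - H^2)\mathbf{1}_{v \geq |H|}\,dv\Big].
\end{equation}

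\textbf{Evaluation on a good event.} I will then introduce the event $\mathcal{E} \coloneqq \{|G| + |H| \leq \biasp\}$. On $\mathcal{E}$, both $G - \biasp \leq -|H|$ and $|H| \leq G + \biasp$ hold, so the inner integral collapses to the closed form
\begin{equation}
J(G, H) \coloneqq \tfrac{1}{2\biasp}\Big(\tfrac{(G+\biasp)^3}{3} - H^2(G+\biasp) + \tfrac{2|H|^3}{3}\Big).
\end{equation}
I will compute $2\mean[J(G, H)]$ via standard Gaussian moment identities: $\mean[(G+\biasp)^3] = \biasp^3 + 3\biasp\lnorm{\vec m}^2$ (odd central moments vanish), $\mean[H^2(G+\biasp)] = \lnorm{\vec a}^2\biasp$ (since $\mean[H^2 G] = 0$ by Isserlis), and $\mean[|H|^3] = 2\sqrt{2/\pi}\lnorm{\vec a}^3$ (half-normal moment). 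Simplifying with the polarization identities $\lnorm{\vec m}^2 - \lnorm{\vec a}^2 = \sp{\x^+}{\x^-}$ and $\lnorm{\vec a}^3 = \lnorm{\x^+-\x^-}^3/8$ will produce exactly the target $\sp{\x^+}{\x^-} + \tfrac{\biasp^2}{3} + \sqrt{\tfrac{2}{\pi}}\cdot\tfrac{1}{6}\cdot\tfrac{\lnorm{\x^+-\x^-}^3}{\biasp}$.

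\textbf{Controlling the bad event.} The principal technical step---and the main obstacle---will be to show that the contribution from $\mathcal{E}^c$ is at most $\varepsilon/2$. Writing the discrepancy as $\mean\big[\big(\mean_\tau[\relu(g^+ + \tau)\relu(g^- + \tau)] - J(G, H)\big)\mathbf{1}_{\mathcal{E}^c}\big]$ and applying Cauchy--Schwarz, I will need to control an $L^2$-bound on the integrand and a Gaussian tail bound for $\mathcal{E}^c$. For the former, both $\mean_\tau[\relu(g^+ + \tau)\relu(g^- + \tau)] \leq (|g^+| + \biasp)(|g^-| + \biasp)$ and $|J(G, H)|$ are polynomially dominated by cubics in $(|G|, |H|, \biasp)$ divided by $\biasp$, and the fourth moments of $G, H$ are $\lesssim \Rad^4 \leq \biasp^4$, yielding $(\mean[(\cdot)^2])^{1/2} \lesssim \biasp^2$. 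For the latter, a standard sub-Gaussian tail bound gives $\prob(\mathcal{E}^c) \leq \prob(|G| > \biasp/2) + \prob(|H| > \biasp/2) \leq 4\exp(-\biasp^2/(8\Rad^2))$. The hypothesis $\biasp \geq C\Rad\sqrt{\log(\biasp^2/\varepsilon)}$ then forces $\biasp^2 \exp(-\biasp^2/(16\Rad^2)) \lesssim \varepsilon$ for any sufficiently large absolute constant $C$; the only delicate point is calibrating $C$ so that the polynomial prefactor $\biasp^2$ is absorbed by the sub-Gaussian tail, which amounts to a routine choice of constants.
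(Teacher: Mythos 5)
Your proposal is correct, and its overall skeleton matches the paper's: reduce to a single neuron, evaluate the bias average in closed form on a high-probability event, show that the full Gaussian expectation of that closed form is exactly the target, and absorb the bad-event contribution via Cauchy--Schwarz, Gaussian tails, and the hypothesis $\biasp\geq C\Rad\sqrt{\log(\biasp^2/\varepsilon)}$ (the calibration of $C$ you call routine is exactly what the paper does). Where you genuinely diverge is in the computation of the main term. The paper first proves a standalone lemma giving $\mean_\sepbias[\relu(a+\sepbias)\relu(b+\sepbias)]$ for $a,b\in[-\biasp,\biasp]$, works on the event $\{\max(\abs{\sp{\g}{\x^+}},\abs{\sp{\g}{\x^-}})\leq\biasp\}$, and then evaluates the Gaussian expectation of that formula by a symmetry argument reducing $3(\min)^2\max-(\min)^3$ to $\tfrac12\mean\abs{\sp{\g}{\x^+-\x^-}}^3$. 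You instead pass to the half-sum/half-difference variables $G=\sp{\w}{\tfrac{\x^++\x^-}{2}}$, $H=\sp{\w}{\tfrac{\x^+-\x^-}{2}}$, use the identity $\relu(s+H)\relu(s-H)=(s^2-H^2)\mathbf{1}_{s\geq\abs{H}}$ to collapse the $\sepbias$-integral on the event $\{\abs{G}+\abs{H}\leq\biasp\}$, and finish with Isserlis and the half-normal third moment; I checked the algebra ($\mean[(G+\biasp)^3]$, $\mean[H^2(G+\biasp)]$, $\mean[\abs{H}^3]$, and the polarization identities) and it reproduces the target exactly. Your route avoids the separate dithering lemma and the min/max manipulation, and packages the error as a single term supported on the complement of your event, which is slightly cleaner; the paper's route isolates a reusable formula for the dithered ReLU correlation and makes more transparent where the $\abs{\cdot}^3$ correction originates. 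The moment bounds you invoke ($L^2$ norms $\lesssim\biasp^2$, using $\Rad\lesssim\biasp$ which follows from the hypotheses) and the tail bound for your event are sound, so no gap remains.
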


To prove Proposition~\ref{prop:expectation_inner_product}, we will make use of the following lemma:

\begin{lemma}\label{lem:inner_dither}  Let $\sepbias\in [-\biasp, \biasp]$ be uniformly distributed. For $a, b\in [-\biasp, \biasp]$,
	\begin{equation}
		\mean[\relu(a+\sepbias)\relu(b+\sepbias)]=  \tfrac{ab}{2} + \tfrac{(\min\{a, b\})^2 \max\{a,b\}}{4\biasp} - \tfrac{(\min\{a,b\})^3}{12\biasp} +
		(a+b)\tfrac{\biasp}{4} + \tfrac{\biasp^2}{6}.
	\end{equation}
\end{lemma}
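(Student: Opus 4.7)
The claim is a direct integral computation, so the plan is to reduce to a one-dimensional integral, split the domain according to where each ReLU is positive, expand the resulting polynomial, and integrate. By the symmetry of the statement in $a$ and $b$, I may assume without loss of generality that $a \leq b$, so that $a = \min\{a,b\}$ and $b = \max\{a,b\}$. Since $\sepbias$ is uniform on $[-\biasp,\biasp]$, the expectation equals
\begin{equation}
\mean[\relu(a+\sepbias)\relu(b+\sepbias)] = \frac{1}{2\biasp}\int_{-\biasp}^{\biasp} \relu(a+s)\relu(b+s)\,ds.
\end{equation}

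The key observation is that $\relu(a+s)$ and $\relu(b+s)$ are both strictly positive precisely when $s > -a$ (using $a \leq b$, so that $-a \geq -b$). Because $a \geq -\biasp$, the threshold $-a$ lies in $[-\biasp,\biasp]$, so the integrand vanishes on $[-\biasp, -a]$ and equals $(a+s)(b+s)$ on $[-a, \biasp]$. This reduces the calculation to
\begin{equation}
\int_{-a}^{\biasp}\bigl(ab + (a+b)s + s^{2}\bigr)\,ds = ab(\biasp+a) + \tfrac{a+b}{2}(\biasp^{2}-a^{2}) + \tfrac{1}{3}(\biasp^{3}+a^{3}).
\end{equation}

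Collecting the $a$-terms via $a^{2}b - \tfrac{(a+b)a^{2}}{2} = \tfrac{a^{2}b}{2} - \tfrac{a^{3}}{2}$ and combining with $\tfrac{a^{3}}{3}$ yields $\tfrac{a^{2}b}{2} - \tfrac{a^{3}}{6}$. Dividing the whole expression by $2\biasp$ and matching each term against the statement (with $a = \min\{a,b\}$, $b = \max\{a,b\}$) gives exactly
\begin{equation}
\tfrac{ab}{2} + \tfrac{a^{2}b}{4\biasp} - \tfrac{a^{3}}{12\biasp} + (a+b)\tfrac{\biasp}{4} + \tfrac{\biasp^{2}}{6},
\end{equation}
which is the claimed formula.

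There is essentially no obstacle beyond careful bookkeeping; the only subtlety to verify is that the assumption $a,b\in[-\biasp,\biasp]$ guarantees $-a \in [-\biasp,\biasp]$ so the split-domain integral is well-posed, and that the algebraic simplification correctly produces the $\tfrac{1}{12}$ coefficient (rather than $\tfrac{1}{6}$ or $\tfrac{1}{4}$) on the cubic term. The symmetric case $b\leq a$ follows by interchanging the roles of $a$ and $b$ throughout, which preserves the formula since the non-min/max terms are symmetric in $a,b$.
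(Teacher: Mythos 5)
Your proof is correct and follows essentially the same route as the paper's: assume $a\leq b$, reduce the expectation to $\tfrac{1}{2\biasp}\int_{-a}^{\biasp}(a+s)(b+s)\,ds$, expand, integrate, and collect terms. The algebra, including the $\tfrac{1}{12}$ coefficient on the cubic term, checks out.
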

\begin{proof} We may assume that $a\leq b$. Then 
	\begin{align}
		\mean[\relu(a+\sepbias)\relu(b+\sepbias)]
		&=\mean[(a+\sepbias) \probind{\sepbias\geq -a} (b+\sepbias) \probind{\sepbias\geq -b}]\\  
		&=\mean[(a+\sepbias)(b+\sepbias)\probind{\sepbias\geq -a}]\\
		&=\tfrac{1}{2\biasp}\int_{-a}^{\biasp} (ab + (a+b)s + s^2)ds\\
		&=\tfrac{1}{2\biasp}\big(ab(\biasp+a) + (a+b)(\tfrac{\biasp^2}{2}-\tfrac{a^2}{2}) + \tfrac{\biasp^3}{3} + \tfrac{a^3}{3} \big)\\
		&= \tfrac{ab}{2} + \tfrac{a^2b}{4\biasp} - \tfrac{a^3}{12\biasp} +
		(a+b)\tfrac{\biasp}{4} + \tfrac{\biasp^2}{6}\\
		&= \tfrac{ab}{2} + \tfrac{(\min\{a, b\})^2 \max\{a,b\}}{4\biasp} - \tfrac{(\min\{a,b\})^3}{12\biasp} +
		(a+b)\tfrac{\biasp}{4} + \tfrac{\biasp^2}{6}.
	\end{align}
\end{proof}

\begin{proof}[Proposition~\ref{prop:expectation_inner_product}]
	Clearly, we have that
	\begin{equation}
		\mean[\sp{\NNlayer(\x^+)}{\NNlayer(\x^-)}]=2\cdot \mean[\relu(\sp{\g}{\x^+}+\sepbias)\relu(\sp{\g}{\x^-}+\sepbias)],
	\end{equation}
	where $\g$ denotes a standard Gaussian vector and $\sepbias\in [-\biasp, \biasp]$ is an independent and uniformly distributed random variable. 
	Let us begin by showing that 
	\begin{align}
		&\mean\Big[\tfrac{\sp{\g}{\x^+}\sp{\g}{\x^-}}{2}+\tfrac{(\min\{\sp{\g}{\x^+}, \sp{\g}{\x^-}\})^2 \max\{\sp{\g}{\x^+},\sp{\g}{\x^-}\}}{4\biasp} - \tfrac{(\min\{\sp{\g}{\x^+},\sp{\g}{\x^-}\})^3}{12\biasp} \\
		&\quad + \tfrac{\biasp(\sp{\g}{\x^+}+\sp{\g}{\x^-})}{4} + \tfrac{\biasp^2}{6}
		\Big]\\
		&=\tfrac{1}{2}\Big(\sp{\x^+}{\x^-} + 
		\tfrac{\biasp^2}{3}  + \sqrt{\tfrac{2}{\pi}}\tfrac{1}{6}\tfrac{\lnorm{\x^+-\x^-}^3}{\biasp}\Big). \label{eq:expectation_inner_product:Gaussian_exp_inner}
	\end{align}
	Since $\mean[\sp{\g}{\x^+}\sp{\g}{\x^-}]=\sp{\x^+}{\x^-}$ for any vectors $\x^+, \x^-\in \R^d$
	and we have $\mean[\sp{\g}{\x^+}]=\mean[\sp{\g}{\x^-}]=0$, this amounts to showing that 
	\begin{align}
		&\mean\Big[3(\min\{\sp{\g}{\x^+}, \sp{\g}{\x^-}\})^2 \max\{\sp{\g}{\x^+},\sp{\g}{\x^-}\} - (\min\{\sp{\g}{\x^+},\sp{\g}{\x^-}\})^3 \Big]\\
		&= \sqrt{\tfrac{2}{\pi}}\lnorm{\x^+-\x^-}^3. \label{eq:expectation_inner_product:maxmin_Gaussian}
	\end{align}
	Since $\g$ is symmetric, it follows that
	\begin{align}
		&\mean \big[(\min\{\sp{\g}{\x^+}, \sp{\g}{\x^-}\})^2 \max\{\sp{\g}{\x^+}, \sp{\g}{\x^-}\}\big] \\*
		&=  -\mean \big[(\max\{\sp{\g}{\x^+}, \sp{\g}{\x^-}\})^2 \min\{\sp{\g}{\x^+}, \sp{\g}{\x^-}\}\big]   
	\end{align}
	and 
	\begin{equation}
		\mean \big[(\min\{\sp{\g}{\x^+}, \sp{\g}{\x^-}\})^3\big] =  -\mean \big[(\max\{\sp{\g}{\x^+}, \sp{\g}{\x^-}\})^3 \big].   
	\end{equation}
	Therefore, 
	\begin{align}
		&\mean\Big[3(\min\{\sp{\g}{\x^+}, \sp{\g}{\x^-}\})^2 \max\{\sp{\g}{\x^+},\sp{\g}{\x^-}\} - (\min\{\sp{\g}{\x^+},\sp{\g}{\x^-}\})^3 \Big]\\
		&=\tfrac{1}{2}\cdot \Big(
		3\mean \big[(\min\{\sp{\g}{\x^+}, \sp{\g}{\x^-}\})^2 \max\{\sp{\g}{\x^+}, \sp{\g}{\x^-}\}\big]\\
		&\quad-3\mean \big[(\max\{\sp{\g}{\x^+}, \sp{\g}{\x^-}\})^2 \min\{\sp{\g}{\x^+}, \sp{\g}{\x^-}\}\big]\\
		&\quad -\mean \big[(\min\{\sp{\g}{\x^+}, \sp{\g}{\x^-}\})^3\big]+\mean \big[(\max\{\sp{\g}{\x^+}, \sp{\g}{\x^-}\})^3\big]
		\Big)\\
		&=\tfrac{1}{2}\cdot \mean \big[(\max\{\sp{\g}{\x^+}, \sp{\g}{\x^-}\} - \min\{\sp{\g}{\x^+}, \sp{\g}{\x^-}\})^3\big]\\
		&=\tfrac{1}{2}\cdot \mean \big[\abs{\sp{\g}{\x^+-\x^-}}^3\big].
	\end{align}
	Using that $\g$ is rotation invariant and $\mean[\abs{g}^3]=2\sqrt{\tfrac{2}{\pi}}$ for $g \distributed \Normdistr{0}{1}$, we arrive at 
	\eqref{eq:expectation_inner_product:maxmin_Gaussian}.
	Define the event 
	\begin{equation}
		\event=\Big\{\max\{\abs{\sp{\g}{\x^+}}, \abs{\sp{\g}{\x^-}}\}\leq \biasp \Big\}.    
	\end{equation}
	Since $\normsubg{\sp{\g}{\x^+}}, \normsubg{\sp{\g}{\x^-}}\lesssim \Rad$, we have
	\begin{equation}
		\prob(\event^C)
		\leq \prob(\abs{\sp{\g}{\x^+}}>\biasp) + \prob(\abs{\sp{\g}{\x^-}}>\biasp) 
		\leq 4\exp(-c\biasp^2/\Rad^2)
	\end{equation}
	for some absolute constant $c>0$.
	By using the Cauchy-Schwarz inequality twice, we obtain 
	\begin{align}
		&\mean[\abs{\relu(\sp{\g}{\x^+}+\sepbias)\relu(\sp{\g}{\x^-}+\sepbias)}\cdot \probind{\event^C}]\\
		&\leq \big(\mean\big[(\relu(\sp{\g}{\x^+}+\sepbias))^2(\relu(\sp{\g}{\x^-}+\sepbias))^2\big]\big)^{1/2}\cdot \prob(\event^C)^{1/2}\\
		&\leq \norm{\relu(\sp{\g}{\x^+}+\sepbias)}_{L^4}\cdot \norm{\relu(\sp{\g}{\x^-}+\sepbias)}_{L^4}\cdot 2\exp(-c\biasp^2/2\Rad^2)\\
		&\lesssim \biasp^2\exp(-c\biasp^2/2\Rad^2) \leq \varepsilon,
	\end{align}
	where the last inequality follows if 
	$\biasp\geq C \Rad \sqrt{\log(\biasp^2/\varepsilon)}$ for $C>0$ an absolute constant that is chosen large enough and $\biasp^2/\varepsilon\geq e$.
	Therefore, 
	\begin{align}
		&\abs[\Big]{\mean[\sp{\NNlayer(\x^+)}{\NNlayer(\x^-)}] - \Big(\sp{\x^+}{\x^-} + 
			\tfrac{\biasp^2}{3}  + \sqrt{\tfrac{2}{\pi}}\tfrac{1}{6}\tfrac{\lnorm{\x^+-\x^-}^3}{\biasp}\Big)
		}\\*
		&=\abs[\Big]{2 \mean[\relu(\sp{\g}{\x^+}+\sepbias)\relu(\sp{\g}{\x^-}+\sepbias)] - \Big(\sp{\x^+}{\x^-} + 
			\tfrac{\biasp^2}{3}  + \sqrt{\tfrac{2}{\pi}}\tfrac{1}{6}\tfrac{\lnorm{\x^+-\x^-}^3}{\biasp}\Big)
		}\\*
		&\leq \abs[\Big]{2 \mean[\relu(\sp{\g}{\x^+}+\sepbias)\relu(\sp{\g}{\x^-}+\sepbias) \cdot \probind{\event}] - \Big(\sp{\x^+}{\x^-} + 
			\tfrac{\biasp^2}{3}  + \sqrt{\tfrac{2}{\pi}}\tfrac{1}{6}\tfrac{\lnorm{\x^+-\x^-}^3}{\biasp}\Big)
		} \\* & \qquad+ 2\varepsilon. \label{eq:expectation_inner_product:two_summands}
	\end{align}
	Using Lemma~\ref{lem:inner_dither} and the independence of $\g$ and $\sepbias$ we obtain 
	\begin{align}
		&\mean[\relu(\sp{\g}{\x^+}+\sepbias)\relu(\sp{\g}{\x^-}+\sepbias)\cdot \probind{\event}]\\
		&=\mean_{\g}[\probind{\event} \mean_\sepbias[\relu(\sp{\g}{\x^+}+\sepbias)\relu(\sp{\g}{\x^-}+\sepbias)]]\\
		&=\mean_{\g}\big[\probind{\event}\big(\tfrac{\sp{\g}{\x^+}\sp{\g}{\x^-}}{2} + \tfrac{(\min\{\sp{\g}{\x^+}, \sp{\g}{\x^-}\})^2 \max\{\sp{\g}{\x^+},\sp{\g}{\x^-}\}}{4\biasp} - \tfrac{(\min\{\sp{\g}{\x^+},\sp{\g}{\x^-}\})^3}{12\biasp}\big)\big]\\*
		&\quad + \mean_{\g}\big[\probind{\event}\big(
		(\sp{\g}{\x^+}+\sp{\g}{\x^-})\tfrac{\biasp}{4} + \tfrac{\biasp^2}{6}\big)\big].
	\end{align}
	In combination with \eqref{eq:expectation_inner_product:Gaussian_exp_inner} and the Cauchy-Schwarz inequality this yields
	\begin{align}
		&\abs[\Big]{ \mean[\relu(\sp{\g}{\x^+}+\sepbias)\relu(\sp{\g}{\x^-}+\sepbias) \cdot \probind{\event}] - \tfrac{1}{2}\Big(\sp{\x^+}{\x^-} + 
			\tfrac{\biasp^2}{3}  + \sqrt{\tfrac{2}{\pi}}\tfrac{1}{6}\tfrac{\lnorm{\x^+-\x^-}^3}{\biasp}\Big)
		}\\
		&\leq \mean\big[\abs[\big]{\tfrac{\sp{\g}{\x^+}\sp{\g}{\x^-}}{2} + \tfrac{(\min\{\sp{\g}{\x^+}, \sp{\g}{\x^-}\})^2 \max\{\sp{\g}{\x^+},\sp{\g}{\x^-}\}}{4\biasp} - \tfrac{(\min\{\sp{\g}{\x^+},\sp{\g}{\x^-}\})^3}{12\biasp}\\*
			&\quad + 
			(\sp{\g}{\x^+}+\sp{\g}{\x^-})\tfrac{\biasp}{4} + \tfrac{\biasp^2}{6}} \cdot \probind{\event^C}\big]\\
		&\lesssim \Big(\norm{\sp{\g}{\x^+}}_{L^4} \cdot \norm{\sp{\g}{\x^-}}_{L^4} +\tfrac{1}{\biasp}\big(\norm{\sp{\g}{\x^+}}_{L^6}^3 + \norm{\sp{\g}{\x^-}}_{L^6}^3\big)\\ &\quad +\biasp\big(\norm{\sp{\g}{\x^+}}_{L^2}+\norm{\sp{\g}{\x^-}}_{L^2}\big) +\biasp^2 \Big) \cdot \prob(\event^C)^{1/2}\\*
		&\lesssim \biasp^2\cdot \exp(-c\biasp^2/2\Rad^2) \leq \varepsilon, 
	\end{align}
	where the last two inequalities follow by using $\normsubg{\sp{\g}{\x^+}}, \normsubg{\sp{\g}{\x^-}}\lesssim \Rad$ and $\biasp\geq C \Rad \sqrt{\log(\biasp^2/\varepsilon)}$ for $C>0$ an absolute constant that is chosen large enough and $\biasp^2/\varepsilon\geq e$.
	Together with \eqref{eq:expectation_inner_product:two_summands} this implies
	\begin{equation}
		\abs[\Big]{\mean[\sp{\NNlayer(\x^+)}{\NNlayer(\x^-)}] - \Big(\sp{\x^+}{\x^-} + 
			\tfrac{\biasp^2}{3}  + \sqrt{\tfrac{2}{\pi}}\tfrac{1}{6}\tfrac{\lnorm{\x^+-\x^-}^3}{\biasp}\Big)}\leq 4\varepsilon.
	\end{equation}	
	By rescaling $\varepsilon$, we obtain the result.
\end{proof}

\begin{proof}[Theorem~\ref{thm:distance_preservation_ReLU}]
	Let us start by showing \eqref{eq:distance_preservation_ReLU:ip}. 
	Let $\g\in \R^d$ be a standard Gaussian random vector and $\sepbias\in [-\biasp, \biasp]$ be uniformly distributed. 
	Since the ReLU is $1$-Lipschitz, it follows that  
	\begin{equation}
		\normsubg{\relu(\sp{\g}{\x^+}+\sepbias)-\relu(\sp{\g}{\x^-}+\sepbias)}\lesssim   \normsubg{\sp{\g}{\x^+-\x^-}}\lesssim \lnorm{\x^+-\x^-}  
	\end{equation}
	and 
	\begin{equation}
		\normsubg{\relu(\sp{\g}{\x}+\sepbias)}\lesssim  \lnorm{\x} + \biasp\leq 2\biasp
	\end{equation}
	for all $\x^+,\x^-\in \R^d$ and $\x\in \Rad\ball[2][d]$. 
	Hence, the stochastic processes 
	\begin{equation}
		\Big\{\relu(\sp{\g}{\x^+}+\sepbias)\Big\}_{\x^+\in \set^+}, \quad   \Big\{\relu(\sp{\g}{\x^-}+\sepbias)\Big\}_{\x^-\in \set^-}
	\end{equation}
	are sub-Gaussian with respect to the Euclidean metric and their radii in sub-Gaussian norm are bounded by $\biasp$. 
	By a concentration result for empirical product processes where each process is sub-Gaussian
	\citep[Thm.~1.13]{men16}, we have 
	\begin{align}
		&\tfrac{1}{n}\abs[\Big]{\sum_{i=1}^n
			\relu(\sp{\w_i}{\x^+}+\bias_i)\relu(\sp{\w_i}{\x^-}+\bias_i) \\* & \qquad\qquad\qquad\qquad - \mean[\relu(\sp{\w_i}{\x^+}+\bias_i)\relu(\sp{\w_i}{\x^-}+\bias_i)]} \\*
		&\leq C\cdot \bigg(\frac{(\meanwidth{\set^+}+u \biasp)(\meanwidth{\set^-}+u\biasp)}{n} + \frac{\biasp(\meanwidth{\set^+}+\meanwidth{\set^-})+u\biasp^2}{\sqrt{n}} \bigg) \label{eq:expectation_inner_product:concentration_productprocess}
	\end{align}
	uniformly for all $\x^+\in \set^+, \x^-\in \set^-$ with probability at least $1-2\exp(-cu^2)$. 
	The condition on $n$ given by \eqref{eq:distance_preservation_ReLU:cond_n} now implies that  
	the right hand side of \eqref{eq:expectation_inner_product:concentration_productprocess} is bounded by~$\varepsilon$. 
	In combination with Proposition~\ref{prop:expectation_inner_product} this shows \eqref{eq:distance_preservation_ReLU:ip} by using the triangle inequality.
	Analogously, we can show that~\eqref{eq:distance_preservation_ReLU:norm} holds.
	Finally, by polarization, \eqref{eq:distance_preservation_ReLU:ip} and \eqref{eq:distance_preservation_ReLU:norm} imply \eqref{eq:distance_preservation_ReLU:distance}.
\end{proof}

\section{Proof of the Main Result (Theorem~\ref{thm:main})}
\label{sec:proof:main}

The following lemma and especially its Corollary~\ref{coro:lin_sep} are crucial ingredients for the proof of Theorem~\ref{thm:main}.
In short, they make the following geometric statement precise: Let $\set^+, \set^- \in\nobreak \R^d$ be two sets and $\NNlayer \colon \R^d\to \R^n$ a deterministic ReLU-layer. If for every $\x^+ \in \set^+$ there exists at least one \enquote{neuron} that separates $\set^-$ from $\x^+$, then the transformed sets $\NNlayer(\set^-)$ and $\NNlayer(\set^+)$ are linearly separable.
\begin{lemma}\label{lem:lin_sep}
	Let $\set^+, \set^-\subset \R^d$.
	For $\W =[\w_1, \dots, \w_n]^\T \in \R^{n\times d}$ and $\Bias = (\bias_1, \dots, \bias_n) \in\nobreak \R^n$, define the associated (deterministic) ReLU-layer $\NNlayer \colon \R^d\to \R^n$ by 
	\begin{equation}
		\NNlayer(\x) \coloneqq \sqrt{\tfrac{2}{n}} \cdot \relu(\W \x + \Bias), \qquad \x \in \R^d.
	\end{equation}
	Set 
	\begin{equation}
		\setind \coloneqq \big\{i\in [n] \suchthat \text{$\sp{\w_i}{\x^-}+\bias_i\leq 0$ for all $\x^-\in \set^-$}\big\}
	\end{equation}
	and for $\x^+\in \set^+$, define $\setind_{\x^+}(0)\subset [n]$ to be the set of all indices $i\in [n]$ such that 
	$\hypp[\bias_i]{\w_i}$ separates $\set^-$ from~$\x^+$.
	Assume that $\min_{\x^+\in \set^+}\cardinality{\setind_{\x^+}(0)}\geq 1$. 
	Then $\cardinality{I}\geq 1$ and the hyperplane $\hypp[0]{\sepdir}$ given by the vector $\sepdir\in \S_+^{n-1}$ with 
	\begin{equation}
		\sepdircoor_i = \begin{cases}
			\tfrac{1}{\sqrt{\cardinality{{I}}}}, & i\in \setind,\\
			0, & \text{otherwise,}
		\end{cases}
	\end{equation}
	separates $\NNlayer(\set^-)$ from $\NNlayer(\set^+)$. More precisely, 
	\begin{align}
		\sp{\sepdir}{\NNlayer(\x^-)} &\leq 0 \qquad \text{for all $\x^-\in \set^-$,}\\
		\sp{\sepdir}{\NNlayer(\x^+)} &\geq \tfrac{1}{n}\sum_{i\in \setind_{\x^+}(0)}\abs{\sp{\w_i}{\x^+}+\bias_i}\qquad \text{for all $\x^+\in \set^+$.}
	\end{align}
\end{lemma}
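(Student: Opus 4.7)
The plan is to prove this by direct computation, once we observe the key set-theoretic fact that $\setind_{\x^+}(0) \subset \setind$ for every $\x^+ \in \set^+$. This inclusion is immediate from the definitions: if the hyperplane $\hypp[\bias_i]{\w_i}$ separates $\set^-$ from $\x^+$, then in particular $\sp{\w_i}{\x^-}+\bias_i \leq 0$ for all $\x^- \in \set^-$, which is precisely the defining condition of $\setind$. Combined with the assumption $\min_{\x^+ \in \set^+}\cardinality{\setind_{\x^+}(0)} \geq 1$, this yields $\cardinality{\setind} \geq 1$, so that the candidate separator $\sepdir$ is well-defined as a unit vector in $\S_+^{n-1}$.

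Next, I would verify the two inequalities defining the separation. For any $\x^- \in \set^-$, expanding
\begin{equation}
\sp{\sepdir}{\NNlayer(\x^-)} = \sqrt{\tfrac{2}{n}}\cdot\tfrac{1}{\sqrt{\cardinality{\setind}}} \sum_{i \in \setind} \relu(\sp{\w_i}{\x^-}+\bias_i),
\end{equation}
each summand vanishes because $i \in \setind$ forces $\sp{\w_i}{\x^-}+\bias_i \leq 0$ and therefore $\relu(\sp{\w_i}{\x^-}+\bias_i) = 0$. This gives $\sp{\sepdir}{\NNlayer(\x^-)} \leq 0$ (in fact equality). For $\x^+ \in \set^+$, I restrict the sum from $\setind$ down to the subset $\setind_{\x^+}(0)$ (using non-negativity of ReLU to discard the remaining terms), and observe that on $\setind_{\x^+}(0)$ we have $\sp{\w_i}{\x^+}+\bias_i > 0$, so $\relu(\sp{\w_i}{\x^+}+\bias_i) = \abs{\sp{\w_i}{\x^+}+\bias_i}$. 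The resulting bound reads
\begin{equation}
\sp{\sepdir}{\NNlayer(\x^+)} \geq \sqrt{\tfrac{2}{n\cardinality{\setind}}} \sum_{i \in \setind_{\x^+}(0)} \abs{\sp{\w_i}{\x^+}+\bias_i}.
\end{equation}

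Finally, I simplify the prefactor using $\cardinality{\setind} \leq n$: this gives $\sqrt{2/(n\cardinality{\setind})} \geq \sqrt{2}/n \geq 1/n$, which yields the stated form of the inequality. There is no genuine obstacle here; the content of the lemma is essentially bookkeeping, and the only point worth being careful about is the direction of the inclusion $\setind_{\x^+}(0) \subset \setind$ (not the reverse), which is what makes the coordinates of $\sepdir$ well-placed to annihilate $\NNlayer(\set^-)$ while still capturing a positive contribution from the separating neurons associated with each $\x^+$.
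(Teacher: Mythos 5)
Your proposal is correct and follows essentially the same route as the paper's proof: the inclusion $\setind_{\x^+}(0)\subset \setind$, the vanishing of the ReLU terms on $\setind$ for points of $\set^-$, the restriction of the sum to $\setind_{\x^+}(0)$ for points of $\set^+$, and the prefactor bound $\sqrt{2/(n\cardinality{\setind})}\geq 1/n$ via $\cardinality{\setind}\leq n$ all match the paper's argument. No gaps.
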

\begin{proof}
	Since $\setind_{\x^+}(0)\subset \setind$ for every $\x^+\in \set^+$ and $\min_{\x^+\in \set^+}\cardinality{\setind_{\x^+}(0)}\geq 1$, it follows $\cardinality{I}\geq 1$.
	Further, for any $\x^-\in \set^-$, we have that
	\begin{align}
		\sp{\sepdir}{\NNlayer(\x^-)}=\sum_{i\in \setind}\tfrac{1}{\sqrt{\cardinality{I}}}\cdot \sqrt{\tfrac{2}{n}}\relu(\underbrace{\sp{\w_i}{\x^-}+\bias_i}_{\leq 0})= 
		0.    
	\end{align}
	On the other hand, for any $\x^+\in \set^+$, we have that
	\begin{align}
		\sp{\sepdir}{\NNlayer(\x^+)}
		\geq \sum_{i\in \setind_{\x^+}(0)}\tfrac{1}{\sqrt{\cardinality{I}}}\cdot \sqrt{\tfrac{2}{n}}\relu(\sp{\w_i}{\x^+}+\bias_i)
		\geq \tfrac{1}{n}\sum_{i\in \setind_{\x^+}(0)}\abs{\sp{\w_i}{\x^+}+\bias_i}.
	\end{align}
\end{proof}

\begin{corollary}\label{coro:lin_sep}
	Let $\set^+, \set^-\subset \R^d$ and $\hyppsep\geq 0$.
	For $\W =[\w_1, \dots, \w_n]^\T \in \R^{n\times d}$ and $\Bias = (\bias_1, \dots, \bias_n) \in\nobreak \R^n$, define the associated (deterministic) ReLU-layer $\NNlayer \colon \R^d\to \R^n$ by 
	\begin{equation}
		\NNlayer(\x) \coloneqq \sqrt{\tfrac{2}{n}} \cdot \relu(\W \x + \Bias), \qquad \x \in \R^d.
	\end{equation}
	For $\x^+\in \set^+$, define $\setind_{\x^+}(\hyppsep)\subset [n]$ to be the set of all indices $i\in [n]$ such that 
	$\hypp[\bias_i]{\w_i}$ $\hyppsep$-separates $\set^-$ from~$\x^+$.
	Assume that $\min_{\x^+\in \set^+}\cardinality{\setind_{\x^+}(\hyppsep)}\geq n'$ for some $n'\geq 1$.
	Then $\NNlayer(\set^-)$ and $\NNlayer(\set^+)$ are linearly separable with margin $\tfrac{\hyppsep n'}{2n}$.
\end{corollary}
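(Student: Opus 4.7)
The plan is to reduce the corollary to Lemma~\ref{lem:lin_sep} and then use the extra strength of $\hyppsep$-separation (over plain separation) to convert the one-sided lower bound of that lemma into a two-sided margin statement. The hypotheses of Lemma~\ref{lem:lin_sep} are essentially already verified: any hyperplane that $\hyppsep$-separates $\set^-$ from $\x^+$ in particular separates them in the sense of Definition~\ref{def:separable_refined}\ref{def:separable_refined:hypp} with $\hyppsep=0$, so $\setind_{\x^+}(\hyppsep)\subset \setind_{\x^+}(0)$ for every $\x^+\in\set^+$, and hence $\min_{\x^+\in\set^+}\cardinality{\setind_{\x^+}(0)}\geq n'\geq 1$.

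First, I would apply Lemma~\ref{lem:lin_sep} to extract the vector $\sepdir \in \S_+^{n-1}$ supported on the index set $\setind$. The lemma guarantees
\begin{equation}
\sp{\sepdir}{\NNlayer(\x^-)}\leq 0 \quad \text{for all } \x^-\in\set^-, \qquad
\sp{\sepdir}{\NNlayer(\x^+)}\geq \tfrac{1}{n}\sum_{i\in\setind_{\x^+}(0)}\abs{\sp{\w_i}{\x^+}+\bias_i} \quad \text{for all } \x^+\in\set^+.
\end{equation}
Next, I would lower-bound the right-hand side by restricting the sum to the subset $\setind_{\x^+}(\hyppsep)\subset\setind_{\x^+}(0)$. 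For every $i\in\setind_{\x^+}(\hyppsep)$, the $\hyppsep$-separation condition forces $\sp{\w_i}{\x^+}+\bias_i>\hyppsep$, hence $\abs{\sp{\w_i}{\x^+}+\bias_i}>\hyppsep$. Combined with $\cardinality{\setind_{\x^+}(\hyppsep)}\geq n'$, this yields $\sp{\sepdir}{\NNlayer(\x^+)} \geq \hyppsep n'/n$ uniformly in $\x^+\in\set^+$.

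Finally, I would symmetrize by setting $\sepbias\coloneqq -\tfrac{\hyppsep n'}{2n}$. The two bounds above then give $\sp{\sepdir}{\NNlayer(\x^-)}+\sepbias \leq -\tfrac{\hyppsep n'}{2n}$ and $\sp{\sepdir}{\NNlayer(\x^+)}+\sepbias \geq \tfrac{\hyppsep n'}{2n}$, which together with $\lnorm{\sepdir}=1$ (inherited from Lemma~\ref{lem:lin_sep}) exhibits the hyperplane $\hypp[\sepbias]{\sepdir}$ as a linear separator of $\NNlayer(\set^-)$ and $\NNlayer(\set^+)$ with margin $\tfrac{\hyppsep n'}{2n}$. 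There is no real obstacle here: the only subtlety is keeping track of the fact that Lemma~\ref{lem:lin_sep} delivers a one-sided bound anchored at $0$, which is why the shift by $\sepbias$ halves the lower bound rather than preserving it.
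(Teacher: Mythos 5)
Your proposal is correct and follows essentially the same route as the paper's own proof: apply Lemma~\ref{lem:lin_sep} via the inclusion $\setind_{\x^+}(\hyppsep)\subset\setind_{\x^+}(0)$, lower-bound the sum over $\setind_{\x^+}(\hyppsep)$ by $\hyppsep n'/n$, and shift by the bias $-\tfrac{\hyppsep n'}{2n}$ to obtain the two-sided margin.
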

\begin{proof}
	Set 
	\begin{equation}
		\setind \coloneqq \big\{i\in [n] \suchthat \text{$\sp{\w_i}{\x^-}+\bias_i\leq 0$ for all $\x^-\in \set^-$}\big\}.
	\end{equation}    
	By Lemma~\ref{lem:lin_sep}, we have  $\cardinality{I}\geq 1$ and the hyperplane $\hypp[0]{\sepdir}$ given by the vector $\sepdir\in \S_+^{n-1}$ with 
	\begin{equation}
		\sepdircoor_i = \begin{cases}
			\tfrac{1}{\sqrt{\cardinality{{I}}}}, & i\in \setind,\\
			0, & \text{otherwise,}
		\end{cases}
	\end{equation}
	satisfies  
	\begin{align}
		\sp{\sepdir}{\NNlayer(\x^-)} &\leq 0 \qquad \text{for all $\x^-\in \set^-$,}\\*
		\sp{\sepdir}{\NNlayer(\x^+)} &\geq \tfrac{1}{n}\sum_{i\in \setind_{\x^+}(0)}\abs{\sp{\w_i}{\x^+}+\bias_i}\qquad \text{for all $\x^+\in \set^+$,}
	\end{align}
	where $\setind_{\x^+}(0)$ is the set of all indices $i\in [n]$ such that 
	$\hypp[\bias_i]{\w_i}$ separates $\set^-$ from~$\x^+$. Clearly, $\setind_{\x^+}(\hyppsep)\subset \setind_{\x^+}(0)$, which implies for any $\x^+\in \set^+$ that
	\begin{equation}
		\sp{\sepdir}{\NNlayer(\x^+)} \geq \tfrac{1}{n}\sum_{i\in \setind_{\x^+}(t)}\abs{\sp{\w_i}{\x^+}+\bias_i}> \tfrac{\hyppsep n'}{n}.  
	\end{equation}
	It follows that the hyperplane $\hypp[-\tfrac{\hyppsep n'}{2n}]{\sepdir}$ separates $\NNlayer(\set^-)$ and $\NNlayer(\set^+)$ with margin $\tfrac{\hyppsep n'}{2n}$.
\end{proof}

In line with our proof sketch in Section~\ref{sec:intro:proof}, the following two results describe the effect of the first random ReLU-layer $\NNlayer$ in the setup of Theorem~\ref{thm:main}.
\begin{theorem}\label{thm:lin_sep_random_ReLU/Thres_NN_finite_1st_layer}
	There exists an absolute constant $c>0$ such that the following holds.
	
	Let $\set^-, \set^+\subset \Rad\ball[2][d]$ be $\mdist$-separated sets with $N^- \coloneqq \cardinality{\set^-}$,  $N^+ \coloneqq \cardinality{\set^+}$.
	Let $\NNlayer \colon \R^d\to \R^{n}$ be a random ReLU-layer with maximal bias $\biasp\geq 0$.
	Suppose that $\biasp\gtrsim  \Rad$ and 
	\begin{equation}
		n\gtrsim \mdist^{-1}\biasp\cdot \log (2 N^-N^+/\probsuccess). \label{eq:lin_sep_random_ReLU_NN_finite_1st_layer}
	\end{equation}
	Then with probability at least $1-\probsuccess$, the following event occurs: for every $\x^-\in \set^-$ there exists a vector $\sepdir_{\x^-}\in \S_+^{n-1}$ such that the hyperplane $\hypp[0]{\sepdir_{\x^-}}$ linearly separates $\NNlayer(\x^-)$ from $\NNlayer(\set^+)$.  
	The vector $\sepdir_{\x^-}$ is given by $\sepdir_{\x^-}=\lnorm{\sepdir_{\x^-}'}^{-1}\sepdir_{\x^-}'$ for 
	\begin{equation}\label{eq:lin_sep_random_ReLU_NN_finite_1st_layer:direction_minus}
		(\sepdir_{\x^-}')_i = \begin{cases}
			1, & (\NNlayer(\x^-))_i=0,\\
			0, & \text{otherwise,}
		\end{cases}
	\end{equation}
	and satisfies 
	\begin{align}
		\sp{\sepdir_{\x^-}}{\NNlayer(\x^-)} &\leq 0,\\
		\sp{\sepdir_{\x^-}}{\NNlayer(\x^+)} &\geq c \lnorm{\x^+-\x^-}^2 \cdot \biasp^{-1}\qquad \text{for all $\x^+\in \set^+$.}
	\end{align}
\end{theorem}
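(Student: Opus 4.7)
My plan is to combine the single-hyperplane separation probability from Theorem~\ref{thm:prob_single_hyperplane_separates_two_points} with a multiplicative Chernoff bound over the $n$ i.i.d.\ neurons and a union bound over all $N^-N^+$ pairs, and then assemble the separating direction via Lemma~\ref{lem:lin_sep}.

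First, I would fix a pair $(\x^-,\x^+)\in\set^-\times\set^+$. Since $\biasp\gtrsim \Rad$, Theorem~\ref{thm:prob_single_hyperplane_separates_two_points} tells me that, for each neuron $i\in[n]$, the random hyperplane $\hypp[\bias_i]{\w_i}$ $\lnorm{\x^+-\x^-}$-separates $\x^-$ from $\x^+$ with probability at least $c\lnorm{\x^+-\x^-}/\biasp$. Let $J_{\x^-,\x^+}\subset[n]$ collect these indices; its cardinality is a sum of $n$ i.i.d.\ Bernoullis with success probability $p_{\x^-,\x^+}\geq c\lnorm{\x^+-\x^-}/\biasp\geq c\mdist/\biasp$. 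A multiplicative Chernoff bound then yields $\cardinality{J_{\x^-,\x^+}}\geq \tfrac{c}{2}n\lnorm{\x^+-\x^-}/\biasp$ with failure probability at most $\exp(-c'n\lnorm{\x^+-\x^-}/\biasp)\leq\exp(-c'n\mdist/\biasp)$. A union bound over the $N^-N^+$ pairs then shows that, with probability at least $1-N^-N^+\exp(-c'n\mdist/\biasp)\geq 1-\probsuccess$ (by the hypothesis \eqref{eq:lin_sep_random_ReLU_NN_finite_1st_layer}), this lower bound on $\cardinality{J_{\x^-,\x^+}}$ holds simultaneously for every pair.

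Next, I would condition on this event, fix $\x^-\in\set^-$, and apply Lemma~\ref{lem:lin_sep} to the pair $(\{\x^-\},\set^+)$. The index set $\setind$ appearing there coincides with the zero-coordinate set $\setind_{\x^-}\coloneqq\{i:[\NNlayer(\x^-)]_i=0\}$, which contains $J_{\x^-,\x^+}$ for every $\x^+\in\set^+$ (because $\lnorm{\x^+-\x^-}$-separation forces $\sp{\w_i}{\x^-}+\bias_i\leq -\lnorm{\x^+-\x^-}\leq 0$) and is therefore nonempty; the resulting vector is exactly the $\sepdir_{\x^-}$ defined in~\eqref{eq:lin_sep_random_ReLU_NN_finite_1st_layer:direction_minus}. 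Moreover, for $i\in J_{\x^-,\x^+}$ the $\lnorm{\x^+-\x^-}$-separation gives $\abs{\sp{\w_i}{\x^+}+\bias_i}\geq\lnorm{\x^+-\x^-}$, so the lower bound furnished by Lemma~\ref{lem:lin_sep} yields
\begin{equation}
\sp{\sepdir_{\x^-}}{\NNlayer(\x^+)} \geq \tfrac{1}{n}\sum_{i\in J_{\x^-,\x^+}}\abs{\sp{\w_i}{\x^+}+\bias_i} \geq \tfrac{\cardinality{J_{\x^-,\x^+}}}{n}\lnorm{\x^+-\x^-} \geq c\lnorm{\x^+-\x^-}^2/\biasp,
\end{equation}
while $\sp{\sepdir_{\x^-}}{\NNlayer(\x^-)}\leq 0$ is automatic from the construction of $\sepdir_{\x^-}$.

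The only mildly delicate point, and the main obstacle to anticipate, is keeping the single-neuron probability and the Chernoff count both expressed as functions of the true distance $\lnorm{\x^+-\x^-}$ rather than its uniform lower bound $\mdist$: this scaling is what produces the quadratic $\lnorm{\x^+-\x^-}^2/\biasp$ dependence in the final margin, while only the coarser inequality $\lnorm{\x^+-\x^-}\geq\mdist$ needs to enter the Chernoff failure exponent that drives the requirement on $n$ in~\eqref{eq:lin_sep_random_ReLU_NN_finite_1st_layer}.
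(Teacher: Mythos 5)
Your proposal is correct and follows essentially the same route as the paper's proof: Theorem~\ref{thm:prob_single_hyperplane_separates_two_points} for the single-neuron separation probability, a multiplicative Chernoff bound per pair $(\x^-,\x^+)$ keeping the true distance $\lnorm{\x^+-\x^-}$ in the count, a union bound over the $N^-N^+$ pairs driven by the coarser bound $\lnorm{\x^+-\x^-}\geq\mdist$, and finally Lemma~\ref{lem:lin_sep} applied with $\set^-=\{\x^-\}$ to produce the vector $\sepdir_{\x^-}$ and the margin $\gtrsim\lnorm{\x^+-\x^-}^2/\biasp$. The only differences are cosmetic (you work with the random index set $J_{\x^-,\x^+}$ directly rather than the paper's floor-valued threshold $n'(\x^-,\x^+)$), so no gap to report.
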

\begin{proof}
	Let $\W =[\w_1, \dots, \w_{n}]^\T \in \R^{n\times d}$ and $\Bias = (\bias_1, \dots, \bias_{n}) \in\nobreak \R^{n}$ be the weight matrix and bias vector of $\NNlayer$, respectively. 
	For $\x^-\in \set^-$, $\x^+\in \set^+$ define $\setind_{\x^-,\x^+}\subset [n]$ to be the set of all indices $i\in [n]$ where  
	$\hypp[\bias_i]{\w_{i}}$ separates $\x^-$ from $\x^+$ and define the events 
	\begin{equation}
		\eventalt_{\x^-, \x^+}^i \coloneqq \big\{
		\hypp[\bias_i]{\w_{i}} \; \lnorm{\x^+-\x^-}\text{-separates } \x^- \text{ from } \x^+\}. 
	\end{equation}
	For $n'(\x^-,\x^+)\in \{1,\ldots, n\}$ a number that is specified later, set
	\begin{align}
		&\eventalt_{\x^-, \x^+, n'(\x^-,\x^+)} \coloneqq \Big\{\sum_{i=1}^{n} \probind{\eventalt_{\x^-, \x^+}^i}\geq n'(\x^-,\x^+)\Big\}, \\ &\eventalt_{\x^-} \coloneqq \bigintersec_{\x^+\in \set^+}\eventalt_{\x^-, \x^+, n'(\x^-,\x^+)}, \quad \eventalt \coloneqq \bigintersec_{\x^-\in \set^-} \eventalt_{\x^-}.  
	\end{align}
	On the event $\eventalt$, 
	the following holds for every $\x^-\in \set^-$: For all $\x^+\in \set^+$ there exist at least $n'(\x^-,\x^+)\geq 1$ hyperplanes  $\hypp[\bias_i]{\w_{i}}$ which $\lnorm{\x^+-\x^-}$-separate
	$\x^-$
	from $\x^+$. 
	By Lemma~\ref{lem:lin_sep}, this implies that 
	the following holds on the event $\eventalt$:
	For every $\x^-\in \set^-$ there exists $\sepdir_{\x^-}\in \S_+^{n-1}$ such that the hyperplane $\hypp[0]{\sepdir_{\x^-}}$ linearly separates $\NNlayer(\x^-)$ from~$\NNlayer(\set^+)$. More precisely, 
	\begin{align}
		\sp{\sepdir_{\x^-}}{\NNlayer(\x^-)} &\leq 0,\\*
		\sp{\sepdir_{\x^-}}{\NNlayer(\x^+)} &\geq \tfrac{1}{n}\sum_{i\in \setind_{\x^-,\x^+}}\abs{\sp{\w_i}{\x^+}+\bias_i}\geq \tfrac{n'(\x^-, \x^+)}{n}\lnorm{\x^+-\x^-}\qquad \text{for all $\x^+\in \set^+$.}
	\end{align}
	Further, Lemma~\ref{lem:lin_sep} shows that 
	$\sepdir_{\x^-}=\lnorm{\sepdir_{\x^-}'}^{-1}\sepdir_{\x^-}'$ for $\sepdir_{\x^-}'$ with 
	\begin{equation}
		(\sepdir_{\x^-}')_i = \begin{cases}
			1, & \sp{\w_i}{\x^-}+\bias_i\leq 0,\\
			0, & \text{otherwise.}
		\end{cases}
	\end{equation}
	Since $\sp{\w_i}{\x^-}+\bias_i\leq 0$ is equivalent to $(\NNlayer(\x^-))_i=0$, this shows that $\sepdir_{\x^-}$ is given as described in \eqref{eq:lin_sep_random_ReLU_NN_finite_1st_layer:direction_minus}.
	By the union bound, we obtain
	\begin{equation}
		\prob(\eventalt^C)\leq \sum_{\x^-\in \set^-,\, \x^+\in \set^+} \prob(\eventalt_{\x^-, \x^+, n'(\x^-,\x^+)}^C).
	\end{equation} 
	Let $i\in [n]$. Theorem~\ref{thm:prob_single_hyperplane_separates_two_points} implies that if $\biasp\gtrsim \Rad$, then $\prob(\eventalt_{\x^-, \x^+}^i)\geq c_1 \biasp^{-1} \lnorm{\x^+-\x^-}$ for some absolute constant $c_1>0$.
	Therefore, the Chernoff bound implies that 
	\begin{equation}
		\prob\Big(\sum_{i=1}^{n} \probind{\eventalt_{\x^-, \x^+}^i}\geq \tfrac{c_1}{2} \biasp^{-1}\lnorm{\x^+-\x^-} \cdot n \Big) \geq 1-\exp(-c' \biasp^{-1}\lnorm{\x^+-\x^-} \cdot n).   
	\end{equation}
	Setting $n'(\x^-,\x^+) = \floor{\tfrac{c_1}{2} \biasp^{-1}\lnorm{\x^+-\x^-} \cdot n}$, we obtain 
	\begin{equation}
		\prob(\eventalt_{\x^-, \x^+, n'(\x^-,\x^+)}^C)\leq \exp(-c' \biasp^{-1}\lnorm{\x^+-\x^-} \cdot n) \leq \exp(-c' \biasp^{-1}\mdist n).
	\end{equation}
	Hence, 
	\begin{equation}
		\prob(\eventalt^C)\leq 
		\sum_{\x^-\in \set^-,\, \x^+\in \set^+}\exp(-c' \biasp^{-1}\mdist n)\leq \probsuccess,
	\end{equation}
	where the last inequality follows from 
	\begin{equation}
		n\gtrsim \mdist^{-1}\biasp\cdot \big(\log N^- + \log N^+  + \log(\probsuccess^{-1}) \big).    
	\end{equation}
	Finally, observe that \eqref{eq:lin_sep_random_ReLU_NN_finite_1st_layer} implies that $n'(\x^-,\x^+) = \floor{\tfrac{c_1}{2} \biasp^{-1}\lnorm{\x^+-\x^-} \cdot n}\geq 1$ for all $\x^-\in \set^-$, $\x^+\in \set^+$.
\end{proof}

\begin{theorem}\label{thm:lin_sep_random_ReLU_NN_continuous_1st_layer}
	There exist absolute constants $c,c'>0$ such that the following holds.
	
	Let $\set^-, \set^+\subset \Rad\ball[2][d]$ be $\mdist$-separated sets. Let $\biasp>0$ satisfy $\biasp\gtrsim \Rad\sqrt{\log(\biasp/\mdist)}$ and $\biasp/\mdist\geq\nobreak e$.
	Let $\setcen^+ = \{ \cent_1^+, \dots, \cent_{N^+}^+ \} \subset \Rad\ball[2][d]$ and $\setcen^- = \{ \cent_1^-, \dots, \cent_{N^-}^- \} \subset \Rad\ball[2][d]$
	be $\mdist$-separated and form a $\biasp/c'$-mutual covering for $\set^+$ and $\set^-$ with components $\set_1^+, \dots, \set_{N^+}^+ \subset \set^+$ and $\set_1^-, \dots, \set_{N^-}^- \subset \set^-$. 
	
	Let $\NNlayer \colon \R^d\to \R^{n}$ be a random ReLU-layer with maximal bias $\biasp\geq 0$, such that
	\begin{align}
		n&\gtrsim \biasp^{-2}\cdot \big(\effdim{\set^-}+\effdim{\set^+}\big)+(\tfrac{\biasp}{\mdist})^8\cdot \log(2N^-N^+/\probsuccess), \\
		n&\gtrsim \biasp^6\cdot\Big(\max_{l\in[N^-]}\big\{\distance^{-8}(\cent^-_l, \setcen^+)\cdot \effdim{\set^-_l}\big\}+\max_{j\in[N^+]}\big\{\distance^{-8}(\cent^+_j, \setcen^-)\cdot \effdim{\set^+_j}\big\}\Big).
		\label{eq:lin_sep_random_ReLU_NN_continuous_1st_layer:cond}
	\end{align}
	Then with probability at least $1-\probsuccess$, the following two events occur:
	\begin{enumerate}
		\item $\NNlayer(\set^-),\NNlayer(\set^+)\subset \biasp \ball[2][n]$; 
		\item For every $l\in [N^-]$ there exists a vector $\sepdir_{\cent^-_l}\in\S_+^{n-1}$ such that 
		\begin{align}
			\sp{\sepdir_{\cent^-_l}}{\NNlayer(\x^-)}-4c' \biasp^{-1}\distance^2(\cent^-_l, \setcen^+) &\leq -2c' \biasp^{-1}
			\distance^2(\cent^-_l, \setcen^+)
			\qquad \text{for all $\x^-\in \set^-_l$},\\
			\sp{\sepdir_{\cent^-_l}}{\NNlayer(\x^+)} -4c' \biasp^{-1}
			\distance^2(\cent^-_l, \setcen^+)
			&\geq 2c \biasp^{-1}
			\lnorm{\cent^-_l-\x^+}^2\qquad \text{for all $\x^+\in \set^+$.}\label{eq:lin_sep_random_ReLU_NN_continuous_1st_layer:sep_from_Cplus}
		\end{align}
		Further, $\lnorm{\cent^-_l-\x^+}\geq \tfrac{1}{2}\distance(\cent^-_l, \setcen^+)$ for every $\x^+\in \set^+$, which implies that the hyperplane 
		\begin{equation}
			\hypp[-4c' \biasp^{-1} \distance^2(\cent^-_l, \setcen^+)]{\sepdir_{\cent^-_l}}
		\end{equation}
		linearly separates $\NNlayer(\set^-_l)$ from $\NNlayer(\set^+)$ with margin $\min\{2c',\tfrac{c}{2}\} \biasp^{-1}
		\distance^2(\cent^-_l, \setcen^+)$. 
	\end{enumerate}
\end{theorem}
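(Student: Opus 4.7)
The plan combines two building blocks: apply Theorem~\ref{thm:lin_sep_random_ReLU/Thres_NN_finite_1st_layer} to the finite centers $\setcen^-, \setcen^+$ to obtain separating directions $\sepdir_{\cent^-_l}$ at the centers, and then invoke Theorem~\ref{thm:distance_preservation_ReLU} locally on each component to transfer the separation to all of $\set^-$ and $\set^+$ via Cauchy--Schwarz. The mutual-covering radius condition $\rad^\pm_\bullet \lesssim \biasp^{-1}\distance^2$ is precisely what is needed to make this transfer quantitative, since after distance preservation a ball of that radius maps to a ball of radius $\lesssim \biasp^{-1}\distance^2$, which is exactly the margin budget available from the centers.

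First, I would apply Theorem~\ref{thm:lin_sep_random_ReLU/Thres_NN_finite_1st_layer} to the $\mdist$-separated centers $\setcen^\pm \subset \Rad\ball[2][d]$: since $\biasp\gtrsim\Rad\sqrt{\log(\biasp/\mdist)}\geq\Rad$, and the width lower bound $n\gtrsim\mdist^{-1}\biasp\log(2N^-N^+/\probsuccess)$ is dominated by $(\biasp/\mdist)^8\log(2N^-N^+/\probsuccess)$ (using $\biasp/\mdist\geq e$), with probability $1-\probsuccess/3$ we obtain, for every $l\in[N^-]$, a unit direction $\sepdir_{\cent^-_l}\in\S_+^{n-1}$ supported on the vanishing coordinates of $\NNlayer(\cent^-_l)$, satisfying $\sp{\sepdir_{\cent^-_l}}{\NNlayer(\cent^-_l)}=0$ and $\sp{\sepdir_{\cent^-_l}}{\NNlayer(\cent^+_j)}\geq c_1\biasp^{-1}\lnorm{\cent^+_j-\cent^-_l}^2$ for all $j\in[N^+]$.

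Second, I would apply Theorem~\ref{thm:distance_preservation_ReLU} separately to each component pair $(\{\cent^-_l\},\set^-_l)$ and $(\{\cent^+_j\},\set^+_j)$ with error parameter $\varepsilon_l\asymp\biasp^{-2}\distance^4(\cent^-_l,\setcen^+)$ and $\varepsilon_j\asymp\biasp^{-2}\distance^4(\cent^+_j,\setcen^-)$, respectively. This choice is forced by the need to ensure $\lnorm{\NNlayer(\x^-)-\NNlayer(\cent^-_l)}\leq 2c'\biasp^{-1}\distance^2(\cent^-_l,\setcen^+)$ given $\lnorm{\x^--\cent^-_l}\leq (c'/\biasp)\distance^2(\cent^-_l,\setcen^+)$ (and analogously for $\set^+_j$). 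Plugging these $\varepsilon$'s into the width condition of Theorem~\ref{thm:distance_preservation_ReLU} with $u^2\asymp\log(N^-N^+/\probsuccess)$ and taking a union bound over the $N^-+N^+$ component events, the main term $\varepsilon^{-2}\biasp^2\effdim{\set^\pm_\bullet}$ becomes $\biasp^6\distance^{-8}\effdim{\set^\pm_\bullet}$ and the concentration term $\varepsilon^{-2}\biasp^4 u^2$ becomes $(\biasp/\mdist)^8\log(N^-N^+/\probsuccess)$, exactly recovering \eqref{eq:lin_sep_random_ReLU_NN_continuous_1st_layer:cond}. Simultaneously, Corollary~\ref{coro:distance_preservation_ReLU:norm} (under the other part of the width condition) places $\NNlayer(\set^-)\union\NNlayer(\set^+)\subset \biasp\ball[2][n]$ with probability $1-\probsuccess/3$, establishing conclusion~(i).

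On the intersection of these high-probability events, Cauchy--Schwarz with $\lnorm{\sepdir_{\cent^-_l}}=1$ and $\sp{\sepdir_{\cent^-_l}}{\NNlayer(\cent^-_l)}=0$ gives, for any $\x^-\in\set^-_l$,
\begin{equation}
\sp{\sepdir_{\cent^-_l}}{\NNlayer(\x^-)} \leq \lnorm{\NNlayer(\x^-)-\NNlayer(\cent^-_l)} \leq 2c'\biasp^{-1}\distance^2(\cent^-_l,\setcen^+),
\end{equation}
while for $\x^+\in\set^+_j$, combining the center estimate with the reverse Cauchy--Schwarz bound and using $\distance(\cent^+_j,\setcen^-)\leq\lnorm{\cent^+_j-\cent^-_l}$ yields
\begin{equation}
\sp{\sepdir_{\cent^-_l}}{\NNlayer(\x^+)} \geq c_1\biasp^{-1}\lnorm{\cent^+_j-\cent^-_l}^2 - 2c'\biasp^{-1}\lnorm{\cent^+_j-\cent^-_l}^2 = (c_1-2c')\biasp^{-1}\lnorm{\cent^+_j-\cent^-_l}^2.
\end{equation}
Choosing $c'$ small enough (this is the free parameter in the ``$\biasp/c'$-mutual covering'' hypothesis), the RHS exceeds $2c\biasp^{-1}\lnorm{\cent^-_l-\x^+}^2+4c'\biasp^{-1}\distance^2(\cent^-_l,\setcen^+)$ after one observes that $\rad^+_j\leq(c'/\biasp)\lnorm{\cent^-_l-\cent^+_j}^2\leq\tfrac12\lnorm{\cent^-_l-\cent^+_j}$ (which holds since $\biasp\gtrsim\Rad$), whence $\lnorm{\cent^-_l-\x^+}\geq\tfrac12\lnorm{\cent^-_l-\cent^+_j}\geq\tfrac12\distance(\cent^-_l,\setcen^+)$. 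The main obstacle is the careful calibration of the local error parameters $\varepsilon_l, \varepsilon_j$ and of the constants $c_1, c, c'$ so that the per-pair union-bounded width condition matches \eqref{eq:lin_sep_random_ReLU_NN_continuous_1st_layer:cond} exactly and the margin constants in the conclusion balance correctly; the rest is routine triangle-inequality and Cauchy--Schwarz bookkeeping.
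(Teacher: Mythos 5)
Your proposal is correct and follows essentially the same route as the paper's proof: Theorem~\ref{thm:lin_sep_random_ReLU/Thres_NN_finite_1st_layer} applied to the $\mdist$-separated centers, Theorem~\ref{thm:distance_preservation_ReLU} (with local error $\varepsilon \asymp \biasp^{-2}\distance^4$ on each component, union-bounded) together with Corollary~\ref{coro:distance_preservation_ReLU:norm} for the norm bound, and then Cauchy--Schwarz plus triangle-inequality bookkeeping with $c'$ taken sufficiently small relative to the constant from the finite-set theorem. The constant calibration you defer is exactly the routine part the paper carries out (e.g.\ $c' \leq c/256$ and $\lnorm{\x^+-\cent_l^-}\geq\tfrac12\distance(\cent_l^-,\setcen^+)$), so no new idea is missing.
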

\begin{proof}
	Since $\setcen^+ = \{ \cent_1^+, \dots, \cent_{N^+}^+ \}$ and $\setcen^- = \{ \cent_1^-, \dots, \cent_{N^-}^- \}$
	form a $\biasp/c'$-mutual covering for $\set^+$ and $\set^-$, there exist 
	$\rad_1^+, \dots, \rad_{N^+}^+ \geq 0$ and $\rad_1^-, \dots, \rad_{N^-}^- \geq 0$ such that
	\begin{enumerate}
		\item
		the sets $\set_j^+ \coloneqq \set^+ \intersec\ball[2][d](\cent_j^+,\rad_j^+)$ for $j \in [N^+]$, and $\set_l^- \coloneqq \set^- \intersec\ball[2][d](\cent_l^-,\rad_l^-)$ for $l \in [N^-]$, cover $\set^+$ and $\set^-$, respectively;
		\item
		$\rad_j^+ \leq c'\biasp^{-1} \distsq{\cent_j^+}{\setcen^-}$ for $j \in [N^+]$, and $\rad_l^- \leq c'\biasp^{-1}\distsq{\cent_l^-}{\setcen^+}$ for $l \in [N^-]$.
	\end{enumerate}
	By Corollary~\ref{coro:distance_preservation_ReLU:norm}, if 
	\begin{equation}
		\biasp\gtrsim \Rad, \quad n\gtrsim \biasp^{-2}\big(\effdim{\set^-}+\effdim{\set^+}\big)+\log(e/\probsuccess),
	\end{equation}
	then $\NNlayer(\set^-),\NNlayer(\set^+)\subset \biasp \ball[2][n]$ with probability at least $1-\probsuccess$.
	Define $\event$ to be the event where
	for every $l\in[N^-]$ there exists a vector $\sepdir_{\cent^-_l}\in \S_+^{n-1}$ such that
	\begin{align}
		\sp{\sepdir_{\cent^-_l}}{\NNlayer(\cent^-_l)} &\leq 0,\\
		\sp{\sepdir_{\cent^-_l}}{\NNlayer(\cent^+_j)} &\geq c \biasp^{-1}\lnorm{\cent^+_j-\cent^-_l}^2 \qquad \text{for all $j\in[N^+]$.}
	\end{align}
	Applying Theorem~\ref{thm:lin_sep_random_ReLU/Thres_NN_finite_1st_layer} to $\setcen^+$ and $\setcen^-$, the condition \eqref{eq:lin_sep_random_ReLU_NN_continuous_1st_layer:cond} implies $\prob(\event)\geq 1-\probsuccess$. Define $\eventalt$ to be the event where the following holds:
	\begin{thmproperties}
		\item
		For all $l\in [N^-]$: 
		\begin{equation}
			\sup_{\x^-\in \set_l^-}\abs[\Big]{\lnorm{\NNlayer(\x^-)-\NNlayer(\cent_l^-)}^2 - \lnorm{\x^--\cent_l^-}^2\Big(1  - \sqrt{\tfrac{2}{\pi}} \tfrac{\lnorm{\x^--\cent_l^-}}{3\biasp}  \Big)}
			\leq \big(c' \biasp^{-1}\distance^2(\cent_l^-,\setcen^+)\big)^2,
		\end{equation}
		\item 
		For all $j\in [N^+]$: 
		\begin{equation}
			\sup_{\x^+\in \set_j^+}\abs[\Big]{\lnorm{\NNlayer(\x^+)-\NNlayer(\cent_j^+)}^2 - \lnorm{\x^+-\cent_j^+}^2\Big(1  - \sqrt{\tfrac{2}{\pi}} \tfrac{\lnorm{\x^+-\cent_j^+}}{3\biasp}  \Big)}
			\leq \big(c' \biasp^{-1}\distance^2(\cent_j^+,\setcen^-)\big)^2.
		\end{equation}	
	\end{thmproperties}
	By Theorem~\ref{thm:distance_preservation_ReLU} and a union bound, the condition \eqref{eq:lin_sep_random_ReLU_NN_continuous_1st_layer:cond} implies $\prob(\eventalt)\geq 1-\probsuccess$. 
	
	Let us show that on the event  $\event\intersec\eventalt$ the second event from Theorem~\ref{thm:lin_sep_random_ReLU_NN_continuous_1st_layer} holds. Let $l\in [N^-]$. For any $\x^-\in \set_l^-$, we have that
	\begin{align}
		\sp{\sepdir_{\cent_l^-}}{\NNlayer(\x^-)} 
		&= \sp{\sepdir_{\cent_l^-}}{\NNlayer(\cent_l^-)} + \sp{\sepdir_{\cent_l^-}}{\NNlayer(\x^-) - \NNlayer(\cent_l^-)}\\
		&\leq \lnorm{\NNlayer(\x^-) - \NNlayer(\cent_l^-)}\\
		&\leq \lnorm{\x^- - \cent_l^-} + c' \biasp^{-1}\distance^2(\cent_l^-,\setcen^+)\\
		&\leq \rad_l^- + c' \biasp^{-1}\distance^2(\cent_l^-,\setcen^+)\\
		&\leq 2c' \biasp^{-1}\distance^2(\cent_l^-,\setcen^+).\label{eq:lin_sep_random_ReLU_NN_continuous_1st_layer:xminus}
	\end{align}
	Let $\x^+\in \set^+$. Then there exists $j\in [N^+]$ such that $\x^+\in \set_j^+$. It holds
	\begin{align}
		\sp{\sepdir_{\cent_l^-}}{\NNlayer(\x^+)} 
		&= \sp{\sepdir_{\cent_l^-}}{\NNlayer(\cent_j^+)} + \sp{\sepdir_{\cent_l^-}}{\NNlayer(\x^+) - \NNlayer(\cent_j^+)}\\
		&\geq c \lnorm{\cent_j^+-\cent_l^-}^2 \biasp^{-1} - \lnorm{\NNlayer(\x^+) - \NNlayer(\cent_j^+)}\\
		&\geq c \lnorm{\cent_j^+-\cent_l^-}^2 \biasp^{-1} - \lnorm{\x^+ - \cent_j^+} - c' \biasp^{-1}\distance^2(\cent_j^+,\setcen^-)\\
		&\geq c \lnorm{\cent_j^+-\cent_l^-}^2 \biasp^{-1} - 2c' \biasp^{-1}\distance^2(\cent_j^+,\setcen^-)\\
		&\geq c \lnorm{\cent_j^+-\cent_l^-}^2 \biasp^{-1} - 2c'\biasp^{-1}\lnorm{\cent_j^+-\cent_l^-}^2\\
		&\geq \tfrac{c}{2} \lnorm{\cent_j^+-\cent_l^-}^2 \biasp^{-1}, 
	\end{align}
	where the last inequality follows if $c'\leq \tfrac{c}{4}$. 
	If $\biasp\gtrsim \Rad$ and $c'\leq 1$, then  
	\begin{align}
		\lnorm{\x^+ - \cent_l^-}&\leq \lnorm{\x^+-\cent_j^+}+\lnorm{\cent_j^+-\cent_l^-}\\*
		&\leq c'\biasp^{-1}\distance^2(\cent_j^+, \setcen^-)+\lnorm{\cent_j^+-\cent_l^-}\\*
		&\leq 2\lnorm{\cent_j^+-\cent_l^-}.
	\end{align}
	Therefore, for any $\x^+\in \set^+$, we obtain
	\begin{equation}
		\sp{\sepdir_{\cent_l^-}}{\NNlayer(\x^+)} 
		\geq \tfrac{c}{8} \lnorm{\x^+-\cent_l^-}^2 \biasp^{-1}.
		\label{eq:lin_sep_random_ReLU_NN_continuous_1st_layer:xplus}
	\end{equation}
	Subtracting $4c' \biasp^{-1}\distance^2(\cent_l^-,\setcen^+)$ in \eqref{eq:lin_sep_random_ReLU_NN_continuous_1st_layer:xminus} and \eqref{eq:lin_sep_random_ReLU_NN_continuous_1st_layer:xplus}, we obtain that for all $\x^-\in \set_l^-$ that
	\begin{equation}
		\sp{\sepdir_{\cent_l^-}}{\NNlayer(\x^-)} - 4c' \biasp^{-1}\distance^2(\cent_l^-,\setcen^+)\leq -2c' \biasp^{-1}\distance^2(\cent_l^-,\setcen^+)
	\end{equation}
	and for all $\x^+\in \set^+$ that
	\begin{align}
		\sp{\sepdir_{\cent_l^-}}{\NNlayer(\x^+)} - 4c' \biasp^{-1}\distance^2(\cent_l^-,\setcen^+)
		\geq \tfrac{c}{8} \lnorm{\x^+-\cent_l^-}^2 \biasp^{-1}- 4c' \biasp^{-1}\distance^2(\cent_l^-,\setcen^+).
	\end{align}    
	If $\biasp\gtrsim R$ and $c'\leq 1$, then for any $\x^+\in \set_j^+$, 
	\begin{align}
		\lnorm{\cent_j^+ - \cent_l^-}&\leq \lnorm{\cent_j^+-\x^+}+\lnorm{\x^+-\cent_l^-}\\*
		&\leq c'\biasp^{-1}\distance^2(\cent_j^+, \setcen^-)+\lnorm{\x^+-\cent_l^-}\\*
		&\leq \tfrac{1}{2}\lnorm{\cent_j^+-\cent_l^-}+\lnorm{\x^+-\cent_l^-},
	\end{align}
	which implies $\lnorm{\cent_j^+ - \cent_l^-}\leq 2\lnorm{\x^+-\cent_l^-}$. 
	In particular, $\lnorm{\x^+-\cent_l^-}\geq \tfrac{1}{2}\distance(\cent_l^-, \setcen^+)$ for all $\x^+\in \set^+$. Furthermore, 
	\begin{equation}
		4c' \biasp^{-1}\distance^2(\cent_l^-,\setcen^+)\leq 4c'\biasp^{-1}\lnorm{\cent_l^--\cent_j^+}^2\leq 16c'\biasp^{-1}\lnorm{\x^+-\cent_l^-}^2
	\end{equation}
	for any $\x^+\in \set_j^+$. 
	Hence, for every $\x^+\in \set^+$, we conclude that
	\begin{align}
		\sp{\sepdir_{\cent_l^-}}{\NNlayer(\x^+)} - 4c' \biasp^{-1}\distance^2(\cent_l^-,\setcen^+)
		&\geq \tfrac{c}{8} \lnorm{\x^+-\cent_l^-}^2 \biasp^{-1}- 16c'\biasp^{-1}\lnorm{\x^+-\cent_l^-}^2\\
		&\geq \tfrac{c}{16}\biasp^{-1} \lnorm{\x^+-\cent_l^-}^2, 
	\end{align}  
	where the last inequality follows if $c'\leq \tfrac{c}{256}$. 
\end{proof}

The final ingredient for the proof of Theorem~\ref{thm:main} is the following lemma. It provides a sufficient condition under which we have that $\meanwidth{\NNlayer(\set)}\lesssim \meanwidth{\set}$ with high probability for $\set \subset \R^d$ and $\NNlayer \colon \R^d\to \R^{n}$ a random ReLU-layer.
\begin{lemma}\label{lem:Gaussian_width_after_1_layer}
	Let $\set\subset \R^d$ and $\NNlayer \colon \R^d\to \R^{n}$ be a random ReLU-layer with standard Gaussian weight matrix $\W\in\R^{n\times d}$ and maximal bias $\biasp\geq 0$. Then, we have that $\meanwidth{\NNlayer(\set)}\leq \meanwidth{\sqrt{\tfrac{2}{n}}\W\set}$, and furthermore, the following holds:
	\begin{thmproperties}
		\item If $n\gtrsim \log(2/\probsuccess)$,
		then $\meanwidth{\tfrac{1}{\sqrt{n}}\W\set}\lesssim  \meanwidth{\set}+\sqrt{n}\diam(\set)$ with probability at least $1-\probsuccess$. 
		\item If 
		$n\gtrsim \effdim{\cone{\set-\set}\intersec\S^{d-1}} +\log(2/\probsuccess)$, 
		then with probability at least $1-\probsuccess$, 
		\begin{equation}
			\sup_{\x\neq\x'\in \set}\lnorm{\tfrac{1}{\sqrt{n}}\W(\tfrac{\x-\x'}{\lnorm{\x-\x'}})}\leq 2.
		\end{equation}
		On this event $\meanwidth{\tfrac{1}{\sqrt{n}}\W\set'}\leq 2\meanwidth{\set'}$ and therefore $\meanwidth{\NNlayer(\set')}\leq 2^{3/2}\meanwidth{\set'}$ for every $\set'\subset \set$. 
	\end{thmproperties}
\end{lemma}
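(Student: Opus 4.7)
The plan is to prove the three assertions in sequence. The deterministic inequality $\meanwidth{\NNlayer(\set)} \leq \meanwidth{\sqrt{\tfrac{2}{n}}\W\set}$ would follow from Sudakov-Fernique: fixing $\W$ and $\Bias$ and letting $\g \distributed \Normdistr{\vnull}{\I{n}}$ denote the Gaussian vector defining the mean width, I would introduce the two centered Gaussian processes $X_\x \coloneqq \sp{\g}{\NNlayer(\x)}$ and $Y_\x \coloneqq \sp{\g}{\sqrt{\tfrac{2}{n}}\W\x}$ on $\set$. The $1$-Lipschitz property of $\relu$ yields $\mean(X_\x - X_{\x'})^2 \leq \mean(Y_\x - Y_{\x'})^2$ (checked coordinate-wise and summed over the $n$ neurons), and Sudakov-Fernique transfers this to the suprema.

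For part (i), I would view $F(\W) \coloneqq \meanwidth{\tfrac{1}{\sqrt{n}}\W\set}$ as a function of the Gaussian matrix $\W$ and combine an expectation bound with Gaussian concentration. Conditioning on the inner Gaussian vector $\g$, the vector $\W^T\g$ is distributed as $\Normdistr{\vnull}{\lnorm{\g}^2\I{d}}$, so that $\mean_\W F(\W) = (\mean\lnorm{\g}/\sqrt{n}) \cdot \meanwidth{\set} \leq \meanwidth{\set}$ by Jensen. For the Lipschitz constant of $F$, I would first invoke translation invariance of the mean width to rewrite $F(\W) = \meanwidth{\tfrac{1}{\sqrt{n}}\W(\set - \x_0)}$ for a fixed $\x_0 \in \set$ (replacing the ambient radius by $\diam(\set)$), after which a Cauchy-Schwarz estimate inside the expectation over $\g$ yields $|F(\W) - F(\W')| \leq \diam(\set)\cdot\lnorm{\W - \W'}_F$. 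Standard Gaussian concentration for Lipschitz functions and the hypothesis $n \gtrsim \log(2/\probsuccess)$ then produce the claimed bound with probability at least $1 - \probsuccess$.

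For part (ii), I would apply the matrix deviation inequality for standard Gaussian matrices \citep[Sec.~9.1]{ver18} to the set $E \coloneqq \cone{\set-\set}\intersec\S^{d-1}$. Since $\radius(E) = 1$, the assumption $n \gtrsim \effdim{E} + \log(2/\probsuccess)$ guarantees the uniform bound $\sup_{\vec{v} \in E}\big|\lnorm{\tfrac{1}{\sqrt{n}}\W\vec{v}} - 1\big| \leq 1$ with probability at least $1 - \probsuccess$, which is exactly the stated assertion after rewriting each unit direction as a normalized difference in $\set$. On this event, for any $\set' \subset \set$ the centered Gaussian process $\x \mapsto \sp{\g}{\tfrac{1}{\sqrt{n}}\W\x}$ has $L^2$-increments dominated by those of $\x \mapsto \sp{2\vec{g}_d}{\x}$ with $\vec{g}_d \distributed \Normdistr{\vnull}{\I{d}}$; Sudakov-Fernique then gives $\meanwidth{\tfrac{1}{\sqrt{n}}\W\set'} \leq 2\meanwidth{\set'}$, and combining with the first claim yields $\meanwidth{\NNlayer(\set')} \leq \sqrt{2}\cdot 2\meanwidth{\set'} = 2^{3/2}\meanwidth{\set'}$.

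The main obstacle I expect is a subtle but critical step in part (i): achieving the Lipschitz constant $\diam(\set)$ rather than $\radius(\set)$ requires translating $\set$ before differentiating in $\W$, which is legitimate only because the mean width is translation invariant. Without this centering, one obtains only a $\radius(\set)$-Lipschitz bound, which is unbounded if $\set$ lies far from the origin and would be insufficient in later applications. The remaining tools---Sudakov-Fernique, Gaussian concentration, and the matrix deviation inequality---enter in an essentially off-the-shelf manner.
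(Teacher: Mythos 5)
Your proposal is correct, and two of its three pieces essentially coincide with the paper's argument: the deterministic comparison $\meanwidth{\NNlayer(\set)}\leq \meanwidth{\sqrt{\tfrac{2}{n}}\W\set}$ via increment domination and Sudakov--Fernique is the same idea as the paper's use of the Gaussian contraction principle (the bias cancels in the increments, just as the paper discards it via $\mean\sp{\g}{\Bias}=0$), and part (ii) is identical (matrix deviation inequality applied to $\cone{\set-\set}\intersec\S^{d-1}$, then Sudakov--Fernique on the event $\norm{\tfrac{1}{\sqrt{n}}\W}_{\set}\leq 2$). Part (i) is where you genuinely diverge: the paper bounds $\meanwidth{\tfrac{1}{\sqrt{n}}\W\set}\leq\tfrac{1}{2}\diam(\W\set)$ and then quotes the Gaussian-projection diameter bound $\diam(\W\set)\lesssim\meanwidth{\set}+\sqrt{n}\diam(\set)$ (probability $1-2e^{-n}$), whereas you compute $\mean_{\W}\meanwidth{\tfrac{1}{\sqrt{n}}\W\set}=\tfrac{\mean\lnorm{\g}}{\sqrt{n}}\meanwidth{\set}\leq\meanwidth{\set}$ exactly, using that $\W^\T\g$ is $\Normdistr{\vnull}{\lnorm{\g}^2\I{d}}$ conditionally on $\g$, and then apply Gaussian Lipschitz concentration to $\W\mapsto\meanwidth{\tfrac{1}{\sqrt{n}}\W\set}$, with the centering by $\x_0\in\set$ correctly replacing $\radius(\set)$ by $\diam(\set)$ in the Lipschitz constant. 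Your route is sound and in fact slightly sharper: carrying the computation through gives Lipschitz constant $\diam(\set)\lnorm{\W-\W'}_F/\sqrt{n}$ (your stated constant $\diam(\set)\lnorm{\W-\W'}_F$ drops the $1/\sqrt{n}$, which is a harmless weakening), so deviations of order $\diam(\set)\sqrt{\log(2/\probsuccess)}$ already suffice, comfortably within the target $\sqrt{n}\diam(\set)$ under $n\gtrsim\log(2/\probsuccess)$. The paper's version is shorter because it invokes the off-the-shelf projection bound; yours is self-contained and makes the expectation identity explicit, at the cost of having to justify the centering and the Lipschitz estimate by hand.
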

\begin{proof} Let us write $\NNlayer(\x)=\relu(\vec{T}(\x))$ for 
	\begin{equation}
		\vec{T}(\x) \coloneqq \sqrt{\tfrac{2}{n}}\W \x + \sqrt{\tfrac{2}{n}}\Bias, 
	\end{equation}
	where $\W\in \R^{n\times d}$ is a standard Gaussian random matrix and $\Bias$ is uniformly distributed on $[-\biasp, \biasp]^n$. Since the $\relu$ is $1$-Lipschitz, the Gaussian version of Talagrand's contraction principle (see, e.g., \citealp[Ex.~7.2.13]{ver18}) implies that $\meanwidth{\NNlayer(\set)}\leq \meanwidth{\vec{T}(\set)}$. Let $\g\in \R^n$ denote a standard Gaussian vector. Since $\mean_{\g}\sp{\g}{\x}=0$ for every vector $\x$, it follows that
	\begin{align}
		\meanwidth{\vec{T}(\set)}&=\mean_{\g}\Big[\sup_{\x\in \set}\sp{\g}{\sqrt{\tfrac{2}{n}}\W \x + \sqrt{\tfrac{2}{n}}\Bias}\Big]=\mean_{\g}\Big[\sup_{\x\in \set}\sp{\g}{\sqrt{\tfrac{2}{n}}\W \x}\Big].
	\end{align}
	Therefore, $\meanwidth{\NNlayer(\set)}\leq \meanwidth{\sqrt{\tfrac{2}{n}}\W\set}$. Since $\meanwidth{\mathcal{S}}\leq \tfrac{\sqrt{n}}{2}\diam(\mathcal{S})$ for any $\mathcal{S}\subset \R^n$, it follows $\meanwidth{\tfrac{1}{\sqrt{n}}\W\set}\leq \tfrac{1}{2} \diam(\W\set)$. By Gaussian projection (e.g., see~\citealp[Sec.~7.7]{ver18}), there exists an absolute constant $C>0$ such that
	\begin{equation}
		\diam(\W\set)\leq C \cdot (\meanwidth{\set}+\sqrt{n}\diam(\set))
	\end{equation}
	with probability at least $1-2\exp(-n)$. Define 
	\begin{equation}
		\norm{\tfrac{1}{\sqrt{n}}\W}_{\set}\coloneqq\sup_{\x\neq\x'\in \set}\lnorm{\tfrac{1}{\sqrt{n}}\W(\tfrac{\x-\x'}{\lnorm{\x-\x'}})}. 
	\end{equation}
	Let $\set'\subset \set$. Then $\lnorm{\tfrac{1}{\sqrt{n}}\W\x-\tfrac{1}{\sqrt{n}}\W\x'}\leq \norm{\tfrac{1}{\sqrt{n}}\W}_{\set}\lnorm{\x-\x'}$ for all $\x,\x'\in \set'$, which implies $\meanwidth{\tfrac{1}{\sqrt{n}}\W\set'}\leq \norm{\tfrac{1}{\sqrt{n}}\W}_{\set}\meanwidth{\set'}$ by the Sudakov-Fernique inequality. 
	By a Gaussian deviation inequality (e.g., see~\citealp[Sec.~9.1]{ver18}), if 
	\begin{equation}
		n\gtrsim \effdim{\cone{\set-\set}\intersec\S^{d-1}} +\log(2/\probsuccess),
	\end{equation}
	then $\norm{\tfrac{1}{\sqrt{n}}\W}_{\set}\leq 2$ with probability at least $1-\probsuccess$. 
\end{proof}

We are now ready to prove the main result of this work:

\begin{proof}[Theorem~\ref{thm:main}]
	Let $\W =[\w_1, \dots, \w_n]^\T \in \R^{n\times d}$ and $\Bias = (\bias_1, \dots, \bias_n) \in\nobreak [-\biasp, \biasp]^n$ be the Gaussian weight matrix and bias vector of the random ReLU-layer $\NNlayer$, and let $\layersec{\W} =[\layersec{\w}_1, \dots, \layersec{\w}_{\layersec{n}}]^\T \in\nobreak \R^{\layersec{n}\times n}$ and $\layersec{\Bias} = (\layersec{\bias}_1, \dots, \layersec{\bias}_{\layersec{n}}) \in\nobreak [-\layersec{\biasp}, \layersec{\biasp}]^{\layersec{n}}$ be the Gaussian weight matrix and bias vector of the random ReLU-layer $\layersec{\NNlayer}$.
	Since $\set^+$ and $\set^-$ have $(\Rad, \mdist, C'\biasp)$-mutual complexity $(N^+, N^-, \mwloc^+, \mwloc^-)$, there exists a $C'\biasp$-mutual covering $\setcen^+=\{\cent_1^+, \ldots, \cent_{N^+}^+\}\subset\nobreak\R^d$ and $\setcen^-=\{\cent_1^-, \ldots, \cent_{N^-}^-\}\subset\R^d$ for $\set^+$ and $\set^-$ such that 
	\begin{romanlist}
		\item
		$\displaystyle\max_{j \in [N^+]} \meanwidth{\set_j^+} \leq \mwloc^+$ and $\displaystyle\max_{l \in [N^-]} \meanwidth{\set_l^-} \leq \mwloc^-$;
		\item
		$\setcen^+, \setcen^-\subset \Rad\ball[2][d]$ are $\mdist$-separated.
	\end{romanlist}
	Here, $\set_1^+, \ldots, \set_{N^+}^+\subset \set^+$ and $\set_1^-, \ldots, \set_{N^-}^-\subset \set^-$ are the components of the covering. 
	Let $C'\coloneqq\tfrac{1}{c'}$, where $c'>0$ is the absolute constant from Theorem~\ref{thm:lin_sep_random_ReLU_NN_continuous_1st_layer}.
	According to Theorem~\ref{thm:lin_sep_random_ReLU_NN_continuous_1st_layer}, the condition \eqref{eq:main:first_layer} implies that with probability at least $1-\probsuccess$, the following event $\event$ occurs:  
	\begin{enumerate}
		\item $\NNlayer(\set^-),\NNlayer(\set^+)\subset \biasp \ball[2][n]$; 
		\item For every $l\in [N^-]$, there exists a vector $\sepdir_{\cent^-_l}\in\S_+^{n-1}$ such that 
		the hyperplane
		\begin{equation}
			\hypp[-4c' \biasp^{-1} \distance^2(\cent^-_l, \setcen^+)]{\sepdir_{\cent^-_l}}
		\end{equation}
		linearly separates $\NNlayer(\set^-_l)$ from $\NNlayer(\set^+)$ with margin $\min\{2c',\tfrac{c}{2}\} \biasp^{-1}
		\distance^2(\cent^-_l, \setcen^+)$. 
	\end{enumerate} 
	Here, $c>0$ is the absolute constant from Theorem~\ref{thm:lin_sep_random_ReLU_NN_continuous_1st_layer}.	
	By Proposition~\ref{prop:separable_relations}~\ref{prop:separable_relations:linear_eps_narrow}, and on the event $\event$, the sets $\NNlayer(\set^-_l)$ and $\NNlayer(\set^+)$ are contained in $\biasp \ball[2][n]$ for every $l\in [N^-]$ and they are $(\narrow_l, \narrowmdist_l)$-linearly separable with 
	\begin{equation}
		\narrow_l = 1- \min\{2c',\tfrac{c}{2}\} \biasp^{-2}
		\distance^2(\cent^-_l, \setcen^+), \quad \narrowmdist_l = \min\{4c',c\} \biasp^{-1}
		\distance^2(\cent^-_l, \setcen^+). 
	\end{equation}
	For $l\in [N^-]$, $\hyppsep_l\gtrsim \meanwidth{\NNlayer(\set_l^-)-\NNlayer(\set^+)}+\biasp$, and $i\in [\layersec{n}]$, we define the event
	\begin{equation}
		\eventalt^i_l(\hyppsep_l)\coloneqq \{\hypp[\layersec{\bias}_i]{\layersec{\w}_{i}} \; \hyppsep_l\text{-separates } \NNlayer(\set^+) \text{ from } \NNlayer(\set^-_l)\}.
	\end{equation}
	Set $\marg_{l}=\narrowmdist_{l}(1-\narrow_{l})$. By Theorem~\ref{thm:hyperplane_sep:general}, if $\layersec{\biasp}\gtrsim \biasp \hyppsep_l\marg_l^{-1}$, then $\prob(\eventalt^i_l(\hyppsep_l)\suchthat \event)\geq p_l$ for 
	\begin{equation}
		p_l= \tfrac{\hyppsep_l}{\layersec{\biasp}}\exp(-C\hyppsep_l^2\marg_l^{-2}\log(4(1-\narrow_l)^{-1})). 
	\end{equation}
	Define $\event'$ to be the event where
	\begin{equation}
		\sup_{\x\neq\x'\in \set^-}\lnorm{\tfrac{1}{\sqrt{n}}\W(\tfrac{\x-\x'}{\lnorm{\x-\x'}})}\leq 2, \quad \sup_{\x\neq\x'\in \set^+}\lnorm{\tfrac{1}{\sqrt{n}}\W(\tfrac{\x-\x'}{\lnorm{\x-\x'}})}\leq 2. 
	\end{equation}
	By Lemma~\ref{lem:Gaussian_width_after_1_layer} and the union bound, condition \eqref{eq:main:first_layer} implies $\prob(\event')\geq 1-\probsuccess$. 
	Further, Lemma~\ref{lem:Gaussian_width_after_1_layer} shows that on the event $\event'$, we have that
	\begin{equation}
		\meanwidth{\NNlayer(\set_l^-)-\NNlayer(\set^+)}=\meanwidth{\NNlayer(\set_l^-)}+\meanwidth{\NNlayer(\set^+)}\leq 2^{3/2}(\meanwidth{\set_l^-}+\meanwidth{\set^+})  
	\end{equation}
	for every $l\in [N^-]$. Using that $\marg_l\gtrsim \biasp^{-3}\mdist^4$ and $1-\narrow_l\gtrsim \biasp^{-2}\mdist^2$ for every $l\in [N^-]$, we obtain that for every $i\in [\layersec{n}]$, $l\in [N^-]$, and $\hyppsep\asymp \mwloc^- +\meanwidth{\set^+}+\biasp$, if $\layersec{\biasp}\gtrsim \biasp^4\mdist^{-4}\hyppsep$, then
	$\prob(\eventalt^i_l(\hyppsep)\suchthat \event\intersec \event')\geq p(\hyppsep)$ for 
	\begin{equation}
		p(\hyppsep)= \tfrac{\hyppsep}{\layersec{\biasp}}\exp(-C\hyppsep^2\biasp^6\mdist^{-8}\log(\biasp/\mdist)). 
	\end{equation}
	Define the events 
	\begin{equation}
		\eventalt_{l}(\hyppsep) \coloneqq \Big\{\sum_{i=1}^{\layersec{n}} \probind{\eventalt_{l}^i(\hyppsep)}\geq \tfrac{p(\hyppsep)}{2}\layersec{n}\Big\},  \quad  \eventalt_\hyppsep \coloneqq \bigintersec_{l\in [N^-]} \eventalt_{l}(\hyppsep).  
	\end{equation}
	By Chernoff's inequality, there exists an absolute constant $c>0$ such that for all $l\in [N^-]$, it holds that
	\begin{equation}
		\prob(\eventalt_{l}(\hyppsep)\suchthat\event\intersec \event')\geq 1-\exp(-c \cdot p(\hyppsep)\layersec{n}).    
	\end{equation}
	On the event $\eventalt_\hyppsep$, for every $l\in [N^-]$ at least $\tfrac{p(\hyppsep)}{2}\layersec{n}$ out of the $\layersec{n}$ hyperplanes $\hypp[\layersec{\bias}_i]{\layersec{\w}_{i}}$ \mbox{$\hyppsep$-separate} $\NNlayer(\set^+)$ from  $\NNlayer(\set_l^-)$. Using that $\NNlayer(\set^-)=\bigcup_{l\in [N^-]}\NNlayer(\set_l^-)$, Corollary~\ref{coro:lin_sep} implies that $\NN(\set^+)$ and $\NN(\set^-)$ are linearly separable with margin 
	\begin{equation}
		\tfrac{\hyppsep p(\hyppsep)}{4}\asymp
		\tfrac{(\mwloc^-+\meanwidth{\set^+}+\biasp)^2}{\layersec{\biasp}}\cdot 
		\exp(-C(\mwloc^-+\meanwidth{\set^+}+\biasp)^2 \biasp^6\mdist^{-8} \log(\biasp/\mdist)).
	\end{equation}
	Define $\eventalt'$ to be the event where $\NN(\set^-), \NN(\set^+)\subset \layersec{\biasp}\ball[2][\layersec{n}]$. 
	On the event $\eventalt_\hyppsep\intersec \eventalt'$, the conclusion of Theorem~\ref{thm:main} holds. 
	Now, we observe that
	\begin{align}
		\prob(\eventalt_\hyppsep\intersec\eventalt')&\geq \prob(\eventalt_\hyppsep\intersec\eventalt'\intersec\event\intersec\event')\\
		&=\prob(\eventalt_\hyppsep\intersec\eventalt'\suchthat \event\intersec \event')\cdot \prob(\event\intersec\event')\\
		&\geq (1 - \prob(\eventalt_\hyppsep^C\suchthat \event\intersec \event') -\prob((\eventalt')^C\suchthat \event\intersec \event') )\cdot(1-\prob(\event^C)-\prob((\event')^C))\\
		&\geq (1 - \prob(\eventalt_\hyppsep^C\suchthat \event\intersec \event') -\prob((\eventalt')^C\suchthat \event\intersec \event') )\cdot(1-2\probsuccess). 
	\end{align}
	The union bound implies   
	\begin{align}
		\prob(\eventalt_\hyppsep^C\suchthat\event\intersec \event')\leq \sum_{l\in [N^-]}\prob((\eventalt_{l}(\hyppsep))^C\suchthat \event\intersec \event')\leq \exp(\log(N^-)-cp(\hyppsep)\layersec{n})\leq \probsuccess,
	\end{align}
	where the last inequality follows from  
	\begin{equation}
		\layersec{n}\gtrsim (p(\hyppsep))^{-1}\log(N^-/\probsuccess).
	\end{equation}
	On the event $\event\intersec \event'$ it holds $\NNlayer(\set^-), \NNlayer(\set^+)\subset\biasp \ball[2][n]$ and $\meanwidth{\NNlayer(\set^-)}\leq 2^{3/2}\meanwidth{\set^-}, \meanwidth{\NNlayer(\set^+)}\leq 2^{3/2}\meanwidth{\set^+}$. Consequently, by Corollary~\ref{coro:distance_preservation_ReLU:norm}, if $\layersec{\biasp}\gtrsim \biasp$ and 
	\begin{equation}\label{eq:thm:main:B'}
		\layersec{n}\gtrsim (\layersec{\biasp})^{-2}\big(\effdim{\set^+}+\effdim{\set^-}\big)+\log(e/\eta),
	\end{equation}
	then $\prob((\eventalt')^C\suchthat \event\intersec \event') \leq \probsuccess$. By Lemma~\ref{lem:Gaussian_width_union},
	\begin{equation}
		\meanwidth{\set^-}=\meanwidth{\bigcup_{l\in [N^-]}\set_l^-}\leq \mwloc^-+C\Rad\sqrt{\log N^-}
	\end{equation}
	for $C>0$ an absolute constant. Therefore, 
	condition \eqref{eq:main:second_layer} implies $\layersec{\biasp}\gtrsim \biasp$ and \eqref{eq:thm:main:B'}. In total, we have $\prob(\eventalt_\hyppsep\intersec\eventalt')\geq (1-2\probsuccess)^2\geq 1-4\probsuccess$. This completes the proof.
\end{proof}

\section{Proofs of Special-Case Results}
\label{sec:special_cases}

To apply our main result, Theorem~\ref{thm:main}, to various special cases, the following lemma will prove very useful. 
Although the inequalities stated therein are well-known (e.g., see \citealp[Lem.~10]{jc17} for the first inequality), we give a proof for the sake of completeness. 
\begin{lemma}\label{lem:Gaussian_width_union} 
	There exists an absolute constant $C>0$ such that the following holds.
	
	Let $\set_j\subset \Rad\ball[2][d]$ for $j\in [N]$.
	Then 
	\begin{equation}\label{eq:lem:Gaussian_width_union:1}
		\meanwidth{\bigcup_{j\in[N]}\set_j}\leq  \max_{j\in[N]}\meanwidth{\set_j}+ C\cdot \Rad\sqrt{\log N}. 
	\end{equation}
	If all sets $\set_j$ additionally satisfy $\diam(\set_j)\leq \rad$ for some $\rad > 0$, then
	\begin{equation}\label{eq:lem:Gaussian_width_union:2}
		\meanwidth{\bigcup_{j\in[N]}\set_j}\lesssim r\sqrt{d}+ \Rad \sqrt{\log N}.
	\end{equation}
	Furthermore, if all sets $\set_j$ are finite with $\diam(\set_j)\leq \rad_j$,
	then 
	\begin{equation}\label{eq:lem:Gaussian_width_union:3}
		\meanwidth{\bigcup_{j\in[N]}\set_j}\lesssim \max_{j\in[N]}(r_j\sqrt{\log \cardinality{\set_j}})+ \Rad\sqrt{\log N}. 
	\end{equation}
\end{lemma}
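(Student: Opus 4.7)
The plan is to establish \eqref{eq:lem:Gaussian_width_union:1} first; both \eqref{eq:lem:Gaussian_width_union:2} and \eqref{eq:lem:Gaussian_width_union:3} will then follow by plugging appropriate upper bounds on $\meanwidth{\set_j}$ into \eqref{eq:lem:Gaussian_width_union:1}.

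To prove \eqref{eq:lem:Gaussian_width_union:1}, I will pick an arbitrary anchor point $\cent_j\in\set_j$ for each $j\in[N]$ and split
\begin{equation}
\meanwidth[\Big]{\textstyle\bigunion_{j\in[N]}\set_j}=\mean\max_{j\in[N]}\sup_{\x\in\set_j}\sp{\g}{\x}\leq \mean\max_{j\in[N]}\sp{\g}{\cent_j}+\mean\max_{j\in[N]}f_j(\g),
\end{equation}
where $f_j(\g):=\sup_{\x\in\set_j}\sp{\g}{\x-\cent_j}$. Since each $\sp{\g}{\cent_j}$ is centered Gaussian with variance $\lnorm{\cent_j}^2\leq\Rad^2$, the standard sub-Gaussian maximum inequality controls the first summand by $C\Rad\sqrt{\log N}$. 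For the second summand, the crucial observation is that $f_j\colon\R^d\to\R$ is Lipschitz in $\g$ with constant $\sup_{\x\in\set_j}\lnorm{\x-\cent_j}\leq 2\Rad$, \emph{uniformly} in $j$. Hence, by Gaussian concentration for Lipschitz functionals, each centered random variable $f_j(\g)-\mean f_j(\g)$ is $C\Rad$-sub-Gaussian, so that
\begin{equation}
\mean\max_{j\in[N]}f_j(\g)\leq \max_{j\in[N]}\mean f_j(\g)+\mean\max_{j\in[N]}\bigl(f_j(\g)-\mean f_j(\g)\bigr)\leq \max_{j\in[N]}\meanwidth{\set_j}+C\Rad\sqrt{\log N},
\end{equation}
where I used the translation invariance $\mean f_j(\g)=\meanwidth{\set_j-\cent_j}=\meanwidth{\set_j}$ and the sub-Gaussian maximum bound once more. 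Combining the two estimates yields \eqref{eq:lem:Gaussian_width_union:1}.

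To deduce \eqref{eq:lem:Gaussian_width_union:2}, I will use the crude bound $\meanwidth{\set_j}=\meanwidth{\set_j-\cent_j}\leq \meanwidth{\rad\ball[2][d]}\lesssim \rad\sqrt{d}$, valid because $\set_j-\cent_j\subset \rad\ball[2][d]$ under the diameter hypothesis. For \eqref{eq:lem:Gaussian_width_union:3}, I will observe instead that for finite $\set_j$ each random variable $\sp{\g}{\x-\cent_j}$, $\x\in\set_j$, is $\rad_j$-sub-Gaussian, so the maximum inequality gives $\meanwidth{\set_j}=\meanwidth{\set_j-\cent_j}\lesssim \rad_j\sqrt{\log\cardinality{\set_j}}$. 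Inserting these bounds into \eqref{eq:lem:Gaussian_width_union:1} completes the remaining two estimates.

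There is no genuine obstacle here: every ingredient (the sub-Gaussian maximum bound, Gaussian concentration for Lipschitz functionals, and translation invariance of the Gaussian mean width) is entirely standard. The only point that deserves attention is the uniformity of the Lipschitz constants of the functionals $f_j$, which is what allows a single sub-Gaussian parameter ($C\Rad$) to govern all terms simultaneously when passing to the maximum in $j$.
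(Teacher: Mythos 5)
Your proposal is correct and follows essentially the same route as the paper: anchor each $\set_j$ at a point $\cent_j$, control $\mean\max_j\sp{\g}{\cent_j}$ and the centered suprema $f_j(\g)-\mean f_j(\g)$ via the sub-Gaussian maximal inequality together with Gaussian Lipschitz concentration, and use translation invariance $\mean f_j(\g)=\meanwidth{\set_j}$. The derivations of \eqref{eq:lem:Gaussian_width_union:2} and \eqref{eq:lem:Gaussian_width_union:3} from \eqref{eq:lem:Gaussian_width_union:1} via $\meanwidth{\set_j-\cent_j}\lesssim \rad\sqrt{d}$ and $\lesssim \rad_j\sqrt{\log\cardinality{\set_j}}$ also match the paper's argument.
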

\begin{proof}
	Let us start by showing \eqref{eq:lem:Gaussian_width_union:1}. 
	Let $\g\in \R^d$ denote a standard Gaussian random vector and for $j \in [N]$ pick any $\cent_j\in \set_j$. Then $\radius(\set_j-\cent_j)\leq 2\Rad$.
	Set $X_j\coloneqq \sup_{\x \in \set_j}\sp{\g}{\x-\cent_j}$.
	Then 
	\begin{align}
		\meanwidth{\bigcup_{j\in[N]}\set_j}&= \mean \max_{j\in [N]} (X_j + \sp{\g}{\cent_j}) \leq \max_{j\in [N]}\mean X_j+  \mean \max_{j\in [N]} (X_j -\mean X_j) +\mean \max_{j\in [N]} \sp{\g}{\cent_j}. 
	\end{align}
	Clearly, $\mean X_j=\meanwidth{\set_j}$. By Gaussian Lipschitz concentration (e.g., see~\citealp[Thm.~8.34]{fh13}), we conclude that $X_j -\mean X_j$ is a sub-Gaussian random variable with $\normsubg{X_j -\mean X_j}\lesssim \radius(\set_j-\cent_j)\leq 2\Rad$. Further, the random variables $\sp{\g}{\cent_j}$ are sub-Gaussian with $\normsubg{\sp{\g}{\cent_j}}\lesssim \lnorm{\cent_j}\leq \Rad$.  
	Inequality \eqref{eq:lem:Gaussian_width_union:1} now follows by applying the maximal inequality for sub-Gaussian random variables (e.g., see~\citealp[Thm.~2.5]{blm13}). Inequalities  \eqref{eq:lem:Gaussian_width_union:2} and \eqref{eq:lem:Gaussian_width_union:3} immediately follow from \eqref{eq:lem:Gaussian_width_union:1} by using the standard estimates  $\meanwidth{\ball[2][d]}\lesssim \nobreak \sqrt{d}$ and $\meanwidth{\set}\lesssim \sqrt{\log(\cardinality{\set})}$ for any finite $\set\subset \ball[2][d]$.
\end{proof}

\subsection{Proof of Theorem~\ref{thm:memorization}}

Define $\setcen^+\coloneqq\set^+$ and $\setcen^-\coloneqq\set^-$. Then $\setcen^+, \setcen^-\subset \ball[2][d]$ are $\mdist$-separated. We may write $\setcen^+=\{\cent_1^+, \ldots, \cent_{N^+}^+\}$ and $\setcen^-=\{\cent_1^-, \ldots, \cent_{N^-}^-\}$. Clearly, the sets   $\set_j^+\coloneqq\set^+\intersec \ball[2][d](\cent_j^+,0)=\{\cent_j^+\}$ for $j\in [N^+]$ and $\set_l^-\coloneqq\set^-\intersec \ball[2][d](\cent_l^-,0)=\{\cent_l^-\}$ for $l\in [N^-]$ cover $\set^+$ and $\set^-$, respectively.
Let $C'>0$ denote the absolute constant from Theorem~\ref{thm:main}. Then
\begin{equation}
	0\leq \tfrac{1}{C'\biasp}\distance^2(\cent_j^+, \setcen^-), \quad 0\leq \tfrac{1}{C'\biasp}\distance^2(\cent_l^-, \setcen^+)
\end{equation}
for all $j\in [N^+]$ and $l\in [N^-]$. Therefore, $\setcen^+$ and $\setcen^-$ form a $C'\biasp$-mutual covering for $\set^+$ and $\set^-$. Moreover, $\set^+$ and $\set^-$ have $(1, \mdist, C'\biasp)$-mutual complexity $(N^+, N^-, \mwloc^+, \mwloc^-)$ with $\mwloc^+=\mwloc^-=0$. Since $\meanwidth{\set^+}\lesssim \sqrt{\log N^+}$ and 
\begin{equation}
	\effdim{\cone{\set^--\set^-}\intersec\S^{d-1}} + \effdim{\cone{\set^+-\set^+}\intersec\S^{d-1}}\lesssim \log N^- + \log N^+,
\end{equation}
the result follows from Theorem~\ref{thm:main}. \qed

\subsection{Proof of Theorem~\ref{thm:eucl_balls}}

Let $\rad\leq \tfrac{1}{C'\biasp}\mdist^2$, where $C'>0$ denotes the absolute constant from Theorem~\ref{thm:main}.
Set $\setcen^+ \coloneqq \{ \cent_1^+, \dots, \cent_{N^+}^+ \}$ and $\setcen^- \coloneqq \{ \cent_1^-, \dots, \cent_{N^-}^- \}$. 
Then, $\setcen^+, \setcen^-\subset \ball[2][d]$ are $\mdist$-separated and 
the sets $\set_j^+ \coloneqq \set^+ \intersec\ball[2][d](\cent_j^+,\rad)$ for $j \in [N^+]$, and $\set_l^- \coloneqq \set^- \intersec\ball[2][d](\cent_l^-,\rad)$ for $l \in [N^-]$, cover~$\set^+$ and~$\set^-$, respectively. Furthermore, the $\mdist$-separability and the assumption $\rad \lesssim \mdist^2/\biasp$ imply that 
\begin{equation}
	\rad \leq \tfrac{1}{C'\biasp} \distsq{\cent_j^+}{\setcen^-}, \quad \rad \leq \tfrac{1}{C'\biasp}\distsq{\cent_l^-}{\setcen^+}
\end{equation}
for all $j\in [N^+], l\in [N^-]$. This shows that $\setcen^+$ and $\setcen^-$ form a $C'\biasp$-mutual covering for $\set^+$ and $\set^-$. Therefore, $\set^+$ and $\set^-$ have $(1, \mdist, C'\biasp)$-mutual complexity $(N^+, N^-, \mwloc^+, \mwloc^-)$ with $\mwloc^+=\max_{j\in [N^+]}\meanwidth{\set_j^+}$ and $\mwloc^-=\max_{l\in [N^-]}\meanwidth{\set_l^-}$. 
By Lemma~\ref{lem:Gaussian_width_union},  
\begin{equation}
	\meanwidth{\set^+}=\meanwidth{\bigunion_{j\in [N^+]}\ball[2][d](\cent^+_j,\rad)}\lesssim \rad\sqrt{d}+\sqrt{\log N^+},
\end{equation}
and for any $l\in [N^-]$, 
\begin{equation}
	\meanwidth{\set_l^-}=  \meanwidth{\set^- \intersec\ball[2][d](\cent_l^-,\rad)}\lesssim \rad\sqrt{d},  
\end{equation}
which yields $\mwloc^-\lesssim \rad\sqrt{d}$. Analogously, it follows that $\mwloc^+\lesssim \rad\sqrt{d}$. 
The result now follows from Theorem~\ref{thm:main} by observing that 
\begin{equation}
	\effdim{\cone{\set^--\set^-}\intersec\S^{d-1}} + \effdim{\cone{\set^+-\set^+}\intersec\S^{d-1}}\lesssim d.
\end{equation}
\qed

\subsection{Proof of Theorem~\ref{thm:uniform_covering}}

For an absolute constant $c>0$ that is specified later, let $\setcen^+ = \{ \cent_1^+, \dots, \cent_{N^+}^+ \} \subset \set^+$ and $\setcen^- = \{ \cent_1^-, \dots, \cent_{N^-}^- \} \subset \set^-$ 
be minimal $c\mdist^2/\biasp$-coverings of $\set^+$ and $\set^-$, respectively. Then $N^+=\covnumber{\set^+}{c\mdist^2/\biasp}$ and 
$N^-=\covnumber{\set^-}{c\mdist^2/\biasp}$. Since $\set^+$ and $\set^-$ are $\mdist$-separated, it follows that $\setcen^+$ and $\setcen^-$ are $\mdist$-separated as well. By definition of covering, the sets $\set_j^+ \coloneqq \set^+ \intersec\ball[2][d](\cent_j^+,c\mdist^2/\biasp)$ for $j \in [N^+]$, and $\set_l^- \coloneqq \set^- \intersec\ball[2][d](\cent_l^-,c\mdist^2/\biasp)$ for $l \in [N^-]$, cover $\set^+$ and $\set^-$, respectively. Further, since $\setcen^+$ and $\setcen^-$ are $\mdist$-separated, we have that
\begin{equation}
	c\mdist^2/\biasp \leq c\biasp^{-1} \distsq{\cent_j^+}{\setcen^-}, \quad c\mdist^2/\biasp \leq c\biasp^{-1}\distsq{\cent_l^-}{\setcen^+}
\end{equation}
for all $j\in [N^+], l\in [N^-]$. This shows that $\setcen^+$ and $\setcen^-$ form a $\tfrac{\biasp}{c}$-mutual covering for~$\set^+$ and~$\set^-$. Therefore, $\set^+$ and $\set^-$ have $(1, \mdist, \tfrac{\biasp}{c})$-mutual complexity $(N^+, N^-, \mwloc^+, \mwloc^-)$ with $N^+=\covnumber{\set^+}{c\mdist^2/\biasp}$, $N^-=\covnumber{\set^-}{c\mdist^2/\biasp}$, $\mwloc^+=\meanwidth{\set^+}$ and $\mwloc^-=\meanwidth{\set^-}$. Choosing $c=\nobreak\tfrac{1}{C'}$, where~$C'$ is the absolute constant from Theorem~\ref{thm:main}, the result follows from Theorem~\ref{thm:main}. \qed

	
	\acks{S.D.\ and M.G.\ acknowledge support by the DFG Priority Programme DFG-SPP 1798 Grant DI 2120/1-1. A.S.\ acknowledges support by the Fonds de la Recherche Scientifique -- FNRS under Grant n$^\circ$ T.0136.20 (Learn2Sense). L.J. is a FNRS Senior Research Associate.}
	
	
	

	\vskip 0.2in
	\bibliography{references.bib}
	\addcontentsline{toc}{section}{References}
	
\end{document}